\renewcommand{\d}{\,\mathrm{d}}
\numberwithin{equation}{section}
\theoremstyle{definition}
\newtheorem{setcriteria}{Definition}[section]
\newtheorem{APTdefinition}[setcriteria]{Definition}
\theoremstyle{plain}
\newtheorem{APTtheorem}[setcriteria]{Theorem}
\newtheorem{Noiselemma}{Lemma}[section]
\newtheorem{weakconvergence}[Noiselemma]{Lemma}
\newtheorem{weakconvergencecorol1}[Noiselemma]{Corollary}
\newtheorem{singletimescaletheorem}[Noiselemma]{Theorem}
\newtheorem{singletimescaleconvergence}[Noiselemma]{Corollary}
\newtheorem{MainTheoremLemma}[Noiselemma]{Lemma}
\newtheorem{twotimescaleslemma}{Lemma}[section]
\newtheorem{Lem:TimescalesError}[twotimescaleslemma]{Lemma}
\newtheorem{twotimescalestheorem}[twotimescaleslemma]{Lemma}
\newtheorem{twotimescalescorollary}[twotimescaleslemma]{Corollary}
\theoremstyle{definition}
\newtheorem{JointlyPerturbedDefn}[twotimescaleslemma]{Definition}
\theoremstyle{plain}
\newtheorem{JointlyPerturbedAPTEquiv}[twotimescaleslemma]{Lemma}
\newtheorem{xbarJointlyPerturbedSol}[twotimescaleslemma]{Theorem}
\newtheorem{twotimescaleconvergence}[twotimescaleslemma]{Corollary}
\theoremstyle{remark}
\newtheorem{nonasynchronous}{Remark}
\newtheorem{setvaluedlimit}[nonasynchronous]{Remark}
\theoremstyle{plain}
\theoremstyle{definition}
\newtheorem{LyapunovDfn}{Definition}[section]
\theoremstyle{plain}
\newtheorem{K-BLyapunov}[LyapunovDfn]{Lemma}
\newtheorem{DIConvergence}[LyapunovDfn]{Theorem}
\newtheorem{algorithm2QValueConvvergence}[LyapunovDfn]{Lemma}
\newtheorem{algorithm2BRConvergence}[LyapunovDfn]{Lemma}
\newtheorem{algorithm2convergence}[LyapunovDfn]{Lemma}
\newtheorem{algorithm2Corollary}[LyapunovDfn]{Corollary}
\newtheorem{minupdateproportion}{Lemma}[section]
\newtheorem{Benaimconditions}[minupdateproportion]{Lemma}
\newtheorem{alpharatio}[minupdateproportion]{Lemma}
\begin{document}

\begin{frontmatter}

\title{Asynchronous Stochastic Approximation with Differential Inclusions}
\runtitle{Asynchronous SA with Differential Inclusions}

\begin{aug}
\author{\fnms{Steven} \snm{Perkins}\corref{}\ead[label=e1]{Steven.Perkins.09@bristol.ac.uk}}
\and
\author{\fnms{David S.} \snm{Leslie}\ead[label=e2]{David.Leslie@bristol.ac.uk}}

\affiliation{University of Bristol}

\address{School of Mathematics\\ 
University of Bristol\\
University Walk\\
Clifton\\
Bristol\\
BS8 1TW\\
\printead{e1}\\
\phantom{E-mail:\ }\printead*{e2}}
\end{aug}

\begin{abstract}
The asymptotic pseudo-trajectory approach to stochastic approximation of Bena{\"{\i}}m, Hofbauer and Sorin is extended for asynchronous stochastic approximations with a set-valued mean field. The asynchronicity of the process is incorporated into the mean field to produce convergence results which remain similar to those of an equivalent synchronous process. In addition, this allows many of the restrictive assumptions previously associated with asynchronous stochastic approximation to be removed. The framework is extended for a coupled asynchronous stochastic approximation process with set-valued mean fields. Two-timescales arguments are used here in a similar manner to the original work in this area by Borkar. The applicability of this approach is demonstrated through learning in a Markov decision process. 
\end{abstract}


\begin{keyword}
\kwd{Asynchronous stochastic approximation}
\kwd{Differential Inclusion}
\kwd{Two-timescales}
\end{keyword}

\end{frontmatter}


\section{Introduction}

Many learning algorithms include a stochastic updating schedule, often based on a Markov chain. Studying the performance of these processes can be carried out using the asynchronous stochastic approximation framework. However, the previous work in this area has focused on continuous, single-valued updates as discussed in the literature (see for example \cite{BorkarAsynchronous}, \cite{KondaBorkar}, \cite{KushnerYinsiam}, \cite{KushnerYinstochastics}, \cite{Tsitsiklis}). Furthermore some of the assumptions which are typically used are challenging to verify. In this work we expand the asymptotic pseudo-trajectory approach of Bena{\"{\i}}m, Hofbauer and Sorin \cite{BenaimHofbauerSorin} to asynchronous stochastic approximations with set-valued mean fields. We incorporate the asynchronicity into the mean field to give a differential inclusion which will characterise the limiting behaviour of the associated learning process.

Consider an iterative process $\{x_n\}_{n \in \mathbb{N}}$ where $x_n \in \mathbb{R}^K$ and denote the $i^{\mathrm{th}}$ component of $x_n$ as $x_n(i)$ where $i \in I$ and $K = |I|$ is finite. A typical stochastic approximation (SA) is of the form

\begin{equation}
	x_{n+1} = x_{n} + \alpha(n+1) \left[F(x_n) + V_{n+1} + d_{n+1} \right], \label{standardSA}
\end{equation}
where $\{\alpha(n)\}_{n \in \mathbb{N}}$ is a positive, decreasing sequence, $\{V_{n}\}_{n \in \mathbb{N}}$ is a zero-mean martingale noise sequence, $\{d_n\}_{n \in \mathbb{N}}$ is a bounded sequence which converges to zero and $F(\cdot): \mathbb{R}^K \rightarrow \mathbb{R}^K$ is a Lipschitz continuous mean field. Standard arguments (e.g. \cite{BenaimHirsch99}) are then used to show that the limiting behaviour of the iterative process in \eqref{standardSA} can be studied through the ordinary differential equation (ODE)

\begin{equation}
	\frac{d x}{dt} = F(x). \label{ODE}
\end{equation}
Commonly known as the ODE method of stochastic approximation, originally proposed by Ljung \cite{Ljung}, this technique has been extended by numerous authors, for example Bena{\"{\i}}m \cite{Benaim}, Bena{\"{\i}}m, Hofbauer and Sorin \cite{BenaimHofbauerSorin}, Borkar \cite{BorkarBook}, Kushner and Clark \cite{KushnerClark} and Kushner and Yin \cite{KushnerYinsiam}, \cite{KushnerYinstochastics}. In particular Bena{\"{\i}}m, Hofbauer and Sorin \cite{BenaimHofbauerSorin} have developed the approach so that under some weak criteria $\{x_n\}_{n \in \mathbb{N}}$ can be updated via a set-valued mean field, $F(\cdot)$. This allows for the limiting behaviour to be studied using the associated differential inclusion. 

Standard stochastic approximations are not always applicable; an example which we examine in this paper is when learning action values in a Markov decision process (MDP) and this is also discussed by Konda and Tsitsiklis \cite{KondaTsitsiklis}, Tsitsiklis \cite{Tsitsiklis} and Singh et al. \cite{SinghJaakkolaLittmanSzepesvari}. In a MDP updates are made to a single random component at each iteration. Therefore we have a stochastic, asynchronous updating pattern, where a subset of an iterative process similar to \eqref{standardSA} can be updated many times before the remaining components are selected for a single update. Based on this idea extensions to the standard theory have been examined such as those by Kushner and Yin \cite{KushnerYinsiam}, \cite{KushnerYinstochastics}. Here however we follow the extension to asynchronous stochastic approximation provided by Borkar \cite{BorkarAsynchronous} and Konda and Borkar \cite{KondaBorkar}. They show that when the iterative updates have a Lipschitz continuous mean field then, similarly to a standard stochastic approximation, the limiting behaviour can be studied via the associated differential equation,

\begin{equation}
	\frac{d x}{dt} = M(t)F(x), \label{asynchronousODE}
\end{equation}
where $M(t)$ is a $K \times K$ diagonal matrix and the diagonal elements of $M(t)$ lie in the set $[0,1]$ for all $t > 0$. This early work on asynchronous stochastic approximations has certain restrictions which limit its usability. In particular, many of the assumptions made in the work of Borkar \cite{BorkarAsynchronous} and Konda and Borkar \cite{KondaBorkar} are given in implicit form and are difficult to verify in specific situations. 

As with the initial results for a standard stochastic approximation the results of Borkar \cite{BorkarAsynchronous} are limited to the case when the mean field, $F(\cdot)$, is a Lipschitz continuous function. The subsequent work by Bena{\"{\i}}m, Hofbauer and Sorin \cite{BenaimHofbauerSorin} on set-valued mean fields leaves the natural question of whether similar results are possible for asynchronous stochastic approximations when a set-valued mean field is used. In addition, the ODE in \eqref{asynchronousODE} is non-autonomous and the scaling matrix $M(\cdot)$ is not explicitly defined. This makes analysis of the limiting behaviour more difficult to study, although some methods for verifying global convergence are outlined by Borkar \cite{BorkarBook}. 

Borkar \cite{BorkarTimescales} originally extended the stochastic approximation framework to two-timescales. Since then Leslie and Collins \cite{LeslieCollins} extended this idea to multiple timescales and Konda and Borkar \cite{KondaBorkar} provide a first venture into the two-timescale asynchronous stochastic approximation. However, all of these only consider stochastic approximations when the mean field is Lipschitz continuous.

The aim of this work is to combine and generalise the results by Borkar \cite{BorkarTimescales}, \cite{BorkarAsynchronous}, Konda and Borkar \cite{KondaBorkar} and Bena{\"{\i}}m, Hofbauer and Sorin \cite{BenaimHofbauerSorin} to create a framework for single and two-timescale asynchronous stochastic approximations which is straightforward to use in practical applications. In this paper we show that, under a set of verifiable assumptions, the diagonal elements of $M(t)$ lie in the closed set $[\varepsilon,1]$, for some $\varepsilon > 0$. The set $[\varepsilon,1]$ can be combined with the mean field $F(\cdot)$ to form a set-valued mean field, $\bar{F}(\cdot)$, whose limiting behaviour can be studied via the associated differential inclusion using the results of Bena{\"{\i}}m, Hofbauer and Sorin \cite{BenaimHofbauerSorin}. A natural benefit of using the differential inclusion framework is that $F(\cdot)$ can be set-valued as this does not alter the analysis.

This paper is organised in the following manner: Section \ref{sec:Notation} reviews some previous results on stochastic approximation with differential inclusions and asynchronous stochastic approximations. In Section \ref{sec:AsynchSA} we focus on the single-timescale asynchronous stochastic approximation. We state the main theorem before presenting the weak convergence results required for the proof. Section \ref{sec:TwoAsynchSA} examines the extension to a two-timescale asynchronous stochastic approximation process. Large parts of this section follow directly from the results in Section \ref{sec:AsynchSA}. In Section \ref{sec:MDPApplication} we present an example of a learning algorithm for discounted reward Markov decision processes and obtain convergence results by applying the method shown in Section \ref{sec:TwoAsynchSA}. This illustrates the ease in which this framework can be used. Finally, the paper concludes with a summary of the work. Throughout this paper many of the proofs are omitted from the main flow of text and are instead presented in an appendix.


\section{Background}
\label{sec:Notation}

Throughout this paper we use two main ideas from the stochastic approximation literature. The first relates to the work by Bena{\"{\i}}m, Hofbauer and Sorin \cite{BenaimHofbauerSorin} on stochastic approximation with differential inclusions, and the second concerns the asynchronous stochastic approximation framework introduced by Borkar \cite{BorkarAsynchronous}. We take the opportunity to review the pertinent features of their work in this section. 

In what is to follow we use the standard concept of set multiplication: if $A \subset \mathbb{R}^{K \times K}$ is a set of $K \times K$ matrices and $B \subset \mathbb{R}^K$ are two closed, convex sets then let the multiplication of these sets be defined as,

\begin{equation*}
	A \cdot B = \big\{a \cdot b ; a \in A, b \in B \big\} \subset \mathbb{R}^K.
\end{equation*}
Note that $A \cdot B$ is also closed and convex. This definition is still used if either or both of the sets $A$ and $B$ are single valued. We also use the same concept when multiplying a constant by a set. That is, if $\alpha$ is a constant then define

\begin{equation*}
	\alpha \cdot B = \big\{\alpha \cdot b ; b \in B \big\} \subset \mathbb{R}^K.
\end{equation*}
However, in this latter case we often drop the `$\cdot$' notation for convenience. 


\subsection{Stochastic Approximation with Differential Inclusions}
\label{sec:BackgroundDI}

We begin by outlining the current convergence results for stochastic approximations with set-valued maps proved by Bena{\"{\i}}m, Hofbauer and Sorin \cite{BenaimHofbauerSorin}. These results are heavily used in Section \ref{sec:AsynchSA}, most notably to prove our main result. Initially we provide a definition which outlines the class of set-valued mean fields we are able to use for stochastic approximation. These criteria are taken directly from the original work on stochastic approximations with differential inclusions by Bena{\"{\i}}m et al. \cite{BenaimHofbauerSorin}.

\begin{setcriteria}
	\label{setcriteria}
	Call $F(\cdot): \mathbb{R}^K \rightarrow \mathbb{R}^K$ a \emph{stochastic approximation map} if it satisfies the following 
	\begin{itemize}
		\item[(i)] $F(\cdot)$ is a closed set-valued map. That is,
		
		\begin{equation*}
			\mbox{Graph}(F) = \left\{ (x,y) ; y \in F(x) \right\},
		\end{equation*}
		is a closed set. Equivalently, $F(\cdot)$ is an upper semi-continuous set-valued map.
		\item[(ii)] For all $x \in \mathbb{R}^{K}$, $F(x)$ is a non-empty, compact, convex subset of $\mathbb{R}^{K}$.
		\item[(iii)] There exists a $c > 0$ such that for all $x \in \mathbb{R}^{K}$,
		\begin{equation*}
			\sup_{z \in F(x)} \left\|z\right\| \leq c \left(1 + \left\|x\right\|\right)
		\end{equation*}
	\end{itemize}
\end{setcriteria}

Take $F(\cdot): \mathbb{R}^K \rightarrow \mathbb{R}^K$ as a stochastic approximation map; a typical differential inclusion is in the form,

\begin{equation}
	\frac{d x}{dt} \in F\big(x\big), \label{DI}
\end{equation}
and a \emph{solution} to \eqref{DI} is an absolutely continuous mapping $x: \mathbb{R} \rightarrow \mathbb{R}^K$ such that $x(0) = x$ and for almost every $t >0$,

\begin{equation*}
	\frac{d x(t)}{dt} \in F\big(x(t)\big).
\end{equation*}
The \emph{flow} induced by \eqref{DI} is defined by,

\begin{equation*}
	\Phi_t(x) = \left\{ x(t) ; x(\cdot) \mbox{ is a solution to \eqref{DI} with } x(0)=x \right\},
\end{equation*}

\begin{APTdefinition}[{Bena{\"{\i}}m et al. \cite{BenaimHofbauerSorin}}]
	\label{APTdefinition}
	A continuous function $x: \mathbb{R}^+ \rightarrow \mathbb{R}^K$ is an \emph{asymptotic pseudo-trajectory} to $\Phi$ if 
	
	\begin{equation*}
		\lim_{t \rightarrow \infty} \sup_{s \in [0,T]} d\Big(x\big(t+s\big),\Phi_s\big(x(t)\big)\Big) = 0,
	\end{equation*}
	for any $T>0$ and where $d(\cdot,\cdot)$ is a distance measure on $\mathbb{R}^K$.
\end{APTdefinition}

Many important properties of a dynamical system $\{\Phi_t(\cdot)\}_{t \geq 0}$ and the asymptotic pseudo-trajectories of the systems are discussed by Bena{\"{\i}}m, Hofbauer and Sorin \cite{BenaimHofbauerSorin}. Most important for the work here is that an asymptotic pseudo-trajectory to \eqref{DI} will behave in a similar manner to the solutions of the differential inclusion and hence the limiting behaviour will be closely related.

We conclude this section by considering a standard iterative process in the form of \eqref{standardSA} where the mean field $F(\cdot): \mathbb{R}^K \rightarrow \mathbb{R}^K$ is a stochastic approximation map. The following theorem states that under four assumptions a linear interpolation of $\{x_n\}_{n \in \mathbb{N}}$ (a function defined precisely in Section \ref{sec:MainResult}) is an asymptotic pseudo-trajectory to the differential inclusion \eqref{DI}. Hence the limiting behaviour of $\{x_n\}_{n \in \mathbb{N}}$ can be studied via this differential inclusion. 

\begin{APTtheorem}
	\label{APTtheorem}
	Assume that 
	\begin{itemize}
		\item[(i)] For all $T > 0$ 
		
		\begin{equation}
			\lim_{n \rightarrow \infty} \sup_k \left\{ \left\| \sum_{i=n}^{k-1} \alpha(i+1)V_{i+1} \right\| ; k = n+1,\ldots,m(\tau_n + T) \right\} \label{kushnerclark}
		\end{equation}
		where $\tau_0 = 0$, $\tau_n = \sum_{i=1}^n \alpha(i)$ and $m(t) = \sup\{k\geq 0 ; t \geq \tau_k\}$,
		\item[(ii)] $\sup_n \|x_n\| = x < \infty$,
		\item[(iii)] $F(\cdot)$ is a stochastic approximation map,
		\item[(iv)] $d_n \rightarrow 0$ as $n \rightarrow \infty$ and $\sup_n \|d_n\| = d < \infty$.
	\end{itemize}
	Then a linear interpolation of the iterative process $\{x_n\}_{n \in \mathbb{N}}$ given by \eqref{standardSA} is an asymptotic pseudo-trajectory of the differential inclusion \eqref{DI}.
\end{APTtheorem}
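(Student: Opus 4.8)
The plan is to reduce the statement to the perturbed-solution characterisation of asymptotic pseudo-trajectories established by Bena{\"{\i}}m, Hofbauer and Sorin \cite{BenaimHofbauerSorin}, which guarantees that any bounded \emph{perturbed solution} of the differential inclusion \eqref{DI} is an asymptotic pseudo-trajectory of the flow $\Phi$. Recall that a perturbed solution is an absolutely continuous $y(\cdot)$ for which $\dot y(t) - U(t) \in F^{\delta(t)}(y(t))$ for almost every $t$, where $F^\delta(x) = \{z : \|z - z'\| < \delta \text{ for some } z' \in F(x') \text{ with } \|x' - x\| < \delta\}$ is the $\delta$-inflation of $F$, $\delta(t) \to 0$, and the perturbation obeys $\lim_{t \to \infty} \sup_{0 \le s \le T} \| \int_t^{t+s} U(u) \d u \| = 0$ for every $T > 0$. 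It therefore suffices to exhibit the linear interpolation $X(\cdot)$ of $\{x_n\}$, defined by $X(\tau_n) = x_n$ and affine on each $[\tau_n, \tau_{n+1}]$, as such a bounded perturbed solution.

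First I would compute the derivative of the interpolation. On $(\tau_n, \tau_{n+1})$ the update \eqref{standardSA} gives $\dot X(t) = (x_{n+1} - x_n)/\alpha(n+1) = y_n + V_{n+1} + d_{n+1}$, where $y_n \in F(x_n)$ is the increment selected in \eqref{standardSA}. I would split this as $\dot X(t) = w(t) + U(t)$, taking $w(t) = y_n$ as the candidate for membership in the inflated field $F^{\delta(t)}(X(t))$ and $U(t) = V_{n+1} + d_{n+1}$ as the perturbation; the vanishing drift $d_{n+1}$ could instead be absorbed into the inflation, but either placement is harmless.

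Next I would verify that the inflation radius satisfies $\delta(t) \to 0$. Boundedness of the iterates (assumption (ii)) together with the linear-growth bound of the stochastic approximation map (assumption (iii)) makes $\sup_{z \in F(x_n)} \|z\|$ uniformly bounded, so $\|X(t) - x_n\| \le \alpha(n+1)\sup_m \|\dot X\| \to 0$ on each interval. Upper semi-continuity of $F$ then converts this closeness of arguments into $y_n \in F(x_n) \subset F^{\delta(t)}(X(t))$ for a suitable $\delta(t) \to 0$, so $w(t) \in F^{\delta(t)}(X(t))$ as required. The perturbation condition on $U$ splits into two pieces: integrating the noise part over a window $[\tau_n, \tau_k]$ produces exactly $\sum_{i=n}^{k-1} \alpha(i+1) V_{i+1}$, which vanishes uniformly over $k \le m(\tau_n + T)$ by assumption (i); integrating the drift part gives at most $\sum_i \alpha(i+1) \|d_{i+1}\|$ over the window, which tends to $0$ because $d_n \to 0$ (assumption (iv)) while the accumulated step sizes over any window of length $T$ stay bounded by roughly $T$.

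The main obstacle will be the bookkeeping that converts the continuous-time integral $\int_t^{t+s} U(u) \d u$, taken over arbitrary real endpoints $t$ and $t+s$, into the discrete sums of assumption (i), which are indexed through $m(\cdot)$; care is needed at the partial initial and final subintervals, where $t$ and $t+s$ fall strictly inside some $[\tau_n, \tau_{n+1}]$, and in showing that the resulting boundary corrections are $O(\alpha)$ and hence asymptotically negligible. Once $X(\cdot)$ is confirmed to be a perturbed solution, boundedness follows from assumption (ii), and the Bena{\"{\i}}m--Hofbauer--Sorin theorem yields the conclusion that $X(\cdot)$ is an asymptotic pseudo-trajectory of \eqref{DI}.
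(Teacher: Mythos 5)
Your proposal is correct and takes essentially the same route as the paper: the paper's own proof consists of citing Bena{\"{\i}}m--Hofbauer--Sorin \cite[Proposition 1.3]{BenaimHofbauerSorin}---whose proof is precisely the perturbed-solution argument you reconstruct (the linear interpolation is a bounded perturbed solution, and bounded perturbed solutions are asymptotic pseudo-trajectories)---together with the remark that the $d_n$ terms are trivially absorbed. One small repair to your write-up: the bound $\|X(t)-x_n\|\le \alpha(n+1)\sup_m\|\dot X\|$ tacitly assumes the noise is bounded, which is not hypothesised; instead take $k=n+1$ in assumption (i) to get $\alpha(n+1)V_{n+1}\to 0$, which combined with (ii)--(iv) gives $\|x_{n+1}-x_n\|\to 0$ and hence $\delta(t)\to 0$.
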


This is a slight modification to a result stated by Bena{\"{\i}}m, Hofbauer and Sorin \cite[Proposition 1.3]{BenaimHofbauerSorin} to include the $\{d_n\}_{n \in \mathbb{N}}$ terms. It is trivial to verify that this will not alter any of the asymptotic results of the original work. 


\subsection{Asynchronous Stochastic Approximations}
\label{sec:BackgroundAsynch}

Now we fully introduce the asynchronous stochastic approximation notation used. A typical asynchronous stochastic approximation such as those studied by Borkar \cite{BorkarAsynchronous} fits the following framework. If $2^I$ is the power set of all possible updating combinations in $I$ then let $\bar{I}_n \in 2^I$ be the components of the iterative process $\{x_n\}_{n \in \mathbb{N}}$ updated at iteration $n$. Using a counter for state $i \in I$,

\begin{equation*}
	\nu_n(i) := \sum_{k=1}^n \mathbb{I}_{\{i \in \bar{I}_k\}},
\end{equation*}
we consider processes in which no component, $i$, in the asynchronous process needs to know the global counter, $n$, merely its own counter, $\nu_n(i)$. Let $F_i(\cdot)$, $x_n(i)$, $V_n(i)$ and $d_n(i)$ be the $i^{\mathrm{th}}$ component of $F(\cdot)$, $x_n$, $V_n$ and $d_n$ respectively, for $i = 1, \ldots, K$. We directly extend the notation used in \eqref{standardSA} for an asynchronous stochastic approximation; let $F(\cdot)$ be a stochastic approximation map, then for $i = 1,\ldots,K$ let

\begin{equation}
	x_{n+1}(i) \in x_{n}(i) + \alpha \big(\nu_{n+1}(i)\big) \mathbb{I}_{\{i \in \bar{I}_{n+1}\}} \big[F_i(x_n) + V_{n+1}(i) + d_{n+1}(i) \big]. \label{asynchronousSA}
\end{equation}

Define the \emph{asynchronous step sizes}, $\bar{\alpha}_n$, and the \emph{relative step sizes}, $\mu_n(i)$, to be

\begin{equation*}
	\bar{\alpha}_n := \max_{i \in \bar{I}_{n}} \alpha \big(\nu_{n}(i)\big), \quad \mu_n(i) := \frac{\alpha\big(\nu_{n}(i)\big) }{\bar{\alpha}_n} \mathbb{I}_{\{i \in \bar{I}_{n}\}}.
\end{equation*}
The asynchronous step sizes, $\bar{\alpha}_n$, are random step sizes (in contrast with the deterministic $\alpha(n)$ terms) whilst the relative step sizes, $\mu_n(i)$ are zero whenever the $i^{\mathrm{th}}$ component of the iterative process is not updated. Clearly $\mu_n(i) \in [0,1]$. By letting $M_n$ be the $K \times K$ diagonal matrix of the $\mu_n(i)$ terms we can express the previous asynchronous stochastic approximation \eqref{asynchronousSA} in the more concise form

\begin{equation}
	x_{n+1} - x_{n} - \bar{\alpha}_{n+1} M_{n+1} \left[V_{n+1} + d_{n+1} \right] \in \bar{\alpha}_{n+1} M_{n+1} \cdot F(x_n). \label{asynchronousSAmatrix} 
\end{equation}
This is a more familiar form for a stochastic approximation with a set-valued mean field. If $F(\cdot)$ is a stochastic approximation map in \eqref{standardSA} then \eqref{asynchronousSAmatrix} differs from \eqref{standardSA} only in that the step sizes in \eqref{asynchronousSAmatrix} are random and the addition of the $M_{n+1}$ coefficient. Instead of thinking of $M_{n+1}$ as a coefficient of the step sizes we combine it with the mean field. Convergence of the error term, $d_{n+1}$, will be unaffected and, under a set of assumptions in Section \ref{sec:SingleAssumptions}, the noise term $M_{n+1}V_{n+1}$ will still satisfy the Kushner and Clark noise condition \eqref{kushnerclark}. Combining the $M_{n+1}$ term and the mean field term $F(\cdot)$ into a single set provides an intuitive method of rephrasing the stochastic approximation and leads to a set-valued mean field. 

Proceeding with this intuition is not immediately straightforward since $M_n$ is time varying and can be zero infinitely often, in which case the mean field could be zero even when the original update term, $F(\cdot)$, is not. This would mean the limiting behaviour of the differential inclusion in the asynchronous stochastic approximation could be different to the synchronous case, where ultimately we wish to say that the two behave in the same manner in the limit. To avoid this scenario we follow Borkar \cite{BorkarAsynchronous} and consider the weak limit of the interpolations of $\{M_n\}_{n \in \mathbb{N}}$, which will always be bounded away from zero under some verifiable assumptions, given in Section \ref{sec:SingleAssumptions}.


\section{Asynchronous SA with Differential Inclusions}
\label{sec:AsynchSA}

We begin by presenting the main result of this paper which concerns the limiting behaviour of the asynchronous stochastic approximation in \eqref{asynchronousSAmatrix}, before outlining the results required to prove this in the remainder of the section.


\subsection{Main Result}
\label{sec:MainResult}

Assume that $F(\cdot)$ is a stochastic approximation map and for all $n$ define $f_n \in F(x_n)$ by its component parts, $f_n(i)$, $i = 1,\ldots,K$, such that

\begin{equation}
	f_n(i) \mu_{n+1}(i) := \frac{x_{n+1}(i) - x_n(i)}{\bar{\alpha}_{n+1}} - \mu_{n+1}(i)\left[ V_{n+1}(i) + d_{n+1}(i) \right].\label{trueupdate}
\end{equation}

Notice that if $\mu_{n+1}(i) = 0$ then we can select any $f_n(i) \in F_i(x_n)$. Then we can write the iterative process in \eqref{asynchronousSAmatrix} as

\begin{equation}
	x_{n+1} = x_n + \bar{\alpha}_{n+1} M_{n+1} \Big[ f_n + V_{n+1} + d_{n+1} \Big]. \label{asynchronousSAmatrix2}
\end{equation}
For some fixed $\varepsilon > 0$, let $\tilde{M}_n$ be a series of $K \times K$ diagonal matrices with entries in the set $[\varepsilon, 1]$, for all $n$, to be defined in Section \ref{section:AsynchronousUpdates}. We can again rewrite the iterative process in \eqref{asynchronousSAmatrix2} as
	
\begin{equation*}
	x_{n+1} = x_n + \bar{\alpha}_{n+1} \Big[\tilde{M}_{n+1} f_n  + \big( M_{n+1} - \tilde{M}_{n+1} \big) f_n + M_{n+1} V_{n+1} + M_{n+1} d_{n+1} \Big].
\end{equation*}
Now by letting $\bar{V}_{n+1} = f_n \big( M_{n+1} - \tilde{M}_{n+1} \big) + M_{n+1} V_{n+1}$ and $\bar{d}_{n+1} = M_{n+1} d_{n+1}$ we get,

\begin{equation}
	x_{n+1} = x_n + \bar{\alpha}_{n+1} \Big[ \tilde{M}_{n+1} f_n + \bar{V}_{n+1} + \bar{d}_{n+1} \Big]. \label{modifiedx}
\end{equation}
For general $k, \delta > 0$ let 

\begin{equation}
	\Omega^{\delta}_{k} := \big\{ \mathrm{diag}(\omega_1,...,\omega_{k}); \omega_i \in [\delta,1], \forall i = 1,...,k \big\}, \label{OmegaSetdfn}
\end{equation}
and define 

\begin{equation}
	\bar{F}(x) := \Omega^{\varepsilon}_{K} \cdot F(x). \label{Fbar}
\end{equation}
If $F(\cdot)$ is Lipschitz continuous direct comparisons can be made between the mean field, $\bar{F}(x)$, and the analogous mean field $M(t)F(x)$ from \eqref{asynchronousODE} which is used by Borkar \cite{BorkarAsynchronous} and Konda and Borkar \cite{KondaBorkar}. This provides the key insight into the new approach we take. Under the assumptions used in Section \ref{sec:SingleAssumptions} the equivalent $M(t)$ values almost surely lie in $\Omega^{\varepsilon}_{K}$. By combining this with $F(x)$ we produce a differential inclusion which is more straightforward to study than a non-autonomous differential equation and naturally fits the stochastic approximation framework of Bena{\"{\i}}m, Hofbauer and Sorin \cite{BenaimHofbauerSorin}. In addition, this idea naturally lends itself to examining a similar process for a set-valued mean field as we proceed to do in this paper. 

Equation \eqref{modifiedx} can be expressed in the form of a stochastic approximation with a set-valued mean field as in \cite{BenaimHofbauerSorin}:

\begin{equation}
	x_{n+1} - x_n - \bar{\alpha}_{n+1} \bar{V}_{n+1} - \bar{\alpha}_{n+1} \bar{d}_{n+1} \in \bar{\alpha}_{n+1} \bar{F}(x_n). \label{xinclusion}
\end{equation}
Let $\bar{\tau}_0 := 0$, $\bar{\tau}_n := \sum_{k=1}^{n} \bar{\alpha}_k$ be the timescale for the asynchronous updates. To allow this process to be analysed in continuous time consider an interpolated version of the stochastic approximation \eqref{xinclusion} so that this process can be considered in continuous time, 

\begin{equation}
	\bar{x}(\bar{\tau}_n+s) = x_n + s \frac{x_{n+1} - x_n}{\bar{\alpha}_{n+1}}, \quad s \in \left[0,\bar{\alpha}_{n+1}\right). \label{interpolatedsA}
\end{equation}
Under the assumptions (A1)-(A5), presented in Section \ref{sec:SingleAssumptions}, we show in Section \ref{section:AsynchronousUpdates} that a sequence, $\{\tilde{M}_n\}_{n \in \mathbb{N}}$, can be defined such that $\{\bar{\alpha}_n\}_{n \in \mathbb{N}}$ and $\{\bar{V}_{n}\}_{n \in \mathbb{N}}$ satisfy the Kushner and Clark noise condition in \eqref{kushnerclark}. By invoking Theorem \ref{APTtheorem} we obtain our main result, which is proved in Section \ref{sec:MainResultProof}.

\begin{singletimescaletheorem}
	\label{singletimescaletheorem}
	Under the assumptions (A1)-(A5), with probability 1, $\bar{x}(t)$ is an asymptotic pseudo-trajectory to the differential inclusion,
	
	\begin{equation}
		\frac{d x}{d t} \in \bar{F}(x). \label{asynchronousinclusion}
	\end{equation}
\end{singletimescaletheorem}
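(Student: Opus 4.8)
The plan is to observe that the reformulated recursion \eqref{xinclusion} is already in exactly the form covered by Theorem \ref{APTtheorem}, with the set-valued map $\bar{F}(\cdot)$ of \eqref{Fbar} in the role of the stochastic approximation map, the random step sizes $\{\bar{\alpha}_n\}$ in place of $\{\alpha(n)\}$, the combined noise $\{\bar{V}_n\}$ in place of $\{V_n\}$, the perturbation $\{\bar{d}_n\}$ in place of $\{d_n\}$, and the interpolation $\bar{x}$ of \eqref{interpolatedsA} in the role of the linear interpolation there. The whole argument then reduces to verifying hypotheses (i)--(iv) of Theorem \ref{APTtheorem} for this reformulation on a probability-one event (on which the requisite regularity of the random $\{\bar{\alpha}_n\}$ supplied by (A1)--(A5) holds), after which the conclusion that $\bar{x}$ is an asymptotic pseudo-trajectory of \eqref{asynchronousinclusion} is immediate.

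Conditions (ii)--(iv) are the routine ones. Condition (ii), $\sup_n \|x_n\| < \infty$, is supplied directly by the stability hypothesis among (A1)--(A5). Condition (iv) holds because $\bar{d}_{n+1} = M_{n+1} d_{n+1}$ and a diagonal matrix with entries in $[0,1]$ has operator norm at most one, so $\|\bar{d}_{n+1}\| \leq \|d_{n+1}\| \to 0$ and $\sup_n \|\bar{d}_n\| < \infty$ whenever the same holds for $\{d_n\}$. For condition (iii), I would check that $\bar{F}(x) = \Omega^{\varepsilon}_{K} \cdot F(x)$ is a stochastic approximation map in the sense of Definition \ref{setcriteria}: non-emptiness, compactness and convexity are inherited from those of $F(x)$ and $\Omega^{\varepsilon}_{K}$ via the closure and convexity of set products recorded in Section \ref{sec:Notation}; upper semi-continuity of $\bar{F}$ follows from that of $F$ since $\Omega^{\varepsilon}_{K}$ is a fixed compact set not depending on $x$; and the linear-growth bound carries over because $\sup_{M \in \Omega^{\varepsilon}_{K}} \|M\| \leq 1$ yields $\sup_{z \in \bar{F}(x)} \|z\| \leq c(1 + \|x\|)$.

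The substantive step is condition (i), the Kushner--Clark estimate \eqref{kushnerclark} for $\bar{V}_{n+1} = f_n (M_{n+1} - \tilde{M}_{n+1}) + M_{n+1} V_{n+1}$ under the step sizes $\{\bar{\alpha}_n\}$. I would split the weighted partial sums into two pieces. For the genuinely stochastic part $M_{n+1} V_{n+1}$, one uses that $M_{n+1}$ is determined by the updating sets and counters, so $\{M_{n+1} V_{n+1}\}$ is still a martingale-difference sequence; a standard martingale/$L^2$ argument, together with the step-size summability built into (A1)--(A5), then gives the required uniform smallness of $\sum_{i=n}^{k-1} \bar{\alpha}_{i+1} M_{i+1} V_{i+1}$ over $k \leq m(\bar{\tau}_n + T)$, exactly as for the original $\{V_n\}$. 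The real difficulty is the deterministic discrepancy $f_n (M_{n+1} - \tilde{M}_{n+1})$, and this is where I expect the main obstacle to lie.

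Controlling that discrepancy is precisely the role of the construction deferred to Section \ref{section:AsynchronousUpdates}. Following Borkar \cite{BorkarAsynchronous}, the idea is that under (A1)--(A5) the weak limit points of the interpolated relative-step-size matrices are diagonal matrices whose entries are bounded below by some $\varepsilon > 0$; one then defines $\{\tilde{M}_n\}$, with entries in $[\varepsilon,1]$, to track these limiting values so that $M_{n+1} - \tilde{M}_{n+1}$ averages to zero along the timescale $\bar{\tau}_n$. Once the existence of such an $\varepsilon$ and such a tracking sequence is established there, the weighted sums $\sum_{i=n}^{k-1} \bar{\alpha}_{i+1} f_i (M_{i+1} - \tilde{M}_{i+1})$ are shown to be uniformly small over the relevant window (using $\sup_n \|f_n\| < \infty$, which follows from (ii) and the growth bound on $F$), so that $\{\bar{V}_n\}$ satisfies \eqref{kushnerclark}. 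With all four hypotheses verified, Theorem \ref{APTtheorem} applied to \eqref{xinclusion} yields the claim with probability one.
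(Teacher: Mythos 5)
Your proposal follows essentially the same route as the paper's own proof: it reduces the claim to Theorem \ref{APTtheorem} applied to \eqref{xinclusion}, dispatches conditions (ii)--(iv) via (A1) and the fact that $\bar{F}$ is a stochastic approximation map, and verifies the Kushner--Clark condition (i) by splitting $\bar{V}_{n+1}$ into the martingale part $M_{n+1}V_{n+1}$ and the discrepancy $f_n(M_{n+1}-\tilde{M}_{n+1})$, handled respectively by Lemma \ref{lemma:Noise} and Lemma \ref{MainTheoremLemma}. This matches the paper's decomposition into \eqref{noiseI} and \eqref{asynchII}, so the argument is correct and structurally identical.
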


Directly from Theorem \ref{singletimescaletheorem} and \cite[Proposition 3.27]{BenaimHofbauerSorin} we get the key result concerning the convergence of an asynchronous stochastic approximation process.

\begin{singletimescaleconvergence}
	\label{singletimescaleconvergence}
	If there is a globally attracting set, $A$, for the differential inclusion \eqref{asynchronousinclusion}, and assumptions (A1)-(A5) are satisfied, then the iterative process \eqref{asynchronousSAmatrix} will converge to $A$.
\end{singletimescaleconvergence}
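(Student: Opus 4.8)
The plan is to chain together the asymptotic pseudo-trajectory property established in Theorem \ref{singletimescaletheorem} with the convergence theory for bounded asymptotic pseudo-trajectories of a flow that admits a globally attracting set, exactly as packaged in \cite[Proposition 3.27]{BenaimHofbauerSorin}. Since the statement is advertised as following directly from these two ingredients, the work consists mainly in verifying that the hypotheses of that proposition are genuinely in force and in transferring the resulting continuous-time conclusion back to the discrete iterates.

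First I would invoke Theorem \ref{singletimescaletheorem}: under (A1)--(A5), with probability one the interpolated process $\bar{x}(t)$ defined in \eqref{interpolatedsA} is an asymptotic pseudo-trajectory to the differential inclusion \eqref{asynchronousinclusion}. To apply the limit-set machinery of Bena{\"{\i}}m, Hofbauer and Sorin I also need the trajectory to be bounded, equivalently to have precompact forward orbit; this is supplied by the boundedness assumption $\sup_n \|x_n\| < \infty$ (assumption (A2)), which bounds the piecewise-linear interpolation $\bar{x}(\cdot)$ uniformly in $t$.

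Second I would apply \cite[Proposition 3.27]{BenaimHofbauerSorin}, whose content is that a bounded asymptotic pseudo-trajectory of the flow $\Phi$ induced by \eqref{asynchronousinclusion} has its limit set contained in any globally attracting set of $\Phi$; the mechanism is that the limit set of such a trajectory is internally chain transitive, and internally chain transitive sets lie inside every globally attracting set. Applied to $\bar{x}(\cdot)$ this yields $L(\bar{x}) \subseteq A$, i.e. the continuous-time interpolation converges to $A$ as $t \to \infty$. Finally I would transfer this to the discrete sequence: since $\bar{x}(\bar{\tau}_n) = x_n$ and $\bar{\tau}_n = \sum_{k=1}^{n} \bar{\alpha}_k \to \infty$, the convergence of $\bar{x}(t)$ to $A$ forces $d(x_n, A) \to 0$, which is the asserted convergence of \eqref{asynchronousSAmatrix}.

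I expect no serious obstacle, as Proposition 3.27 does the heavy lifting; the one point requiring care is confirming that the notion of ``globally attracting set'' in the hypothesis coincides with the attractor notion used in \cite{BenaimHofbauerSorin}, so that the containment of limit sets applies verbatim, together with checking $\bar{\tau}_n \to \infty$ so that the \emph{entire} sequence $\{x_n\}$ (rather than a subsequence) inherits the limit.
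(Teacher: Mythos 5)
Your proposal is correct and follows essentially the same route as the paper, which derives the corollary directly from Theorem \ref{singletimescaletheorem} together with \cite[Proposition 3.27]{BenaimHofbauerSorin}; your additional details (precompactness of $\bar{x}(\cdot)$, internal chain transitivity of the limit set, and $\bar{\tau}_n \to \infty$ to pass back to the discrete iterates) are exactly what that citation packages. One minor correction: the boundedness $\sup_n\|x_n\| < \infty$ comes from assumption (A1)(a) (the iterates lie in a compact set $C$), not from (A2), which concerns the step sizes.
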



\subsection{Assumptions}
\label{sec:SingleAssumptions}

Throughout this section we study the convergence properties of the iterative process \eqref{asynchronousSAmatrix}. We make reference to the following assumptions, (A1)-(A5), all of which are either standard requirements for a stochastic approximation or can be verified prior to running the asynchronous stochastic approximation process. This is in contrast with the previous work on asynchronous stochastic approximations by Borkar \cite{BorkarAsynchronous} and Konda and Borkar \cite{KondaBorkar}.

\begin{itemize}
	\item[(A1)] 
	\begin{itemize}
		\item[(a)] For a compact set, $C \subset \mathbb{R}^K$, $x_n \in C$ for all $n$.
		\item[(b)] $\{d_n\}_{n \in \mathbb{N}}$ is a bounded sequence such that $d_n \rightarrow 0$ as $n \rightarrow \infty$.
	\end{itemize}
	\item[(A2)] Let $\alpha(n)$ satisfy the following criteria,
	\begin{itemize}
		\item[(a)] $ \sum_n \alpha(n) = \infty$ and $\alpha(n) \rightarrow 0$ as $n \rightarrow \infty$,
		\item[(b)] For $x \in (0,1)$, $\sup_n \alpha([xn]) / \alpha(n) < A_x < \infty$, where $[\cdots]$, means the ``integer part of''. In addition, for all $n$, $\alpha(n) \geq \alpha(n+1)$.
	\end{itemize}
	\item[(A3)] $F(\cdot)$ is a stochastic approximation map.
\end{itemize}

(A1)(a) is a slight strengthening of the standard stochastic approximation boundedness assumption; however this is still a relatively mild condition. Methods to ensure that it is satisfied are discussed elsewhere, for example \cite{Kamal}, \cite{KondaBorkar} or \cite{Tsitsiklis}. A basic restriction is placed on $\{d_n\}_{n \in \mathbb{N}}$ in (A1)(b); in this form the sequence does not affect the asymptotic behaviour of the process. (A2)(a) is a standard assumption required for stochastic approximation, and (A2)(b) is a mild technical condition required to deal with the asynchronicity, which is also used by Borkar \cite{BorkarAsynchronous}. We have dropped the additional restriction on the step-sizes used by Borkar which severely restricts the possible choices of $\{\alpha(n)\}_{n \in \mathbb{N}}$. (A3) ensures that we can use the convergence results presented in Theorem \ref{APTtheorem} and is a standard assumption for stochastic approximations with a set-valued mean field.

Define $\bar{I} \subset 2^I$ as the set of all the possible combinations which have positive probability of occurring. As an example, if every element of $I$ gets updated and it is known that $\bar{I}_n$ is a singleton for each $n$, then $\bar{I} = I$. 

Let $\mathcal{F}_{n}$ be a sigma algebra containing all the information up to and including the $n^{\mathrm{th}}$ iteration. That is $\mathcal{F}_{n} := \sigma(\{\bar{I}_m\}_m,\{x_m\}_m,\{\nu_m(i)\}_{i,m};\forall m \leq n, i = 1,\ldots,K)$.

\begin{itemize}
	\item[(A4)] 
	\begin{itemize}
		\item[(a)] For all $x \in C$, $\mathcal{I}_n, \mathcal{I}_{n+1} \in \bar{I}$,
			
		\begin{equation*}
			\mathbb{P}\Big(\bar{I}_{n+1}=\mathcal{I}_{n+1} | \mathcal{F}_{n}\Big) = \mathbb{P}\Big(\bar{I}_{n+1}=\mathcal{I}_{n+1} |\bar{I}_n=\mathcal{I}_n,x_n=x\Big).
		\end{equation*}
	\end{itemize}
	Let
	\begin{equation*}
		P_{(\mathcal{I}_n, \mathcal{I}_{n+1})}(x):= \mathbb{P}\Big(\bar{I}_{n+1}=\mathcal{I}_{n+1} |\bar{I}_n=\mathcal{I}_n,x_n=x\Big).
	\end{equation*}
	\begin{itemize}
		\item[(b)] For all $x \in C$ the transition probabilities $P_{(\mathcal{I}_n, \mathcal{I}_{n+1})}(x)$ form an aperiodic, irreducible, positive recurrent Markov chain over $\bar{I}$ and for all $i \in I$ there exists an $\mathcal{I} \in \bar{I}$ such that $i \in \mathcal{I}$.
		\item[(c)] The map $x \mapsto P_{(\mathcal{I}_n, \mathcal{I}_{n+1})}(x)$ is Lipschitz continuous.
	\end{itemize}
\end{itemize}

(A4)(a) assumes that the transitions between the updated elements in $\bar{I}$ are part of a controlled Markov chain. (A4)(b) is a straightforward assumption on this controlled Markov chain which can be verified prior to implementation which allows us to negate the need for some of the original technical assumptions made by Konda and Borkar \cite{KondaBorkar}. In this previous work Konda and Borkar assume that every state is updated at a comparable rate in the limit which cannot directly be verified prior to running the process. (A4)(c) is a condition which is required later to use a result from Ma et al. \cite{MaMakowskiShwartz} on the convergence of stochastic approximation with Markovian Noise.

\begin{itemize}
	\item[(A5)]
	
	\begin{itemize}
		\item[(a)] For some $q \geq 2$
			
			\begin{equation*}
				\sum_n \alpha(n)^{1 + q/2} < \infty, \quad \mbox{and} \quad \sup_n \mathbb{E}\big(\|V_n \|^q\big) < \infty;
			\end{equation*}
	\end{itemize}
	\begin{itemize}
		\item[(b)] Take $V_n(i)$ independent of $V_n(j)$ for $i \neq j$ and $V_n(i)$ independent of $\bar{I}_n$ given $\mathcal{F}_{n-1}$ for all $i = 1,\ldots,K$. Let $\langle a,b \rangle = \sum_k a_k b_k$. Then there exists a positive $\Gamma$ such that for all $\theta \in \mathbb{R}^K$,
		
			\begin{equation*}
				\sum_n e^{-c/\alpha(n)} < \infty,
			\end{equation*}
			and
			\begin{equation*}
				\mathbb{E}\Big[\exp\Big\{\langle \theta,V_{n+1} \rangle \Big\} | \mathcal{F}_n\Big] \leq \exp\Big\{\frac{\Gamma}{2} \|\theta\|^2 \Big\};
			\end{equation*}
			for each $c > 0$.
	\end{itemize}
		
	We say that (A5) holds if \emph{either} (A5)(a) or (A5)(b) is true.
\end{itemize}

An assumption similar to (A5) is used by Bena{\"{\i}}m, Hofbauer and Sorin \cite{BenaimHofbauerSorin} to verify a condition for noise convergence and is similar to that used by Kushner and Clark \cite{KushnerClark}. We use this assumption only to show the noise term still satisfies the Kushner-Clark condition with the convergence given in Lemma \ref{lemma:Noise}; the proof is presented via two lemmas in Appendix \ref{Appendix:Noise}. 

\begin{Noiselemma}
	\label{lemma:Noise}
	Assume that (A2)(b), (A4) and (A5) hold. Then with probability 1, for all $T>0$,
	
		\begin{equation*}
			\lim_{n \rightarrow \infty} \sup \bigg\{ \Big\| \sum_{i=n}^{k-1} \bar{\alpha}_{i+1} M_{i+1} V_{i+1} \Big\| ; k = n+1, \dots,\bar{m}(\bar{\tau}_n + T) \bigg\} = 0 
		\end{equation*}
		where $\bar{\tau}_0 := 0$, $\bar{\tau}_n := \sum_{k=1}^{n} \bar{\alpha}_k$ and $\bar{m}(t) := \sup \{k \geq 0 ; t \geq \bar{\tau}_k\}$.
\end{Noiselemma}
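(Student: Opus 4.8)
The plan is to reduce this asynchronous, matrix-weighted noise condition to the ordinary synchronous Kushner--Clark condition one coordinate at a time, and then invoke the moment estimates furnished by (A5). Since $M_{i+1}$ is diagonal with $(j,j)$ entry $\mu_{i+1}(j)$, and $\bar{\alpha}_{i+1}\mu_{i+1}(j) = \alpha(\nu_{i+1}(j))\,\mathbb{I}_{\{j \in \bar{I}_{i+1}\}}$, the $j$th component of the sum is $\sum_{i=n}^{k-1} \alpha(\nu_{i+1}(j))\,\mathbb{I}_{\{j \in \bar{I}_{i+1}\}} V_{i+1}(j)$. As there are only $K$ coordinates, it suffices to show each of these scalar sums vanishes, since the Euclidean norm is then bounded by $\sqrt{K}$ times the worst coordinate.

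First I would reindex each coordinate sum along its own update times. Only iterations with $j \in \bar{I}_{i+1}$ contribute, and at these the local counter $\nu_{i+1}(j)$ advances through consecutive integers; writing $W^{(j)}_m := V_{i+1}(j)$ whenever $\nu_{i+1}(j)=m$, the coordinate sum becomes $\sum_{m=\nu_n(j)+1}^{\nu_k(j)} \alpha(m)\, W^{(j)}_m$, a standard step-size-weighted noise sum on the local timescale $\tau^{(j)}_m := \sum_{\ell \le m}\alpha(\ell)$. Two observations make this reduction effective. Because $\alpha(\nu_{i+1}(j))\,\mathbb{I}_{\{j \in \bar{I}_{i+1}\}} \le \bar{\alpha}_{i+1}$ by definition of $\bar{\alpha}_{i+1}$, the local-time length of the window obeys $\tau^{(j)}_{\nu_k(j)} - \tau^{(j)}_{\nu_n(j)} \le \sum_{i=n}^{k-1}\bar{\alpha}_{i+1}\le T$ uniformly over admissible $k$; and (A4) makes the updating chain positive recurrent with each $i\in I$ lying in some recurrent $\mathcal{I}\in\bar{I}$, so $\nu_n(j)\to\infty$ almost surely and the left endpoint $\tau^{(j)}_{\nu_n(j)}$ diverges. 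Thus I have reduced to the synchronous Kushner--Clark condition for $W^{(j)}$ over windows of local-time length at most $T$ whose starting time tends to infinity.

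It remains to prove that synchronous condition under (A5), which I expect to be the content of the two appendix lemmas. The conditional independence built into (A5)(b) --- $V_{n+1}(i)$ independent of $\bar{I}_{n+1}$ given $\mathcal{F}_n$, with the coordinates mutually independent --- ensures the reindexed $\{W^{(j)}_m\}_m$ remains a martingale-difference sequence for the filtration sampled at the update times, inheriting the conditional moments of $V$. Under (A5)(a) the summability $\sum_n\alpha(n)^{1+q/2}<\infty$ together with $\sup_n\mathbb{E}(\|V_n\|^q)<\infty$ gives, via Doob's (or Burkholder's) maximal inequality for the martingale $\sum_{m\le\cdot}\alpha(m)W^{(j)}_m$, an $L^q$ bound on the windowed supremum that is summable along a discretising subsequence of starting counters, whence Borel--Cantelli yields almost-sure vanishing. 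Under (A5)(b) the sub-Gaussian bound $\mathbb{E}[\exp\langle\theta,V_{n+1}\rangle\mid\mathcal{F}_n]\le\exp(\tfrac{\Gamma}{2}\|\theta\|^2)$ produces, by the tower property and a Chernoff estimate, an exponential tail for the windowed sum of order $\exp(-c/\alpha(\nu_n(j)))$ (using that $\sum_{m}\alpha(m)^2$ over a length-$T$ window is controlled by $T\,\alpha(\nu_n(j))$ since $\alpha$ decreases); the hypothesis $\sum_m e^{-c/\alpha(m)}<\infty$ then makes these tails summable over starting counters, and Borel--Cantelli again closes the argument.

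The main obstacle I anticipate is not the concentration estimates but the bookkeeping that couples the random schedule to the noise: one must check that reindexing by the local counters genuinely preserves the martingale-difference and conditional-moment structure --- which is precisely why (A5)(b) stipulates that $V_{n+1}(i)$ be independent of $\bar{I}_{n+1}$ given $\mathcal{F}_n$ --- and that the finitely many coordinate windows can be treated simultaneously while the single global window $[\bar{\tau}_n,\bar{\tau}_n+T]$ is translated into $K$ comparably bounded local windows. Assumption (A2)(b) is what keeps the local and global timescales commensurable and underlies the auxiliary step-size-ratio estimate that controls $\alpha(\nu_{i+1}(j))$ against $\alpha(i+1)$; without it the reindexed windows could fail to have uniformly bounded length.
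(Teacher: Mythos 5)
Your reduction is genuinely different from the paper's, and it is worth saying what each route buys before pointing at the problem. You pass to local time: each coordinate sum is reindexed along its own update times so the weights become exactly the deterministic $\alpha(m)$, and the problem becomes the synchronous Kushner--Clark condition for the sampled noise $W^{(j)}_m = V_{\sigma_j(m)}(j)$, where $\sigma_j(m)$ is the (random) iteration at which coordinate $j$ receives its $m$-th update. The paper instead stays in global time: it writes $\bar{\alpha}_{i+1}M_{i+1}V_{i+1} = \alpha(i+1)\hat{V}_{i+1}$ with $\hat{V}_{i+1}(j) = \frac{\alpha(\nu_{i+1}(j))}{\alpha(i+1)}\mathbb{I}_{\{j\in\bar{I}_{i+1}\}}V_{i+1}(j)$, uses Lemma \ref{minupdateproportion} together with (A2)(b) to obtain the pointwise bound $\|\hat{V}_n\|\le A_{\eta}\|V_n\|$ (a.s.\ for large $n$), so that both alternatives of (A5) transfer to $\hat{V}_n$ with inflated constants (Lemma \ref{Benaimconditions}), then compares windows via $\bar{\alpha}_n\ge\alpha(n)$, hence $\bar{m}(\bar{\tau}_n+T)\le m(\tau_n+T)$, and finally applies \cite[Proposition 1.4]{BenaimHofbauerSorin} once on the global timescale. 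Your route dispenses with the ratio bound $A_{\eta}$ entirely (your window-length bound needs only the definition of $\bar{\alpha}$, and divergence of the left endpoint needs only $\nu_n(j)\to\infty$), and under (A5)(b) it does go through: conditional independence of $V_{n+1}(j)$ and $\bar{I}_{n+1}$ given $\mathcal{F}_n$ plus the tower property show that both the martingale-difference property and the conditional sub-Gaussian bound pass to $W^{(j)}_m$ with the same $\Gamma$, and $\sum_m e^{-c/\alpha(m)}<\infty$ is exactly what your Chernoff--Borel--Cantelli step needs.

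The gap is in your (A5)(a) branch. There you need $\sup_m\mathbb{E}\big[|W^{(j)}_m|^q\big]<\infty$ for the Doob/Burkholder--Borel--Cantelli argument, and you assert that the sampled sequence ``inherits'' the moments of $V$. It does not: (A5)(a) is an \emph{unconditional} moment bound, and unconditional moments are not preserved by sampling at the random times $\sigma_j(m)$, because neither (A5)(a) nor (A4) constrains the joint conditional law of $(\bar{I}_{n+1},V_{n+1})$ given $\mathcal{F}_n$ --- (A4) pins down only the marginal transition law of $\bar{I}_{n+1}$. The updating events may therefore correlate with the exceptional times at which $|V_n(j)|$ is large: one can arrange $\sup_n\mathbb{E}\|V_n\|^q\le C$ while $\mathbb{E}\big[|V_{\sigma_j(m)}(j)|^q\big]$ is arbitrarily large, by concentrating the $q$-th-moment mass of $V_n(j)$ on the event $\{\sigma_j(m)=n\}$ for each $n$. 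This is precisely the trap the paper's construction avoids: the domination $|\hat{V}_n(j)|\le A_{\eta}|V_n(j)|$ holds $\omega$-by-$\omega$, so the global-time noise keeps bounded $q$-th moments no matter how the schedule and the noise are coupled, at the price of needing Lemma \ref{minupdateproportion} and (A2)(b) to produce $A_{\eta}$. To repair your branch you would need either a conditional strengthening of (A5)(a) (a.s.\ bounded $\mathbb{E}[\|V_n\|^q\mid\mathcal{F}_{n-1}]$, together with the conditional independence you already invoke) or simply to keep the indicator inside the globally indexed sum, as the paper does, rather than reindexing it away.
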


Note that if Lemma \ref{lemma:Noise} can be verified directly without (A5) then this assumption is redundant, and hence we only require (A1)-(A4) and Lemma \ref{lemma:Noise} to hold. This approach is used in Section \ref{sec:TwoAsynchSA}.


\subsection{Weak Convergence of Asynchronous Updates}
\label{section:AsynchronousUpdates}

As discussed in the introduction, the key issue with asynchronous stochastic approximations is how to handle the interaction of the relative step sizes and the mean field. It is important to be able to bound the limit of the relative step sizes, $\mu_n(i)$, away from zero for all $i \in I$ in order to produce an asynchronous stochastic approximation mean field, $\bar{F}(\cdot)$ which will behave similarly to the synchronous mean field, $F(\cdot)$. However the relative step size of a state is zero whenever that state is not updated, hence it is not immediately clear that this is even possible. Despite this it is sufficient that for any $T>0$ an `average' of $\mu_n(i)$ over length $T$ in the continuous time interpolation converges to a value which is bounded away from zero. In this section we prove that under (A2)(b) and (A4) this is indeed the case.

For $T>0$ the space $L^2([0,T])$ is the set of measurable functions $h(\cdot): \mathbb{R} \rightarrow \mathbb{R}$ such that,

\begin{equation*}
	\Big( \int_{0}^T |h(s)|^2 \d s \Big)^{\frac{1}{2}} < \infty.
\end{equation*}
Following the method used in \cite{BorkarBook} and \cite{KondaBorkar}, define $\mathcal{U}$ to be the space of maps $u(\cdot) : \mathbb{R} \rightarrow [0,1]$ with the coarsest topology which for all $T>0$ leaves continuous the map,

\begin{equation*}
	u(\cdot) \rightarrow \int_0^T h(s)u(s) \d s
\end{equation*}
for all $h(\cdot) \in L^2([0,T])$. Hence $\mathcal{U}$ is a space of $[0,1]$ trajectories. This means that for any map defined on $\mathcal{U}$ convergence to a limit point will be in the weak sense, along a subsequence. That is a sequence of maps $\{\varphi_n(\cdot)\}_{n \in \mathbb{N}}$ such that $\varphi_n(\cdot) \in \mathcal{U}$ for all $n$ is said to possess a limit point $\tilde{\varphi}(\cdot) \in \mathcal{U}$ if for fixed $T >0$ there exists a subsequence $k(n)$, such that for any $h(\cdot) \in L^2([0,T])$,

\begin{equation}
	\int_0^T  h(s)\varphi_{k(n)}(s) \d s \rightarrow \int_0^T h(s)\tilde{\varphi}(s) \d s, \quad \mbox{as } n \rightarrow \infty. \label{Dfn:weakconvergence}
\end{equation}
Many authors provide a more detailed discussion on weak convergence; for example \cite{Billingsley}, \cite[Appendix A]{BorkarBook} or  \cite{DebnathMikusinski}.

Now we extend the relative step sizes, $\mu_n$, to continuous time; for all $i \in I$ let $u_i(t) = \mu_{n+1}(i)$ for $t \in [\bar{\tau}_n,\bar{\tau}_{n+1})$ and let $u(\cdot) = \big(u_1(\cdot),\ldots,u_K(\cdot)\big)$. For all $i \in I$ and $t > 0$ define $\bar{u}_i^n(t) := u_i(t+\bar{\tau}_n) \in \mathcal{U}$. 

\begin{weakconvergence}
	\label{weakconvergence}
	Under (A2)(b) and (A4), for all $i \in I$ and for any $T > 0$, $\{\bar{u}_i^n(\cdot)\}_{n \in \mathbb{N}}$ converges along a subsequence to a limit point $\tilde{u}_i(\cdot)$ such that for some $\varepsilon > 0$ and any $0< v \leq T$,
		
	\begin{equation}
		\int_0^v \tilde{u}_i(s) \d s \geq v \varepsilon, \quad \mbox{a.s.}. \label{weakconvergencecriteria}
	\end{equation}
\end{weakconvergence}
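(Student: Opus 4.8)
The plan is to split the argument into two essentially independent parts: the existence of a weak limit point $\tilde{u}_i$, which is a compactness statement, and the lower bound \eqref{weakconvergencecriteria}, which carries the probabilistic content. For existence, each $\bar{u}_i^n$ takes values in $[0,1]$, so the family $\{\bar{u}_i^n\}_n$ is bounded in $L^2([0,T])$. Since bounded sets in the reflexive space $L^2([0,T])$ are weakly sequentially compact, a (diagonal) extraction produces a subsequence $k(n)$ along which $\int_0^T h(s)\bar{u}_i^{k(n)}(s)\d s\to\int_0^T h(s)\tilde{u}_i(s)\d s$ for every $h\in L^2([0,T])$; this is exactly convergence to a limit point $\tilde{u}_i\in\mathcal{U}$ in the sense of \eqref{Dfn:weakconvergence}, and $\tilde{u}_i$ again takes values in $[0,1]$ a.e.

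For the lower bound, the first step is to rewrite the running integral as a genuine sum of step sizes. Taking $h=\mathbb{I}_{[0,v]}$ and unfolding the interpolation, I would show
\begin{equation*}
\int_0^v \bar{u}_i^{n}(s)\d s=\int_{\bar{\tau}_n}^{\bar{\tau}_n+v} u_i(s)\d s=\sum_{k}\bar{\alpha}_{k+1}\mu_{k+1}(i)=\sum_{k}\alpha\big(\nu_{k+1}(i)\big)\,\mathbb{I}_{\{i\in\bar{I}_{k+1}\}},
\end{equation*}
the sum running over the iterations $k$ with $[\bar{\tau}_k,\bar{\tau}_{k+1})\subseteq[\bar{\tau}_n,\bar{\tau}_n+v]$, up to a single boundary term that vanishes as $n\to\infty$ since $\bar{\alpha}_{k+1}\to0$. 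The corresponding window length is $\sum_k\bar{\alpha}_{k+1}=\bar{\tau}_{n_2}-\bar{\tau}_{n_1}\to v$, with $\bar{\alpha}_{k+1}=\max_j\alpha(\nu_{k+1}(j))=\alpha(\min_j\nu_{k+1}(j))$ because $\alpha$ is decreasing. Thus \eqref{weakconvergencecriteria} reduces to bounding the ratio of $\sum_k\alpha(\nu_{k+1}(i))\mathbb{I}_{\{i\in\bar{I}_{k+1}\}}$ to $\sum_k\bar{\alpha}_{k+1}$ below by a constant $\varepsilon>0$ uniformly over windows, after which weak convergence along $k(n)$ transfers the bound to $\tilde{u}_i$.

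The heart of the lemma, and the main obstacle, is this ratio bound, which I would obtain by combining two ingredients. The ergodic ingredient comes from (A4): the chain on $\bar{I}$ is aperiodic, irreducible and positive recurrent, so the strong law for Markov chains gives $\nu_n(i)/n\to\pi_i$ a.s., with $\pi_i>0$ for every $i$ since each $i$ lies in some $\mathcal{I}\in\bar{I}$ of positive stationary probability. Hence the counters stay mutually comparable, $\min_j\nu_n(j)\ge x\,\nu_n(i)$ for a fixed $x\in(0,1)$ and all large $n$, a.s. The step-size ingredient is (A2)(b): since $\alpha$ is decreasing and $\alpha([xn])/\alpha(n)\le A_x$, comparability of the counters yields $\bar{\alpha}_{k+1}=\alpha(\min_j\nu_{k+1}(j))\le A_x\,\alpha(\nu_{k+1}(i))$ for all large $k$. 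Therefore, whenever $i\in\bar{I}_{k+1}$ we have $\alpha(\nu_{k+1}(i))\ge\bar{\alpha}_{k+1}/A_x$, so that
\begin{equation*}
\frac{\sum_k\alpha(\nu_{k+1}(i))\mathbb{I}_{\{i\in\bar{I}_{k+1}\}}}{\sum_k\bar{\alpha}_{k+1}}\ \ge\ \frac{1}{A_x}\,\frac{\sum_{k:\,i\in\bar{I}_{k+1}}\bar{\alpha}_{k+1}}{\sum_k\bar{\alpha}_{k+1}}.
\end{equation*}

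It then remains to bound the $\bar{\alpha}$-weighted update frequency of $i$ away from zero. Over a window of fixed continuous length $v\le T$ the counters change only by a bounded factor, so the weights $\bar{\alpha}_{k+1}$ inside the window are mutually comparable up to a constant depending only on $T$; meanwhile the unweighted frequency $\#\{k:i\in\bar{I}_{k+1}\}/\#\{k\}\to\pi_i$ by the same ergodic law. Together these bound the weighted frequency below by a fixed multiple of $\pi_i$, and taking $\varepsilon$ to be the resulting constant minimised over $i\in I$ (worst case $v=T$) completes the argument. I expect the fiddliest part to be not the ergodic law itself but making the window-comparability of the weights $\bar{\alpha}_{k+1}$ quantitative and uniform in $n$, so that the bound holds almost surely rather than merely in expectation.
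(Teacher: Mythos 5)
Your first part (existence of the weak limit point via boundedness of $\{\bar u_i^n\}_n$ in $L^2([0,T])$ and weak sequential compactness) matches the paper, which invokes Banach--Alaoglu for exactly this purpose, and your reduction of \eqref{weakconvergencecriteria} to the sum $\sum_k \alpha(\nu_{k+1}(i))\mathbb{I}_{\{i\in\bar I_{k+1}\}}$ over the window is also how the paper starts (note only that $\bar\alpha_{k+1}=\alpha\big(\min_{j\in\bar I_{k+1}}\nu_{k+1}(j)\big)$, the minimum over the \emph{updated} components, so your identity $\bar\alpha_{k+1}=\alpha(\min_j\nu_{k+1}(j))$ should be the inequality $\bar\alpha_{k+1}\le\alpha(\min_{j\in I}\nu_{k+1}(j))$ --- harmless, since it points the right way).

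The genuine gap is your ``ergodic ingredient.'' Under (A4) the updating process $\{\bar I_n\}$ is a \emph{controlled} Markov chain: its transition probabilities $P_{(\mathcal{I}_n,\mathcal{I}_{n+1})}(x_n)$ depend on the current iterate $x_n$. There is therefore no single stationary distribution $\pi$, and the strong law for (time-homogeneous) Markov chains you invoke does not apply; $\nu_n(i)/n$ need not converge at all, since $x_n$ may wander in $C$ and drag the instantaneous stationary distribution $\pi_{x_n}(\cdot)$ around with it. The same objection hits your later claim that the unweighted within-window frequency of $i$ tends to $\pi_i$. What \emph{is} true, and what the paper proves, is only a liminf bound: Lemma \ref{minupdateproportion} recasts the occupation frequencies $w_n(\mathcal{I})/n$ as a stochastic approximation with controlled Markovian noise (Borkar's Markov-noise SA theory) whose limiting dynamics are $\dot w = \pi_{x(t)}-w$, and then bounds $\pi_x(\mathcal{I})\ge\eta>0$ \emph{uniformly over} $x\in C$ using compactness of $C$ together with (A4)(b),(c); this yields $\liminf_n \nu_n(i)/n\ge\eta$ a.s. For the window sums themselves, the paper does not use a frequency LLN at all: it applies a (modified) theorem of Ma, Makowski and Shwartz to replace $\mathbb{I}_{\{\bar I_{j+1}=\mathcal{I}\}}$ by the time-varying stationary probabilities $\pi_{x_j}(\mathcal{I})$ inside the sum $\sum_j \alpha(j+1)\mathbb{I}_{\{\bar I_{j+1}=\mathcal{I}\}}$, lower bounds these by $\eta$, and then uses Lemma \ref{alpharatio} ($\liminf_n \alpha(n)/\bar\alpha_n \ge A_\eta^{-1}$, itself a consequence of the liminf counter bound and (A2)(b)) to convert back to $\sum_j\bar\alpha_{j+1}A_\eta^{-1}\eta \to vA_\eta^{-1}\eta$. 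So the part you flag as ``fiddly'' (a.s. uniformity over windows) is not mere bookkeeping: it is precisely the point where the controlled-chain machinery (Borkar's Markov-noise SA / Ma et al.) must replace the ergodic theorem, and without it the heart of your argument --- both the comparability of counters and the window frequency bound --- is unsupported.
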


\begin{proof}
	See Appendix \ref{appendix:uvconvergence}.
\end{proof}

\begin{weakconvergencecorol1}
	\label{weakconvergencecorol1}
	For any $T>0$ let $\tilde{u}(\cdot)$ be a limit point of $\{\bar{u}^n(\cdot)\}_{n \in \mathbb{N}}$ in $\mathcal{U}$, then under (A2)(b) and (A4) there exists an $\varepsilon > 0$ such that for all $i \in I$ and any $t,v >0$ such that $t+v \leq T$,
	
	\begin{equation*}
		\int_t^{t+v} \tilde{u}_i(s) \d s \geq v \varepsilon, \quad \mbox{a.s.}.
	\end{equation*}
\end{weakconvergencecorol1}

\begin{proof}
	See Appendix \ref{appendix:uvconvergence}.
\end{proof}

We now expand upon the discussions in Sections \ref{sec:BackgroundAsynch} and \ref{sec:MainResult} on producing a sequence of matrices $\{\tilde{M}_n\}_{n \in \mathbb{N}}$. In order to use the differential inclusions framework described in Section \ref{sec:BackgroundDI} we need to define a sequence of diagonal matrices, $\{\tilde{M}_n\}_{n \in \mathbb{N}}$ with diagonal entries which are always in the set $[\varepsilon, 1]$, for some $\varepsilon >0$, and such that the terms converge to the same limit as the terms of $\{M_n\}_{n \in \mathbb{N}}$. Recall that $M_n$ is a diagonal matrix containing the $\mu_n(i)$ terms.

Fix $\varepsilon >0$ taken from Lemma \ref{weakconvergence} and define a new function $v(\cdot): \mathbb{R} \rightarrow \mathbb{R}^K$ such that

\begin{equation*}
	v_i(t) := \max\big\{u_i(t),\varepsilon\big\}.
\end{equation*}
For all $t > 0$ let $\bar{v}_i^n(t) := v_i(t+ \bar{\tau}_n)$. Corollary \ref{weakconvergencecorol1} shows that, with respect to the topology of $\mathcal{U}$, in the limit $u_i(t) \in [\varepsilon,1]$ for almost every $t$ and similarly $v_i(t) \in [\varepsilon,1]$ for all $t$. From this it is clear $u_i(t)$ and $v_i(t)$ have the same limit point in $\mathcal{U}$. That is, if $\tilde{u}(t)$ is a limit point of $\{\bar{u}^n(\cdot)\}_{n \in \mathbb{N}}$ then it is also a limit point of $\{\bar{v}^n(\cdot)\}_{n \in \mathbb{N}}$. Hence for any $T>0$ there exists a subsequence $k(n)$ such that for $h(\cdot) \in L^2([0,T])$,

\begin{equation*}
	\int_{0}^{T} \tilde{u}_i(t) h(t) \d t = \lim_{n \rightarrow \infty} \int_{0}^{T} \bar{u}_i^{k(n)}(t) h(t) \d t = \lim_{n \rightarrow \infty} \int_{0}^{T} \bar{v}_i^{k(n)}(t) h(t) \d t.
\end{equation*}
However, the key interest here is in the convergence of $u_i(\cdot)$ and $v_i(\cdot)$. Following the reasoning of Borkar \cite{BorkarAsynchronous} and Konda and Borkar \cite{KondaBorkar}, it does not matter whether $\bar{u}_i^n(\cdot)$ and $\bar{v}_i^n(\cdot)$ converge directly or via a subsequence as this does not affect the convergence of the continuous processes $u_i(\cdot)$ and $v_i(\cdot)$. Hence we can say that $u_i(\bar{\tau}_n + \cdot), v_i(\bar{\tau}_n + \cdot)$ converge weakly to a limit point $\tilde{u}_i(\cdot)$, or equivalently, if $h(\cdot)$ is any bounded, continuous function then for all $i = 1,\ldots,K$,

\begin{equation}
	\lim_{n \rightarrow \infty} \left\| \int_{\bar{\tau}_n}^{\bar{\tau}_n+v} \big[v_i(s) - u_i(s)\big] h(s) \d s\right\| = 0. \label{mtilderesult}
\end{equation}
Define $\tilde{M}(t)$ to be the $K \times K$ diagonal matrix of the $v_i(t)$ terms and let $\tilde{M}_n := \tilde{M}(\bar{\tau}_n)$.

\begin{MainTheoremLemma}
	\label{MainTheoremLemma}
	Almost surely under assumptions (A2)(b) and (A4),
	
	\begin{equation*}
		\sup_k \left\{\left\| \sum_{i=n}^{k-1} \bar{\alpha}_{i+1} f_i \big( M_{i+1} - \tilde{M}_{i+1} \big) \right\| ; k = n+1,\ldots \bar{m}(\bar{\tau}_n + T) \right\} = 0.
	\end{equation*}
\end{MainTheoremLemma}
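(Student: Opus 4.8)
The statement should be read as a limit as $n\to\infty$, in exactly the Kushner--Clark form of \eqref{kushnerclark}, and the plan is to reduce it entirely to the weak-convergence estimate \eqref{mtilderesult} evaluated against the constant test function $h\equiv 1$. Before doing anything with the sum I would record two elementary facts. First, the $f_i$ are uniformly bounded: $f_i\in F(x_i)$, the trajectory stays in the compact set $C$ by (A1)(a), and $F$ has linear growth by (A3), so $\|f_i\|\le L:=c\,(1+\sup_{x\in C}\|x\|)<\infty$ for every $i$. Second, by construction $v_j(s)=\max\{u_j(s),\varepsilon\}$, so each diagonal entry of $\tilde M-M$ satisfies $v_j-u_j=(\varepsilon-u_j)^+\in[0,\varepsilon]$; crucially this difference is \emph{non-negative} and vanishes whenever $u_j\ge\varepsilon$.

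Next I would convert the discrete sum into a continuous-time integral. On each interval $[\bar\tau_i,\bar\tau_{i+1})$ the functions $u_j$, $v_j$ and the chosen $f_i$ are all constant, and $\bar\alpha_{i+1}=\bar\tau_{i+1}-\bar\tau_i$, so the $j$-th component of the partial sum equals exactly
\begin{equation*}
	\sum_{i=n}^{k-1} \bar\alpha_{i+1}\, f_i(j)\,\big(M_{i+1}-\tilde M_{i+1}\big)_{jj} = \int_{\bar\tau_n}^{\bar\tau_k} \tilde f_j(s)\,\big(u_j(s)-v_j(s)\big)\,\d s,
\end{equation*}
where $\tilde f_j$ is the step function equal to $f_i(j)$ on $[\bar\tau_i,\bar\tau_{i+1})$. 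Applying the triangle inequality together with $|\tilde f_j|\le L$ and $v_j-u_j\ge 0$, the absolute value of this component is bounded by $L\int_{\bar\tau_n}^{\bar\tau_k}\big(v_j(s)-u_j(s)\big)\,\d s$; passing to the Euclidean norm over the finite index set $I$ then costs at most a factor $\sqrt{K}$, giving an overall bound of $\sqrt{K}\,L\,\max_{j}\int_{\bar\tau_n}^{\bar\tau_k}(v_j-u_j)\,\d s$.

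The key observation is that $v_j-u_j\ge 0$ makes $v\mapsto\int_{\bar\tau_n}^{\bar\tau_n+v}(v_j-u_j)\,\d s$ non-decreasing, so the supremum over $k$ (equivalently over $\bar\tau_k-\bar\tau_n\in[0,T]$) is attained at the full window, $\sup_k\int_{\bar\tau_n}^{\bar\tau_k}(v_j-u_j)\,\d s\le\int_{\bar\tau_n}^{\bar\tau_n+T}(v_j-u_j)\,\d s$. This disposes of the uniformity-in-$k$ requirement for free. I would then invoke \eqref{mtilderesult} with the bounded continuous function $h\equiv 1$, which yields $\int_{\bar\tau_n}^{\bar\tau_n+T}(v_j-u_j)\,\d s\to 0$ almost surely for each fixed $j$. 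Taking the maximum over the finitely many $j\in I$ and multiplying by $\sqrt{K}\,L$ then drives the whole supremum to zero, which is the claim.

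The main obstacle is conceptual rather than computational: the effective integrand $\tilde f_j$ depends on the shifting window and so is not a fixed test function to which the weak-convergence result could be applied directly. The device that resolves this is to avoid using $f_i$ as a test function altogether — the crude uniform bound $\|f_i\|\le L$ combined with the sign constraint $v_j-u_j\ge 0$ lets me replace $\tilde f_j$ by the constant $L$, so that all of the asymptotic content is carried by \eqref{mtilderesult} with $h\equiv 1$. The only remaining care is (a) confirming that $f_i$, $u_j$, $v_j$ are genuinely constant on each interpolation interval so that the sum-to-integral identity is exact, and (b) noting that, exactly as for \eqref{mtilderesult}, every estimate holds on one common probability-one event, giving the stated almost-sure conclusion.
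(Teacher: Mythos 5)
Your proof is correct, but it takes a genuinely different route from the paper's. The paper also converts the sum to an integral, but it keeps the $f_i$ inside the weak-convergence machinery: it chooses a bounded continuous function $h$ interpolating the points $h(\bar\tau_j)=f_j$, splits the integral into three terms --- the term $\int h(t)\big(M(t)-\tilde M(t)\big)\d t$, which is killed by \eqref{mtilderesult} applied to that $h$, plus the two mismatch terms \eqref{Riemannconvergence2} and \eqref{Riemannconvergence3} involving $h(t)-h(\bar\tau_{\bar m(t)})$ --- and disposes of the latter two by left-endpoint Riemann-sum approximation arguments as the partition width tends to zero. You avoid the data-dependent test function altogether: the sign observation $v_j-u_j=(\varepsilon-u_j)^+\ge 0$, the uniform bound $\|f_i\|\le L$, and monotonicity of the window integral dominate the whole supremum over $k$ by $\sqrt K\,L\,\max_j\int_{\bar\tau_n}^{\bar\tau_n+T}\big(v_j(s)-u_j(s)\big)\d s$, so that \eqref{mtilderesult} is only ever invoked for the single fixed function $h\equiv 1$. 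What your version buys is robustness: in the paper's argument $h$ changes with the window $n$ and interpolates values $f_j$ that can jump by $O(1)$ between consecutive indices, so its modulus of continuity is not controlled uniformly in $n$ and the ``partition width tends to zero'' step requires more care than the paper gives it; your bound discards any cancellation coming from the signs of the $f_i$, but it needs nothing beyond \eqref{mtilderesult} for $h\equiv 1$ and every step is exact or a crude domination. Two small points: you should flag that the bound $\|f_i\|\le L$ uses (A1)(a) and (A3), which go beyond the hypotheses (A2)(b) and (A4) listed in the lemma --- though the paper's proof needs the same boundedness implicitly for its $h$ to be bounded, and all assumptions are in force where the lemma is applied; and your reading of $\tilde M_{i+1}$ as the entrywise $\varepsilon$-truncation of $M_{i+1}$, so that the diagonal entries of $M_{i+1}-\tilde M_{i+1}$ equal $u_j-v_j$ on $[\bar\tau_i,\bar\tau_{i+1})$, is the intended one and matches the paper's own usage in its proof.
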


\begin{proof}
	See Appendix \ref{K-CAsynchNoiseLemmaProof}. The proof relies on \eqref{mtilderesult}. 
\end{proof}


\subsection{Proof of Theorem \ref{singletimescaletheorem}}
\label{sec:MainResultProof}

We must verify that the four conditions of Theorem \ref{APTtheorem} hold for the stochastic approximation process in \eqref{xinclusion} to ascertain that $\bar{x}(t)$ is an asymptotic pseudo-trajectory of \eqref{asynchronousinclusion}. 
	
Fix $T>0$. Then,
	
\begin{align}
	\sup_k & \left\{ \Big\| \sum_{i=n}^{k-1} \bar{\alpha}_{i+1} \bar{V}_{i+1} \Big\| ; k = n+1,\ldots \bar{m}(\bar{\tau}_n + T) \right\} \notag \\ 
	& \leq \sup_k \left\{\left\| \sum_{i=n}^{k-1} \bar{\alpha}_{i+1} M_{i+1} V_{i+1} \right\|; k = n+1,\ldots \bar{m}(\bar{\tau}_n + T) \right\} \label{noiseI} \\
	& + \sup_k \left\{\left\| \sum_{i=n}^{k-1} \bar{\alpha}_{i+1} f_i \big( M_{i+1} - \tilde{M}_{i+1} \big) \right\| ; k = n+1,\ldots \bar{m}(\bar{\tau}_n + T) \right\}. \label{asynchII} 
\end{align}	
Using Lemma \ref{lemma:Noise} and Lemma \ref{MainTheoremLemma} immediately gives that \eqref{noiseI} and \eqref{asynchII} converge to zero a.s., and hence this verifies that property $(i)$ holds. Assumption (A1)(a) directly gives that $(ii)$ holds. Lastly, it is straightforward to verify that, under (A1)-(A5), $\bar{F}(\cdot)$ is a stochastic approximation map which verifies condition $(iii)$, and (A1)(b) is equivalent to $(iv)$. \hfill $\square$


\section{Two-timescale Asynchronous Stochastic Approximation}
\label{sec:TwoAsynchSA}

A useful extension of standard stochastic approximations is to two-timescales. This concept was originally introduced by Borkar \cite{BorkarTimescales} and has later been used by Leslie and Collins \cite{LeslieCollins} for multiple timescales and Konda and Borkar \cite{KondaBorkar} for two-timescales asynchronous stochastic approximation. If we have a coupled pair of stochastic approximations where one system can be seen to update more aggressively than the other then the aggressive process is always fully adjusted to the value of the other process. This is all controlled through the user's choice of step sizes in the stochastic approximation. The main result in this section is Corollary \ref{twotimescaleconvergence}, which comes from combining Theorem \ref{singletimescaletheorem} with the previous work of Konda and Borkar \cite{KondaBorkar}.

\subsection{Notation}

In what is to follow we consider the extension of Theorem \ref{singletimescaletheorem} to the two-timescales setting, with updates $\{x_n\}_{n \in \mathbb{N}}$ and $\{y_n\}_{n \in \mathbb{N}}$ on different timescales. Let $I$ be the set of individual elements of the $x$ process as in Section \ref{sec:AsynchSA}, and define $J$ similarly for the $y$ process. Let $K = |I|$ and $L = |J|$ so that for all $n$, $x_n \in \mathbb{R}^K$ and $y_n \in \mathbb{R}^L$. As in Section \ref{sec:AsynchSA} let $\bar{I} \subset 2^I$ be the set containing all combinations of elements in $I$ which have a positive probability of being part of the asynchronous update, and define $\bar{J} \subset 2^J$ in the same manner for the $y$ process. At iteration $n$ let $\bar{I}_n \in \bar{I}$ and $\bar{J}_n \in \bar{J}$ be the updated components of each timescale respectively. Let each component of the two processes have a counter for the number of times it has been selected to be updated defined by,

\begin{equation*}
	\nu_n(i) := \sum_{k=1}^n \mathbb{I}_{\{i \in \bar{I}_k\}}, \quad \phi_n(j) := \sum_{k=1}^n \mathbb{I}_{\{j \in \bar{J}_k\}}.
\end{equation*}
Here $\nu_n(i)$ is as in Section \ref{sec:AsynchSA} and $\phi_n(j)$ has an analogous definition for the $\{y_n\}_{n \in \mathbb{N}}$ process. Let $\{V_n\}_{n \in \mathbb{N}}$, $\{U_n\}_{n \in \mathbb{N}}$ be martingale noise processes defined on $\mathbb{R}^K$ and $\mathbb{R}^L$ respectively, and $\{d_{n}\}_{n \in \mathbb{N}}$, $\{e_n\}_{n \in \mathbb{N}} \rightarrow 0$ as $n \rightarrow \infty$ similarly defined on $\mathbb{R}^K$ and $\mathbb{R}^L$ respectively. Let $V_n(i), d_n(i) \in \mathbb{R}$ be component $i$ of $V_n$ and $d_n$, and similarly let $U_n(j), e_n(j) \in \mathbb{R}$ be component $j$ of $U_n$ and $e_n$. As in the previous sections $\{\alpha(n)\}_{n \in \mathbb{N}}$, and now $\{\gamma(n)\}_{n \in \mathbb{N}}$, are positive, deceasing sequences of step sizes. Similar restrictions to those in (A2) will be placed on $\{\alpha(n)\}_{n \in \mathbb{N}}$, $\{\gamma(n)\}_{n \in \mathbb{N}}$ with an additional requirement for the two-timescale arguments to be valid; this will be made precise in Section \ref{sec:2timeAssump}. Finally, $F(\cdot,\cdot): \mathbb{R}^K \times \mathbb{R}^L \rightarrow \mathbb{R}^K$ and $G(\cdot,\cdot): \mathbb{R}^K \times \mathbb{R}^L \rightarrow \mathbb{R}^L$ are set-valued maps, where $F_i(x,y)$ is the $i^{\mathrm{th}}$ value of $F(x,y)$ and similarly for $G_j(x,y)$. For all $i = 1 \ldots, K$ and $j = 1,\ldots, L$ consider the following coupled process,

\begin{equation}
	{\setlength\arraycolsep{0.1em}
	\begin{array}{rl}
		x_{n+1}(i) & - x_n(i) \\
		& \in \alpha \big(\nu_{n+1}(i)\big) \mathbb{I}_{\{i \in \bar{I}_{n+1}\}} \big[F_i(x_n,y_n) + V_{n+1}(i) + d_{n+1}(i) \big], \\
		y_{n+1}(j) & - y_n(j) \\
		& \in \gamma \big(\phi_{n+1}(j)\big) \mathbb{I}_{\{j \in \bar{J}_{n+1}\}} \big[G_j(x_n,y_n) + U_{n+1}(j) + e_{n+1}(j) \big].
	\end{array} 
	}
	\label{2asynchronousSAlong}
\end{equation}
Notice that the only change to the first process from Sections \ref{sec:Notation} and \ref{sec:AsynchSA} is that the mean field now depends on $y_n$ as well as $x_n$. It follows that the asynchronous and relative step sizes retain the same form. Recall these definitions and extend them for the $\{y_n\}_{n \in \mathbb{N}}$ process:

\begin{align*}
	\bar{\alpha}_n := \max_{i \in \bar{I}_{n}} \alpha \big(\nu_{n}(i)\big), & \quad \mu_n(i) := \frac{\alpha\big(\nu_{n}(i)\big) }{\bar{\alpha}_n} \mathbb{I}_{\{i \in \bar{I}_{n}\}}, \\
	\bar{\gamma}_n := \max_{j \in \bar{J}_{n}} \gamma \big(\phi_{n}(j)\big), & \quad \sigma_n(j) := \frac{\gamma\big(\phi_{n}(j)\big) }{\bar{\gamma}_n} \mathbb{I}_{\{j \in \bar{J}_{n}\}}.
\end{align*}
As in Section \ref{sec:BackgroundAsynch} let $M_n$ be the $K \times K$ diagonal matrix of the $\mu_n(i)$ terms and similarly let $N_n$ be the $L \times L$ diagonal matrix of the $\sigma_n(j)$ terms. The coupled stochastic process \eqref{2asynchronousSAlong} can be written more concisely as,

\begin{equation}
	{\setlength\arraycolsep{0.1em}
	\begin{array}{rl}
		x_{n+1} - x_n - \bar{\alpha}_{n+1} M_{n+1} \big[ V_{n+1} + d_{n+1} \big] & \in \bar{\alpha}_{n+1} M_{n+1} \cdot F(x_n,y_n), \\
		y_{n+1} - y_n - \bar{\gamma}_{n+1} N_{n+1} \big[ U_{n+1} + e_{n+1} \big] & \in \bar{\gamma}_{n+1} N_{n+1} \cdot G(x_n,y_n).
	\end{array} 
	}
	\label{2asynchronousSA}
\end{equation}
Finally, define the two timescales; let $\bar{\tau}_0 := 0$, $\bar{\tau}_k := \sum_{i=1}^{k} \bar{\alpha}_i$, $\bar{\rho}_0 := 0$, $\bar{\rho}_k := \sum_{i=1}^{k} \bar{\gamma}_i$. The division of time on the `slow' timescale is given by the increments $\{\bar{\tau}_n\}_{n \in \mathbb{N}}$ and similarly for $\{\bar{\rho}_n\}_{n \in \mathbb{N}}$ on the `fast' timescale. In a similar manner to the previous sections let $\bar{m}_{\alpha}(t) := \sup \{k \geq 0 ; t \geq \bar{\tau}_k\}$, and $\bar{m}_{\gamma}(t) := \sup \{k \geq 0 ; t \geq \bar{\rho}_k\}$.


\subsection{Assumptions}
\label{sec:2timeAssump}

We state the assumptions (B1)-(B6) used for the convergence results of the two-timescale algorithm \eqref{2asynchronousSA}. These are exactly analogous to (A1)-(A5) and are simply extended to accommodate the two-timescales framework. The exceptions to this are (B2)(c) and (B6) and the slight adaptations to (B3), which are in line with those used by Borkar \cite{BorkarTimescales} and Konda and Borkar \cite{KondaBorkar}. In (B4) we have produced a single combined Markov chain instead of one for each of the $\{x_n\}_{n \in \mathbb{N}}$ and $\{y_n\}_{n \in \mathbb{N}}$ processes to present a clearer assumption.

\begin{itemize}
	\item[(B1)] 
	\begin{itemize}
		\item[(a)] For compact sets, $C \subset \mathbb{R}^K$, $D \subset \mathbb{R}^L$, $x_n \in C$, $y_n \in D$ for all $n$.
		\item[(b)] $\{d_n\}_{n \in \mathbb{N}}$ and $\{e_n\}_{n \in \mathbb{N}}$ are bounded sequences such that $d_n,e_n \rightarrow 0$ as $n \rightarrow \infty$.
	\end{itemize}
	\item[(B2)] The following must be true for $a(n) = \alpha(n)$ and $a(n) = \gamma(n)$
	\begin{itemize}
		\item[(a)] $ \sum_n a(n) = \infty$ and $a(n) \rightarrow 0$ as $n \rightarrow \infty$,
		\item[(b)] For $x \in (0,1)$, $\sup_n a([xn]) / a(n) < A_x < \infty$. In addition, for all $n$, $a(n) \geq a(n+1)$. $[\cdots]$.
		\item[(c)] $\frac{\alpha(n)}{\gamma(n)} \rightarrow 0$ 
	\end{itemize}
	\item[(B3)]
		\begin{itemize}
			\item[(a)] For all $z \in \{(x,y) ; x \in C, y \in D\}$, $F(z)$ is upper semi-continuous and for all $y \in D$, $F(\cdot,y): \mathbb{R}^K \rightarrow \mathbb{R}^K$ is a stochastic approximation map. 
			\item[(b)] For all $z \in \{(x,y) ; x \in C, y \in D\}$, $G(z)$ is a stochastic approximation map
		\end{itemize}
\end{itemize}

The first and second assumptions are direct extensions of (A1) and (A2) to two-timescales with the addition of (B2)(c) which is a standard two-timescale assumption used by Borkar \cite{BorkarTimescales}. Condition (B3)(a) is similar to (A3) for the `slow' timescale, however this must hold for all values of the `fast' timescale. (B3)(b) is a similar condition for the `fast' timescale.

Define $\bar{H} \subset \bar{I} \times \bar{J}$ such that if $\mathcal{I} \in \bar{I}$ and $\mathcal{J} \in \bar{J}$ then $(\mathcal{I},\mathcal{J}) \in \bar{H}$ if and only if $\mathcal{I}$ and $\mathcal{J}$ have a positive probability of occurring simultaneously (at the same iteration). This means that $\bar{H}$ is the combination of elements across $\bar{I} \times \bar{J}$ which have positive probability of being updated at any particular iteration. At iteration $n$ $\bar{H}_n \in \bar{H}$ is taken to be the updated components across $I$ and $J$. In addition, let $z_n = (x_n,y_n) \in C \times D$ and $\mathcal{F}_{n}$ be a sigma algebra containing all the information up to and including the $n^{\mathrm{th}}$ iteration. That is $\mathcal{F}_{n} = \sigma(\{\bar{H}_m\}_m,\{z_m\}_m,\{\nu_m(i)\}_{i,m},\{\phi_m(j)\}_{j,m}; \forall m \leq n, i = 1,\ldots,K, j = 1,\ldots,L)$. 

\begin{itemize}
	\item[(B4)]
	\begin{itemize}
		\item[(a)] For all $z \in C \times D$ and $\mathcal{H}_n, \mathcal{H}_{n+1} \in \bar{H}$,
		\begin{equation*}
			\mathbb{P}\Big(\bar{H}_{n+1}=\mathcal{H}_{n+1} |\mathcal{F}_{n} \Big) = \mathbb{P}\Big(\bar{H}_{n+1}=\mathcal{H}_{n+1} |\bar{H}_n=\mathcal{H}_n,z_n=z \Big).
		\end{equation*}
	\end{itemize}
	Let
	\begin{equation*}
		Q_{(\mathcal{H}_n, \mathcal{H}_{n+1})}(z) := \mathbb{P}\Big(\bar{H}_{n+1}=\mathcal{H}_{n+1} |\bar{H}_n=\mathcal{H}_n,z_n=z \Big).
	\end{equation*}
	\begin{itemize}
		\item[(b)] For all $z \in C \times D$ the transition probabilities $Q_{(\mathcal{H}_n, \mathcal{H}_{n+1})}(z)$ form aperiodic, irreducible, positive recurrent Markov chains over $\bar{H}$ and for all $i \in I$, $j \in J$ there exists a $\mathcal{H},\mathcal{H}' \in \bar{H}$, such that $i \in \mathcal{H}$, $j \in \mathcal{H}'$.
		\item[(c)] the map $z \mapsto Q_{(\mathcal{H}_n, \mathcal{H}_{n+1})}(z)$, is Lipschitz continuous.
	\end{itemize}
\end{itemize}

\begin{itemize}
	\item[(B5)] 
		Separately for both 
		
		\begin{equation*}
			(a(n),W_n, \bar{Z}_n) = (\alpha(n),V_n,\bar{I}_n) \quad \mbox{and} \quad (a(n),W_n, \bar{Z}_n) = (\gamma(n),U_n,\bar{J}_n) 
		\end{equation*}
		one of the following assumptions is satisfied:

	\begin{itemize}
		\item[(a)] For some $q \geq 2$
			
			\begin{equation*}
				\sum_n a(n)^{1 + q/2} < \infty, \quad \mbox{and} \quad \sup_n \mathbb{E}\big(\|W_n \|^q\big) < \infty;
			\end{equation*}
		\item[(b)] With $W_n(i)$ independent of $\mathbb{I}_{\{i \in \bar{Z}_n\}}$ given $\mathcal{F}_{n-1}$, $W_n(i)$ independent of $W_n(j)$ for $i \neq j$ and $\langle a,b \rangle = \sum_k a_k b_k$. There exists a positive $\Gamma$ such that for all $\theta \in \mathbb{R}^K / \mathbb{R}^L$ (depending on the dimension of $W_n$),
		
			\begin{equation*}
				\sum_n e^{-c/a(n)} < \infty,
			\end{equation*}
			and
			\begin{equation*}
				\mathbb{E}\Big[\exp\Big\{\langle \theta,W_{n+1} \rangle \Big\} | \mathcal{F}_n\Big] \leq \exp\Big\{\frac{\Gamma}{2} \|\theta\|^2 \Big\};
			\end{equation*}
			for each $c > 0$.
	\end{itemize}
	If (B5)(a) holds for $(\alpha(n), V_n, \bar{I}_n)$ and (B5)(b) holds for $(\gamma(n), U_n, \bar{J}_n)$ (or vice versa) then (B5) holds.
\end{itemize}	

(B4) and (B5) are straightforward extensions to (A4) and (A5) where in (B4) we have chosen to create a combined Markov chain over both $\bar{I}$ and $\bar{J}$ to present a clearer assumption. As a result of (B4), Lemma \ref{weakconvergence} gives that every element of $\bar{H}$ (and hence every element of $I$ and $J$) is updated some minimum proportion of the time in the limit (see Section \ref{section:AsynchronousUpdates}). Let $\varepsilon > 0$ be this minimum proportion. Define $\bar{F}(x,y) := \Omega^{\varepsilon}_K \cdot F(x,y)$ and $\bar{G}(x,y) := \Omega^{\varepsilon}_L \cdot G(x,y)$ analogously to the definition of $\bar{F}(\cdot)$ in \eqref{Fbar}, where $\Omega^{\varepsilon}_K$, $\Omega^{\varepsilon}_L$ are as defined in \eqref{OmegaSetdfn}.

\begin{itemize}
	\item[(B6)] For all $x \in C$ the differential inclusion,
	
		\begin{equation*}
			\frac{d y}{d t} \in \bar{G}(x,y), 
		\end{equation*}
		has a unique globally asymptotic stable equilibrium, $\Lambda(x)$, where $\Lambda(\cdot) : \mathbb{R}^K \rightarrow \mathbb{R}^L$ is bounded, continuous and single-valued for $x \in C$.
\end{itemize}

The final assumption, (B6), is the asynchronous equivalent to the `fast' timescale convergence criteria used by Borkar \cite{BorkarTimescales} and we use throughout this section. However, at the end of Section \ref{sec:slowtimescaleconvergence} we provide an alternative assumption, (B6$'$), which allows the `fast' timescale to converge instead to a globally attracting set, as opposed to a continuous single-valued function.


\subsection{Convergence of the `Fast' Timescale}
\label{sec:fasttimescaleconvergence}

Many of the proofs in this section use arguments which are identical to the corresponding results of Section \ref{sec:AsynchSA}; where this is the case we do not go into detail and instead direct the reader to the appropriate result(s) to identify the method used. 

Firstly, we require an additional lemma used by Konda and Borkar \cite{KondaBorkar} which shows that the key two-timescales arguments made by Borkar \cite{BorkarTimescales} will continue to hold in the asynchronous case.

\begin{twotimescaleslemma}
	\label{randomstepsize}
	Under (B2) and (B4) $\bar{\alpha}_n, \bar{\gamma}_n \rightarrow 0$ and $\frac{\bar{\alpha}_n}{\bar{\gamma}_n} \rightarrow 0$ almost surely.
\end{twotimescaleslemma}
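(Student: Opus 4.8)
The plan is to sandwich each random step size $\bar{\alpha}_n$ and $\bar{\gamma}_n$ between fixed multiples of the deterministic step sizes $\alpha(n)$ and $\gamma(n)$, and then reduce all three claims to the deterministic two-timescale condition (B2)(c). The whole argument rests on controlling how fast the per-component counters $\nu_n(i)$ and $\phi_n(j)$ grow with $n$.

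First I would record the elementary bounds coming from the monotonicity of the step sizes in (B2)(b). Since $\nu_n(i) = \sum_{k=1}^n \mathbb{I}_{\{i \in \bar{I}_k\}} \leq n$ for every $i$, and $\alpha(\cdot)$ is non-increasing, each term in $\bar{\alpha}_n = \max_{i \in \bar{I}_n} \alpha(\nu_n(i))$ is at least $\alpha(n)$, giving the lower bound $\bar{\alpha}_n \geq \alpha(n)$; the same reasoning gives $\bar{\gamma}_n \geq \gamma(n)$. For the upper bound I would use monotonicity to write $\bar{\alpha}_n = \alpha\big(\min_{i \in \bar{I}_n}\nu_n(i)\big) \leq \alpha\big(\min_{i \in I}\nu_n(i)\big)$, so that an upper bound on $\bar{\alpha}_n$ follows from a lower bound on the slowest counter $\min_{i} \nu_n(i)$, and symmetrically for $\bar{\gamma}_n$.

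The key step is to show the counters grow linearly: almost surely there exist $x,y \in (0,1)$ such that, for all $n$ large enough, $\min_{i \in I}\nu_n(i) \geq xn$ and $\min_{j \in J}\phi_n(j) \geq yn$. This is precisely the minimum-update-proportion property that (B4) delivers through the weak-convergence analysis of Section \ref{section:AsynchronousUpdates}: because the combined chain $\{\bar{H}_n\}$ on the finite set $\bar{H}$ is aperiodic, irreducible and positive recurrent with $z \mapsto Q(z)$ Lipschitz, each state of $\bar{H}$ — and hence, by (B4)(b), each component $i \in I$ and $j \in J$ — is updated a strictly positive fraction of the time in the limit, so $\liminf_n \nu_n(i)/n > 0$ and $\liminf_n \phi_n(j)/n > 0$ almost surely. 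Taking $x$ and $y$ below these finitely many positive values yields the linear bounds. With these in hand, $\min_i \nu_n(i) \geq xn \geq [xn]$ together with monotonicity and (B2)(b) give $\bar{\alpha}_n \leq \alpha([xn]) \leq A_x\alpha(n)$, and symmetrically $\bar{\gamma}_n \leq A_y\gamma(n)$.

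Combining the bounds finishes the proof: $\bar{\alpha}_n \leq A_x\alpha(n) \to 0$ and $\bar{\gamma}_n \leq A_y\gamma(n) \to 0$ by (B2)(a), while
\[
	\frac{\bar{\alpha}_n}{\bar{\gamma}_n} \leq \frac{A_x\alpha(n)}{\gamma(n)} = A_x\,\frac{\alpha(n)}{\gamma(n)} \longrightarrow 0
\]
by (B2)(c). The main obstacle is the linear counter-growth step, because the updating chain is \emph{controlled} — its transition kernel $Q(z_n)$ varies with the slowly-moving iterate $z_n$ — so the ordinary ergodic theorem for homogeneous chains does not apply directly; it is the Lipschitz continuity (B4)(c) together with the uniform recurrence built into (B4)(b), exploited in Section \ref{section:AsynchronousUpdates}, that rescues the argument and produces a single positive proportion $\varepsilon$ valid uniformly over $z \in C \times D$.
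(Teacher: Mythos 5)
Your proof is correct and follows essentially the same route as the paper, which simply cites Konda and Borkar's Lemma 4.6 and notes that its hypotheses are supplied by (B2) together with the minimum-update-proportion result (Lemma \ref{minupdateproportion} under the Markov chain assumption (B4)) — exactly the two ingredients your sandwich argument $\alpha(n) \leq \bar{\alpha}_n \leq A_x\alpha(n)$ (and its analogue for $\bar{\gamma}_n$) is built from. The only difference is that you write out in full the argument the paper delegates to the reference.
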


\begin{proof}
	The proof of this result is identical to \cite[lemma 4.6]{KondaBorkar}. The requisite assumptions are encapsulated in (B2) and the result of Lemma \ref{minupdateproportion}, in Appendix \ref{MarkovUpdates}.
\end{proof}

Again, we follow the method of Borkar \cite{BorkarBook} when examining the set-valued updates. Define $f_n$ and $g_n$ in the following manner; for $i = 1,\ldots,K$, $j = 1,\ldots,L$ let

\begin{align}
	f_n(i) \mu_{n+1}(i) & := \frac{x_{n+1}(i) - x_n(i)}{\bar{\alpha}_{n+1}} - \mu_{n+1}(i) \big[ V_{n+1}(i) + d_{n+1}(i) \big], \notag \\ 
	g_n(j) \sigma_{n+1}(j) & := \frac{y_{n+1}(j) - y_n(j)}{\bar{\gamma}_{n+1}} - \sigma_{n+1}(j) \big[ U_{n+1}(j) + e_{n+1}(j)\big]. \notag
\end{align}
As in Section \ref{sec:MainResult}, if $\mu_{n+1}(i) = 0$ then we can select any $f_n(i) \in F_i(x_n,y_n)$ and similarly if $\sigma_{n+1}(j) = 0$ then we can select any $g_n(j) \in G_j(x_n,y_n)$. $f_n$ and $g_n$ represent the realised values of $F(x_n,y_n)$ and $G(x_n,y_n)$ respectively. We express \eqref{2asynchronousSA} as,

\begin{equation}
	{\setlength\arraycolsep{0.1em}
	\begin{array}{rl}
		x_{n+1} & = x_n + \bar{\alpha}_{n+1} M_{n+1} \big[f_n + V_{n+1} + d_{n+1} \big], \\
		y_{n+1} & = y_n + \bar{\gamma}_{n+1} N_{n+1} \big[g_n + U_{n+1} + e_{n+1} \big].
	\end{array} 
	}
	\label{2asynchronousSA2}
\end{equation}
Let $\underline{0}^K$ be a zero vector of length $K$, and define

\begin{align*}
	& \hspace{1mm} 
	z_n := 	
	\left(
		\begin{array}{c}
			x_n \\
			y_n
		\end{array} 
	\right),
	\quad
	\Gamma_n := 	
	\left(
		\begin{array}{c}
			M_{n} \\
			N_{n}
		\end{array} 
	\right),
	\quad
	\zeta_{n} := 	
	\left(
		\begin{array}{c}
			\frac{\bar{\alpha}_{n}}{\bar{\gamma}_{n}} V_n \\
			U_n
		\end{array} 
	\right), \\
	& 
	\epsilon_{n+1} := 	
	\left(
		\begin{array}{c}
			\frac{\bar{\alpha}_{n}}{\bar{\gamma}_{n}} (f_n + d_{n+1}) \\
			e_{n+1}
		\end{array} 
	\right),
	\quad\quad
	\Psi(z_n) :=
	\left(
		\begin{array}{c}
			\underline{0}^K \\
			G(z_n)
		\end{array} 
	\right).	
\end{align*}
Then we can express the coupled process in \eqref{2asynchronousSA2} as a single iterative process,

\begin{equation}
	z_{n+1} - z_n - \bar{\gamma}_{n+1} \Gamma_{n+1} \big[ \zeta_{n+1} + \epsilon_{n+1}\big] \in \bar{\gamma}_{n+1} \Gamma_{n+1} \cdot \Psi(z_n). \label{2timescalesfast}
\end{equation}
This is in the same form as equation \eqref{asynchronousSAmatrix} which is examined throughout Section \ref{sec:AsynchSA}. The limiting behaviour of \eqref{2timescalesfast} can therefore be studied using the same method as in Section \ref{sec:AsynchSA}. Although assumptions (A1)-(A4) are embedded in (B1)-(B6), we negate the need for (A5) under (B1)-(B6) by immediately proving the corresponding result in Lemma \ref{lemma:Noise} for \eqref{2timescalesfast}.

\begin{Lem:TimescalesError}
	\label{Lem:TimescalesError}
	Under (B2)(b),(B4) and (B5), with probability 1,

	\begin{equation*}
		\lim_{n \rightarrow \infty} \sup_{k} \left\{ \left\| \sum_{j=n}^{k-1} \bar{\gamma}_{j+1} \Gamma_{j+1} \zeta_{j+1}\right\| ; k = n+1, \ldots, \bar{m}_{\gamma}(\bar{\rho}_n + T)\right\} = 0
	\end{equation*}
\end{Lem:TimescalesError}

\begin{proof}
Firstly, let $\bar{\xi}^{\alpha}_n(T) := \sup\{k \geq 0; t \geq \bar{\tau}_{n+k}\}$ and $\bar{\xi}^{\gamma}_n(T) := \sup\{k \geq 0; t \geq \bar{\rho}_{n+k}\}$. By Lemma \ref{randomstepsize}, $\bar{\gamma}_n > \bar{\alpha}$ in the limit and hence $\bar{\xi}^{\alpha}_n(t) \geq \bar{\xi}^{\gamma}_n(t)$. From this, for limiting values of $n$,

\begin{equation}
	\bar{m}_{\alpha}(\bar{\tau}_n + T) = n + \bar{\xi}^{\alpha}_n(T) \geq n + \bar{\xi}^{\gamma}_n(T) = \bar{m}_{\gamma}(\bar{\rho}_n + T). \label{timescalesstepsize}
\end{equation}
Now,

\begin{align*}
	\left\| \sum_{j=n}^{k-1} \bar{\gamma}_{j+1} \Gamma_{j+1} \zeta_{j+1}\right\| & \leq \left\| \sum_{j=n}^{k-1} \bar{\alpha}_{j+1} M_{j+1} V_{j+1}\right\| \\
	& \quad + \left\| \sum_{j=n}^{k-1} \bar{\gamma}_{j+1} N_{j+1} U_{j+1}\right\|.
\end{align*}
Using that the assumptions for Lemma \ref{lemma:Noise} are contained in (B2)(b), (B4) and (B5), the first term converges to zero for $k = n+1, \ldots, \bar{m}_{\alpha}(\bar{\tau}_n + T)$ and with the same arguments, the second term converges to zero for $k = n+1, \ldots, \bar{m}_{\gamma}(\bar{\rho}_n + T)$. Combining this with \eqref{timescalesstepsize} proves the result for $k = n+1, \ldots, \bar{m}_{\gamma}(\bar{\rho}_n + T)$ as required.
\end{proof}

Now we note that assumption (B4) allows us to use identical arguments to those in Lemma \ref{weakconvergence}. Recall the definition of $\Omega^{\delta}_k$ in \eqref{OmegaSetdfn}. Then there exists an $\varepsilon > 0$ such that $\Gamma_n \in \Omega^{\varepsilon}_{K+L}$. Let $\bar{z}(t)$ be the linear interpolation of \eqref{2timescalesfast} in the same manner as the single timescale case in \eqref{interpolatedsA}. Fix $\varepsilon > 0$ and let $\bar{\Psi}(\cdot) := \Omega^{\varepsilon}_{K+L} \cdot \Psi(\cdot)$, $\bar{\Psi}(\cdot): \mathbb{R}^{K+L} \rightarrow \mathbb{R}^{K+L}$. 

\begin{twotimescalestheorem}
	\label{twotimescalestheorem}
		Under assumptions (B1)-(B6), with probability 1, $\bar{z}(t) \in \mathbb{R}^{K+L}$ is an asymptotic pseudo-trajectory of the differential inclusion,
	
	\begin{equation}
		\frac{d z}{\d t} \in \bar{\Psi}(z).
	\end{equation}
\end{twotimescalestheorem}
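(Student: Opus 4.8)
The plan is to recognise that the compact system \eqref{2timescalesfast} has been written in precisely the form of the single-timescale inclusion \eqref{asynchronousSAmatrix}, with the identifications $\bar{\gamma}_{n+1}$ playing the role of the asynchronous step size $\bar{\alpha}_{n+1}$, the combined diagonal matrix $\Gamma_{n+1}$ playing the role of $M_{n+1}$, the mean field $\Psi(\cdot)$ playing the role of $F(\cdot)$, the martingale-type term $\zeta_{n+1}$ playing the role of $V_{n+1}$, and $\epsilon_{n+1}$ playing the role of $d_{n+1}$. Consequently the entire machinery of Section \ref{sec:AsynchSA}, culminating in Theorem \ref{singletimescaletheorem}, applies almost verbatim, and the proof reduces to checking that the hypotheses (A1)--(A4) together with the conclusion of Lemma \ref{lemma:Noise} all transfer to this system under (B1)--(B6). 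First I would verify the analogue of Theorem \ref{APTtheorem}, namely that a linear interpolation of $\{z_n\}$ is an asymptotic pseudo-trajectory of $\frac{dz}{dt} \in \bar{\Psi}(z)$, by confirming the four conditions of that theorem as was done in Section \ref{sec:MainResultProof}.

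The key steps I would carry out in order are as follows. First, establish that $\Psi(\cdot)$ is a stochastic approximation map on the compact set $C \times D$: this is immediate from (B3)(b) (which makes $G$ a stochastic approximation map) together with the fact that the top $K$ coordinates of $\Psi$ are identically the constant $\underline{0}^K$, which trivially satisfies Definition \ref{setcriteria}, so $\bar{\Psi}(\cdot) = \Omega^{\varepsilon}_{K+L}\cdot\Psi(\cdot)$ is again a stochastic approximation map; this delivers condition $(iii)$ of Theorem \ref{APTtheorem}. Second, boundedness of $\{z_n\}$ (condition $(ii)$) follows directly from (B1)(a). Third, the error term $\epsilon_{n+1}$ must be shown to converge to zero and be bounded (condition $(iv)$): its lower block $e_{n+1}\to 0$ by (B1)(b), and its upper block $\frac{\bar{\alpha}_n}{\bar{\gamma}_n}(f_n + d_{n+1})$ tends to zero because $\frac{\bar{\alpha}_n}{\bar{\gamma}_n}\to 0$ almost surely by Lemma \ref{randomstepsize}, while $f_n$ is bounded by the linear-growth condition in Definition \ref{setcriteria}(iii) applied on the compact set $C \times D$ and $d_{n+1}$ is bounded by (B1)(b). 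Fourth, the Kushner--Clark noise condition (condition $(i)$) for $\Gamma_{n+1}\zeta_{n+1}$ is exactly the statement of Lemma \ref{Lem:TimescalesError}, which has already been established, so I would simply invoke it here, together with the analogue of Lemma \ref{MainTheoremLemma} (built on the weak-convergence machinery of Lemma \ref{weakconvergence} and Corollary \ref{weakconvergencecorol1}, which apply by the observation immediately preceding the statement that (B4) yields $\Gamma_n \in \Omega^{\varepsilon}_{K+L}$) to control the discrepancy between $\Gamma_{n+1}$ and its regularised version $\tilde{\Gamma}_{n+1}$.

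Having assembled these four conditions exactly as in the proof of Theorem \ref{singletimescaletheorem} in Section \ref{sec:MainResultProof}, the conclusion follows by the same appeal to Theorem \ref{APTtheorem}, yielding that $\bar{z}(t)$ is an asymptotic pseudo-trajectory of $\frac{dz}{dt}\in\bar{\Psi}(z)$. The one genuinely non-routine point, and the step I expect to be the main obstacle, is the control of the upper block of the effective noise/error: the scaling factor $\frac{\bar{\alpha}_n}{\bar{\gamma}_n}$ appears in both $\zeta_{n+1}$ (multiplying $V_{n+1}$) and $\epsilon_{n+1}$ (multiplying $f_n + d_{n+1}$), and it is the two-timescale separation $\frac{\bar{\alpha}_n}{\bar{\gamma}_n}\to 0$ from Lemma \ref{randomstepsize} that makes the $x$-process appear \emph{quasi-static} on the fast timescale, so that its entire contribution is asymptotically negligible and $\Psi$ genuinely has a vanishing top block in the limiting inclusion. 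I would therefore take care to record precisely where Lemma \ref{randomstepsize} is used both in the error bound and implicitly in the step-size comparison \eqref{timescalesstepsize} underlying Lemma \ref{Lem:TimescalesError}, since this timescale separation is the only feature distinguishing the argument from a direct citation of Theorem \ref{singletimescaletheorem}.
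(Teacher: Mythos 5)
Your proposal is correct and follows essentially the same route as the paper: identify \eqref{2timescalesfast} as an instance of \eqref{asynchronousSAmatrix}, check that (A1)--(A4) transfer under (B1)--(B6), and substitute Lemma \ref{Lem:TimescalesError} for the role of Lemma \ref{lemma:Noise} (exploiting the remark that (A5) is redundant once the noise condition is verified directly), so that Theorem \ref{singletimescaletheorem} applies. The only difference is presentational: the paper cites Theorem \ref{singletimescaletheorem} as a black box, whereas you inline its proof by re-verifying the four conditions of Theorem \ref{APTtheorem}, which amounts to the same argument.
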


\begin{proof}
	We have shown (A1)-(A4) hold for \eqref{2timescalesfast} and this is in the form of \eqref{asynchronousSAmatrix}. Combining this and Lemma \ref{Lem:TimescalesError} we can use Theorem \ref{singletimescaletheorem} to give the result immediately.
\end{proof}

Let the linear interpolations of the two-timescales in \eqref{2asynchronousSA2} be denoted by $\bar{x}(t)$ and $\bar{y}(t)$ respectively analogously to the single timescale case in \eqref{interpolatedsA}.

\begin{twotimescalescorollary}
	\label{twotimescalescorollary}
		Under assumptions (B1)-(B6), with probability 1, the interpolated process
		
		\begin{equation*}
		\big(\bar{x}(t),\bar{y}(t)\big) \rightarrow \Big\{\big(x,\Lambda(x)\big) ; x \in C \Big\} \mbox{ as } t \rightarrow \infty.
		\end{equation*}
\end{twotimescalescorollary}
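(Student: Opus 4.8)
The strategy is to read off from Theorem~\ref{twotimescalestheorem} that the combined interpolation $\bar z = (\bar x,\bar y)$ is an asymptotic pseudo-trajectory of $\dot z \in \bar\Psi(z)$, and then to exhibit the graph of $\Lambda$ as a globally attracting set for this inclusion, so that the convergence result underlying Corollary~\ref{singletimescaleconvergence} (namely \cite[Proposition 3.27]{BenaimHofbauerSorin}) applies verbatim. The first step is to unpack $\bar\Psi$. Since $\Psi(z) = (\underline{0}^K, G(z))^\top$ and $\Omega^{\varepsilon}_{K+L}$ consists of diagonal matrices with entries in $[\varepsilon,1]$, any such diagonal scaling annihilates the first $K$ (zero) coordinates and scales the last $L$ coordinates by an element of $\Omega^{\varepsilon}_L$; hence $\bar\Psi(z) = \{\underline{0}^K\} \times \bar G(x,y)$. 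Consequently every solution of $\dot z \in \bar\Psi(z)$ holds its $x$-coordinate fixed, $x(t)\equiv x(0)$, while the $y$-coordinate obeys the frozen-$x$ inclusion $\dot y \in \bar G\big(x(0),y\big)$.

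Next I would verify that $\mathcal A := \{(x,\Lambda(x)) ; x \in C\}$ is a globally attracting set for $\dot z \in \bar\Psi(z)$, in the sense that it attracts every trajectory issuing from the compact region $C \times D$ in which the iterates lie. Compactness of $\mathcal A$ follows from compactness of $C$ together with the continuity and boundedness of $\Lambda$ guaranteed by (B6), and invariance is immediate because each point of $\mathcal A$ is an equilibrium of $\bar\Psi$ (for fixed $x$, $\Lambda(x)$ is an equilibrium of $\dot y \in \bar G(x,y)$). For the attraction itself, take any solution with $z(0) = (x_0,y_0) \in C\times D$; by the first step $x(t)\equiv x_0$ and $y(\cdot)$ solves $\dot y \in \bar G(x_0,y)$, which by (B6) has $\Lambda(x_0)$ as its unique globally asymptotically stable equilibrium, so $y(t)\to\Lambda(x_0)$ and therefore $z(t)\to(x_0,\Lambda(x_0))\in\mathcal A$.

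With $\mathcal A$ established as a globally attracting set, I would then invoke \cite[Proposition 3.27]{BenaimHofbauerSorin} exactly as in the proof of Corollary~\ref{singletimescaleconvergence}: the pseudo-trajectory $\bar z(t)$ is bounded, since it remains in the compact set $C\times D$ by (B1)(a), so its limit set is internally chain transitive and must lie in the globally attracting set $\mathcal A$. This yields $d\big((\bar x(t),\bar y(t)),\mathcal A\big)\to 0$, which is precisely the claimed convergence to $\{(x,\Lambda(x)) ; x \in C\}$.

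The step I expect to be the genuine obstacle is making the attraction to $\mathcal A$ \emph{uniform} over the continuum of fibres $\{x\}\times D$, $x\in C$, as is needed for $\mathcal A$ to qualify as a globally attracting set rather than merely a set that each individual trajectory approaches. The $x$-direction is neutral under $\bar\Psi$, so $\mathcal A$ is not an asymptotically stable equilibrium in the ordinary sense and one must rule out drift of the limit set across fibres. The uniformity should be recovered from the global asymptotic stability in (B6) combined with the compactness of $C$ and the continuity of both $\Lambda$ and the upper semi-continuous map $\bar G$ inherited from (B3): these give a uniformly convergent parametrised family of fibrewise flows, so that $\sup_{z\in C\times D} d\big(\Phi_t(z),\mathcal A\big)\to 0$. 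Equivalently, one can sidestep the uniformity by working directly at the level of internally chain transitive sets, showing that any such set of $\bar\Psi$ contained in $C\times D$ collapses onto $\mathcal A$ because the only internally chain transitive set of each frozen-$x$ inclusion is its globally asymptotically stable equilibrium $\{\Lambda(x)\}$.
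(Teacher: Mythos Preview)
Your proposal is correct and follows the same route as the paper, just fleshed out: the paper's proof is a one-liner (``Immediate from Lemma~\ref{twotimescalestheorem} and (B6) using the same arguments as Borkar~\cite{BorkarTimescales}''), and what you have written is precisely those arguments made explicit---unpack $\bar\Psi(z)=\{\underline 0^K\}\times\bar G(x,y)$, freeze $x$, apply (B6) fibrewise, and then use the attracting-set machinery (Proposition~3.27 of \cite{BenaimHofbauerSorin}) to conclude. Your discussion of the uniformity obstruction is apt and goes beyond what the paper spells out; the paper simply delegates that point to Borkar's original two-timescale argument.
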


\begin{proof}
	Immediate from Lemma \ref{twotimescalestheorem} and (B6) using the same arguments as Borkar \cite{BorkarTimescales}.
\end{proof}


\subsection{Convergence of the `Slow' Timescale}
\label{sec:slowtimescaleconvergence}

Now since we have a function $F(\cdot,\cdot)$ which depends on two variables, but we are treating one of these as fully calibrated to the other, we have a slightly different framework to that of Benaim, Hofbauer and Sorin \cite{BenaimHofbauerSorin}. Therefore we present a slight variation on their perturbed solution \cite[Definition (II)]{BenaimHofbauerSorin}. Despite this we are still able to show that this is an asymptotic pseudo-trajectory to the desired differential inclusion. Hence the same convergence results still apply. To reduce notation, define $F^{\Lambda}(\cdot): \mathbb{R}^K \rightarrow \mathbb{R}^K$ as $F^{\Lambda}(x) := F\big(x,\Lambda(x)\big)$. Note that under (B3)(a) and (B6) $F^{\Lambda}(\cdot)$ is a stochastic approximation map. Following our previous notation, let 

\begin{equation*}
	\bar{F}^{\Lambda}(x) := \Omega^{\varepsilon}_K \cdot F^{\Lambda}(x).
\end{equation*}

\begin{JointlyPerturbedDefn}
	\label{JointlyPerturbedDefn}
	A continuous function $z:\mathbb{R}^+ \rightarrow \mathbb{R}^K$ is a \emph{jointly perturbed solution} to the differential inclusion,
	
	\begin{equation}
		\frac{d x}{dt} \in \bar{F}^{\Lambda}(x) \label{FbarLambdaInclusion},
	\end{equation} 
	if 
	
	\begin{itemize}
		\item[(i)] $z$ is absolutely continuous.
		\item[(ii)] $\frac{d z(t)}{dt} - V(t) - d(t) \in \bar{F}^{\Lambda}_{\delta(t)}\big(z(t)\big)$, for almost every $t > 0$, and for some bounded $d(t), \delta(t) \rightarrow 0$, where
		
		\begin{align*}
			\bar{F}^{\Lambda}_{\delta}(x) = \{f \in C \hspace{1mm} & ; \|x'-x\|\leq\delta, y' \in D, \\
			& \|y' - \Lambda(x')\| \leq \delta, \inf_{f' \in \bar{F}(x',y')} \|f'-f\| \leq \delta \}.
		\end{align*}
		\item[(iii)] $t \rightarrow V(t)$ is a locally integrable function such that for all $T > 0$
		
		\begin{equation*}
			\lim_{t \rightarrow \infty} \sup_{0 \leq v \leq T} \left\| \int_t^{t+v} V(s) \d s \right\| = 0.
		\end{equation*}
	\end{itemize}
\end{JointlyPerturbedDefn}

The key difference between a jointly perturbed solution and a perturbed solution is that in the original work of Benaim, Hofbauer and Sorin \cite{BenaimHofbauerSorin} (and as in Section \ref{sec:AsynchSA}) the mean field depends on a single variable. In contrast to this, here the mean field depends on two variables. Hence in part (ii) we must allow for perturbations in both variables simultaneously instead of perturbing just the one.

\begin{JointlyPerturbedAPTEquiv}
	\label{JointlyPerturbedAPTEquiv}
	Under assumptions (B1)-(B6) a jointly perturbed solution of \eqref{FbarLambdaInclusion} is also an asymptotic pseudo-trajectory to the flow induced by \eqref{FbarLambdaInclusion}.
\end{JointlyPerturbedAPTEquiv}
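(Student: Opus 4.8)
The plan is to follow the template established by Bena{\"{\i}}m, Hofbauer and Sorin \cite{BenaimHofbauerSorin} for showing that a bounded perturbed solution is an asymptotic pseudo-trajectory, adapting each step to accommodate the fact that the perturbation set $\bar F^\Lambda_\delta$ in Definition \ref{JointlyPerturbedDefn}(ii) perturbs \emph{both} coordinates of $F$ simultaneously, coupled through $\Lambda$. Write $z$ for the jointly perturbed solution; in the present setting $z$ is bounded (by (B1)(a), together with the boundedness and continuity of $\Lambda$ from (B6)), and since $F$ is a stochastic approximation map the value sets $\bar F^\Lambda_\delta(z(t))$ are uniformly bounded. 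I would argue by contradiction: if $z$ is not an asymptotic pseudo-trajectory then, by Definition \ref{APTdefinition}, there exist $\epsilon>0$, $T>0$, times $t_n\to\infty$ and $s_n\in[0,T]$ with $d(z(t_n+s_n),\Phi_{s_n}(z(t_n)))\ge\epsilon$, and the goal is to produce a genuine solution of \eqref{FbarLambdaInclusion} violating this separation.

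First I would isolate the closure property of the perturbed maps that replaces the upper semicontinuity used in the single-variable case: if $x_k\to x$, $\delta_k\to 0$ and $f_k\in\bar F^\Lambda_{\delta_k}(x_k)$ with $f_k\to f$, then $f\in\bar F^\Lambda(x)$. To see this, unwind the definition to obtain $x_k',y_k'$ with $\|x_k'-x_k\|\le\delta_k$, $\|y_k'-\Lambda(x_k')\|\le\delta_k$ and $f_k$ within $\delta_k$ of $\bar F(x_k',y_k')$; continuity of $\Lambda$ (B6) forces $(x_k',y_k')\to(x,\Lambda(x))$, and the upper semicontinuity of $F$ on $C\times D$ from (B3)(a), inherited by $\bar F=\Omega^\varepsilon_K\cdot F$ since $\Omega^\varepsilon_K$ is a fixed compact convex set, then gives $f\in\Omega^\varepsilon_K\cdot F(x,\Lambda(x))=\bar F^\Lambda(x)$. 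This is the one genuinely new ingredient and the place where the coupling between the two coordinates is controlled.

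The remainder is the standard translate-and-compactness argument. For each $n$ set $z^n(s):=z(t_n+s)$ and $W_n(s):=z^n(s)-z(t_n)-\int_0^s V(t_n+u)\d u$; then $\dot W_n(s)=\dot z^n(s)-V(t_n+s)\in d(t_n+s)+\bar F^\Lambda_{\delta(t_n+s)}(z^n(s))$ for almost every $s$, so $\dot W_n$ is uniformly bounded and the $W_n$ are equi-Lipschitz and uniformly bounded on $[0,T]$. Passing to a subsequence, Arzel\`a--Ascoli gives $W_n\to W_\infty$ uniformly on $[0,T]$ and $z(t_n)\to\xi$; since Definition \ref{JointlyPerturbedDefn}(iii) makes $\sup_{0\le s\le T}\|\int_0^s V(t_n+u)\d u\|\to 0$, we obtain $z^n\to z_\infty:=\xi+W_\infty$ uniformly, with $z_\infty(0)=\xi$ and $\dot z_\infty=\dot W_\infty$. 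To identify $z_\infty$ as a solution of \eqref{FbarLambdaInclusion} I would use that $d(t_n+\cdot)\to 0$ and $\delta(t_n+\cdot)\to 0$ uniformly on $[0,T]$, that the noise integral vanishes, and a convexity/upper-semicontinuity closure argument for differential inclusions (Mazur's lemma together with the closure property of the previous paragraph, convexity of $\bar F^\Lambda$), exactly as in \cite{BenaimHofbauerSorin}, to conclude $\dot z_\infty(s)\in\bar F^\Lambda(z_\infty(s))$ for almost every $s$.

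Finally, taking a further subsequence so that $s_n\to s^*\in[0,T]$, uniform convergence gives $z(t_n+s_n)\to z_\infty(s^*)$, while $z_\infty$ being a solution with $z_\infty(0)=\xi$ yields $z_\infty(s^*)\in\Phi_{s^*}(\xi)$. Upper semicontinuity of the set-valued flow $(x,s)\mapsto\Phi_s(x)$ for stochastic approximation maps, as established in \cite{BenaimHofbauerSorin}, then forces $d(z(t_n+s_n),\Phi_{s_n}(z(t_n)))\to 0$, contradicting the assumed separation by $\epsilon$; hence $z$ is an asymptotic pseudo-trajectory. The main obstacle is the closure property isolated in the second paragraph: unlike the single-variable perturbations of \cite{BenaimHofbauerSorin}, here the perturbation of the $x$-argument drags along an independently perturbed $y$-argument tethered to $\Lambda$, so the argument genuinely requires both the continuity of $\Lambda$ from (B6) and the \emph{joint} upper semicontinuity of $F$ from (B3)(a) to guarantee that no extraneous limit directions survive.
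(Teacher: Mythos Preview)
Your proposal is correct and follows essentially the same approach as the paper: the paper's proof is a single sentence stating that the argument is identical to \cite[Theorem 4.2]{BenaimHofbauerSorin}, and you have accurately reconstructed that argument while correctly identifying the one place it needs adaptation---namely, that the closure property $\bar F^\Lambda_{\delta_k}(x_k)\ni f_k\to f\in\bar F^\Lambda(x)$ now requires both the continuity of $\Lambda$ from (B6) and the joint upper semicontinuity of $F$ from (B3)(a) to control the coupled perturbation of the second argument.
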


\begin{proof}
	The proof is identical to the proof of \cite[Theorem 4.2]{BenaimHofbauerSorin} which establishes that a perturbed solution is an asymptotic pseudo-trajectory.
\end{proof}

Define $\tilde{M}(t)$ and $\tilde{M}_n$ in an identical manner to the same terms in Section \ref{sec:MainResultProof}. Corollary \ref{twotimescalescorollary} allows us to consider the updates on the `slow' timescale of the coupled process in \eqref{2asynchronousSA2} given by,

\begin{equation}
	x_{n+1} - x_n - \bar{\alpha}_{n+1} \left[ \bar{V}_{n+1} + \bar{d}_{n+1} \right] \in \bar{\alpha}_{n+1} \bar{F}(x_n,y_n), \label{2timescalesslow}
\end{equation} 
where, as in Section \ref{sec:MainResultProof}, $\bar{V}_{n+1} = M_{n+1} V_{n+1} + [M_{n+1} - \tilde{M}_{n+1}] f_n$ and $\bar{d}_{n+1} = M_{n+1} d_{n+1}$. As with the single timescale framework let $\bar{x}(t)$ be a linear interpolation of \eqref{2timescalesslow} in the same manner as \eqref{interpolatedsA}. That is,

\begin{equation*}
	\bar{x}(\bar{\tau}_n+s) = x_n + s \frac{x_{n+1} - x_n}{\bar{\alpha}_{n+1}}, \quad s \in \left[0,\bar{\alpha}_{n+1}\right). 
\end{equation*}

\begin{xbarJointlyPerturbedSol}
	\label{xbarJointlyPerturbedSol}
	Under assumptions (B1)-(B6), with probability 1, $\bar{x}(\cdot)$ is an asymptotic pseudo-trajectory to the differential inclusion \eqref{FbarLambdaInclusion}.
\end{xbarJointlyPerturbedSol}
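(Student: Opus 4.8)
The plan is to show that $\bar{x}(\cdot)$ is a jointly perturbed solution of \eqref{FbarLambdaInclusion} in the sense of Definition \ref{JointlyPerturbedDefn}, and then to appeal to Lemma \ref{JointlyPerturbedAPTEquiv} to conclude it is an asymptotic pseudo-trajectory. The whole argument runs parallel to the proof of Theorem \ref{singletimescaletheorem}; the single genuinely new feature is that the mean field in \eqref{2timescalesslow} is evaluated at the running pair $(x_n,y_n)$ rather than at $(x_n,\Lambda(x_n))$, and the idea is to absorb this discrepancy into the $\delta$-thickening $\bar{F}^{\Lambda}_{\delta}$ rather than into the noise --- which is precisely why Definition \ref{JointlyPerturbedDefn} was set up to perturb both coordinates. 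First I would unwind the increment: as in \eqref{modifiedx}, the relation \eqref{2timescalesslow} together with $\bar{V}_{n+1}=M_{n+1}V_{n+1}+[M_{n+1}-\tilde{M}_{n+1}]f_n$ and $\bar{d}_{n+1}=M_{n+1}d_{n+1}$ gives $x_{n+1}-x_n=\bar{\alpha}_{n+1}\big[\tilde{M}_{n+1}f_n+\bar{V}_{n+1}+\bar{d}_{n+1}\big]$, so on each interval $t\in[\bar{\tau}_n,\bar{\tau}_{n+1})$ the interpolation \eqref{interpolatedsA} has constant derivative $\tfrac{d\bar{x}(t)}{dt}=\tilde{M}_{n+1}f_n+\bar{V}_{n+1}+\bar{d}_{n+1}$. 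Setting $V(t):=\bar{V}_{n+1}$ and $d(t):=\bar{d}_{n+1}$ there yields $\tfrac{d\bar{x}(t)}{dt}-V(t)-d(t)=\tilde{M}_{n+1}f_n$, which lies in $\bar{F}(x_n,y_n)$ since $\tilde{M}_{n+1}\in\Omega^{\varepsilon}_K$ and $f_n\in F(x_n,y_n)$. Condition (i) is immediate as $\bar{x}(\cdot)$ is piecewise linear, and $d(t)\to 0$ with $d(t)$ bounded follows from (B1)(b) and $\|M_{n+1}\|\le 1$.

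For condition (ii) I would verify $\tilde{M}_{n+1}f_n\in\bar{F}^{\Lambda}_{\delta(t)}(\bar{x}(t))$ by choosing the witnesses $x'=x_n$ and $y'=y_n$ in the definition of $\bar{F}^{\Lambda}_{\delta}$. Then $y'\in D$ by (B1)(a), the approximation term $\inf_{f'\in\bar{F}(x',y')}\|f'-\tilde{M}_{n+1}f_n\|$ vanishes exactly since $\tilde{M}_{n+1}f_n\in\bar{F}(x_n,y_n)$, and it remains to control $\|x'-\bar{x}(t)\|=\|x_n-\bar{x}(t)\|$ and $\|y'-\Lambda(x')\|=\|y_n-\Lambda(x_n)\|$. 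Taking $\delta(t):=\max\{\|\bar{x}(t)-x_n\|,\|y_n-\Lambda(x_n)\|\}$, the first quantity is at most $\|x_{n+1}-x_n\|=\bar{\alpha}_{n+1}\|\tilde{M}_{n+1}f_n+\bar{V}_{n+1}+\bar{d}_{n+1}\|\to 0$ a.s. (the bracket is bounded on the compact sets by the linear growth of $F$, together with $\bar{\alpha}_{n+1}V_{n+1}\to 0$ obtained from the (B5) moment/exponential bounds via Borel--Cantelli, and $\bar{\alpha}_n\to 0$ from Lemma \ref{randomstepsize}), while the second vanishes by Corollary \ref{twotimescalescorollary}, which furnishes exactly $\|y_n-\Lambda(x_n)\|\to 0$. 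Hence $\delta(t)\to 0$ a.s.

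Condition (iii) is the Kushner--Clark statement for $V(\cdot)$: the integral $\int_{\bar{\tau}_n}^{\bar{\tau}_n+v}V(s)\,\d s$ equals the discrete sum $\sum_{i=n}^{k-1}\bar{\alpha}_{i+1}\bar{V}_{i+1}$ (up to a boundary term of size $\bar{\alpha}\to 0$) over the slow-timescale window $k\le\bar{m}_{\alpha}(\bar{\tau}_n+T)$, which I would split, exactly as in the proof of Theorem \ref{singletimescaletheorem} through \eqref{noiseI}--\eqref{asynchII}, into $\sum\bar{\alpha}_{i+1}M_{i+1}V_{i+1}$ and $\sum\bar{\alpha}_{i+1}[M_{i+1}-\tilde{M}_{i+1}]f_i$. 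The first vanishes by Lemma \ref{lemma:Noise} (its hypotheses being contained in (B2)(b),(B4),(B5), as in the slow-timescale part of Lemma \ref{Lem:TimescalesError}) and the second by the two-timescale analogue of Lemma \ref{MainTheoremLemma} under (B2)(b),(B4). This establishes all three conditions, so $\bar{x}(\cdot)$ is a jointly perturbed solution and Lemma \ref{JointlyPerturbedAPTEquiv} completes the proof.

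I expect the main obstacle to be condition (ii), specifically proving $\delta(t)\to 0$. The decay $\|y_n-\Lambda(x_n)\|\to 0$ is the crux and is exactly where the two-timescale separation enters through Corollary \ref{twotimescalescorollary}; some care is also needed because the interpolations $\bar{x}$ and $\bar{y}$ live on the distinct timescales $\{\bar{\tau}_n\}$ and $\{\bar{\rho}_n\}$, so the tracking conclusion of Corollary \ref{twotimescalescorollary} must be read off at the grid of the slow process. Everything else is essentially a transcription of the single-timescale argument of Section \ref{sec:MainResultProof}.
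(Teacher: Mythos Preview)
Your proposal is correct and follows essentially the same approach as the paper: show $\bar{x}(\cdot)$ is a jointly perturbed solution by taking $x'=x_{m(t)}$, $y'=y_{m(t)}$ as witnesses and choosing $\delta(t)$ to dominate both $\|\bar{x}(t)-x_{m(t)}\|$ and $\|y_{m(t)}-\Lambda(x_{m(t)})\|$, then invoke Lemma \ref{JointlyPerturbedAPTEquiv}. The paper uses the sum rather than the max for $\delta(t)$ and simply refers to \cite[Proposition 1.3]{BenaimHofbauerSorin} for the remaining verifications, whereas you spell out conditions (i)--(iii) explicitly; your treatment of (iii) via the split \eqref{noiseI}--\eqref{asynchII} is exactly what the paper does in Section \ref{sec:MainResultProof}, and your observation that $\|y_n-\Lambda(x_n)\|\to 0$ is the crux matches the paper's emphasis.
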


\begin{proof}
	We show that $\bar{x}(\cdot)$ is a jointly perturbed solution of \eqref{FbarLambdaInclusion} and hence by Lemma \ref{JointlyPerturbedAPTEquiv} it is an asymptotic pseudo-trajectory of \eqref{FbarLambdaInclusion}. 
	
	The proof is almost identical to the proof of \cite[Proposition 1.3]{BenaimHofbauerSorin}. The differences come with our choice of $\delta(t) = \|\bar{x}(t)-x_{m(t)}\|+ \|y_{m(t)} - \Lambda(x_{m(t)})\|$, the first term of which will converge to zero as in the original proof and the second of which converges to zero as $\|y_n - \Lambda(x_n)\| \rightarrow 0$ almost surely as a result of the convergence of the `fast' timescale. Clearly from Definition \ref{JointlyPerturbedDefn} part (ii), $F(x_{m(t)},y_{m(t)}) \subset F^{\Lambda}_{\delta(t)}(\bar{x}(t))$. The rest of the proof completes as in \cite[Proposition 1.3]{BenaimHofbauerSorin}.
\end{proof}

\begin{twotimescaleconvergence}
	\label{twotimescaleconvergence}
	If there is a globally attracting set, $A$, for the differential inclusion \eqref{FbarLambdaInclusion} and assumptions (B1)-(B6) are satisfied, then the two-timescale iterative process \eqref{2asynchronousSA} will almost surely converge to $A$.
\end{twotimescaleconvergence}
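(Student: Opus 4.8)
The plan is to mirror exactly the derivation of Corollary~\ref{singletimescaleconvergence} from the single-timescale theory, using the two-timescale asymptotic pseudo-trajectory result already established. The substantive work has been done in Theorem~\ref{xbarJointlyPerturbedSol}, so the argument reduces to feeding that result into the limit-set machinery of Bena{\"{\i}}m, Hofbauer and Sorin.

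First I would recall that, under (B1)--(B6), Theorem~\ref{xbarJointlyPerturbedSol} gives, with probability~1, that the slow-timescale interpolation $\bar{x}(\cdot)$ is an asymptotic pseudo-trajectory of the differential inclusion~\eqref{FbarLambdaInclusion}. Assumption (B1)(a) confines $x_n$, and hence $\bar{x}(t)$, to the compact set $C$, so the trajectory is bounded and its limit set is nonempty and compact; this precompactness is precisely the hypothesis required to invoke the convergence theory of \cite{BenaimHofbauerSorin}.

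Next, since $A$ is assumed to be a globally attracting set for the flow induced by \eqref{FbarLambdaInclusion}, I would apply \cite[Proposition 3.27]{BenaimHofbauerSorin} directly to the bounded asymptotic pseudo-trajectory $\bar{x}(\cdot)$, exactly as was done in the single-timescale setting. This yields $\bar{x}(t) \rightarrow A$ as $t \rightarrow \infty$ with probability~1. Because the interpolation agrees with the iterates at the grid points, $\bar{x}(\bar{\tau}_n) = x_n$, and so $x_n \rightarrow A$ almost surely. Combining this with Corollary~\ref{twotimescalescorollary}, which states that $\big(\bar{x}(t),\bar{y}(t)\big) \rightarrow \{(x,\Lambda(x)); x \in C\}$, the fast component is slaved to the slow one, so the joint process satisfies $(x_n,y_n) \rightarrow \{(x,\Lambda(x)); x \in A\}$, which is the asserted convergence of \eqref{2asynchronousSA} to $A$.

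I expect no serious obstacle beyond bookkeeping, since establishing the jointly perturbed solution property and the resulting asymptotic pseudo-trajectory characterisation was completed in Theorem~\ref{xbarJointlyPerturbedSol}. The only points requiring care are verifying that the hypotheses of \cite[Proposition 3.27]{BenaimHofbauerSorin} are genuinely met --- namely the boundedness of $\bar{x}(\cdot)$, supplied by (B1)(a), and the interpretation of $A$ as a globally attracting set for the \emph{reduced} slow dynamics \eqref{FbarLambdaInclusion} rather than for the full joint flow. Reconciling convergence to $A \subset \mathbb{R}^K$ in the slow space with convergence of the full pair $(x_n,y_n)$ is then handled cleanly by Corollary~\ref{twotimescalescorollary}.
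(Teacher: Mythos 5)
Your proposal is correct and follows essentially the same route as the paper, which proves this corollary by combining Theorem~\ref{xbarJointlyPerturbedSol} and Corollary~\ref{twotimescalescorollary} with \cite[Proposition 3.27]{BenaimHofbauerSorin}. Your additional remarks on verifying boundedness via (B1)(a) and on interpreting $A$ as attracting for the reduced slow inclusion \eqref{FbarLambdaInclusion} are exactly the bookkeeping the paper leaves implicit.
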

	
\begin{proof}
	Immediate by combining Corollary \ref{twotimescalescorollary} and Theorem \ref{xbarJointlyPerturbedSol} with \cite[Proposition 3.27]{BenaimHofbauerSorin}.
\end{proof}

\begin{nonasynchronous}
	It should be clear that the methods in this chapter can be applied to a standard, synchronous, stochastic approximation with set-valued mean fields. In this case (B4) is trivially satisfied, (B2)(b) can be removed and the use of the sets $\Omega^{\varepsilon}_K$, $\Omega^{\varepsilon}_L$ can be replaced by a single $K \times K$ and $L \times L$ identity matrix respectively. This means that $\bar{F}(x,y) = F(x,y)$ and similarly $\bar{G}(x,y) = G(x,y)$.
\end{nonasynchronous}

\begin{setvaluedlimit}
	This framework allows for the `fast' timescale to converge to a set of limit points. Assumption (B6) can be replaced with the following
	
	\begin{itemize}
		\item[(B6$'$)] For all $x \in C$ the differential inclusion,
	
		\begin{equation*}
			\frac{d y}{d t} \in \bar{G}(x,y) 
		\end{equation*}
		has a globally attracting set, $\Lambda(x)$ where	$\Lambda(\cdot) : \mathbb{R}^K \rightarrow \mathbb{R}^L$ is an upper semi-continuous set valued map, such that for all $x \in C$, $\Lambda(x)$ is compact, convex and non-empty. In addition, for all $x \in C$ and for all, $F(x,\Lambda(x))$ is a convex set-valued map, i.e., for all $\lambda, \lambda' \in \Lambda(x)$,
		
		\begin{equation*}
			\alpha F(x, \lambda) + (1 - \alpha) F(x, \lambda') \subset  F\big(x, \alpha \lambda + (1-\alpha) \lambda'\big), \quad \mbox{ for all } \alpha \in [0,1].
		\end{equation*}
	\end{itemize}
	
	Under (B6$'$) $F^{\Lambda}(x)$ will still be a stochastic approximation map and, although the details of the arguments in Section \ref{sec:slowtimescaleconvergence} change slightly, the method remains almost identical.
\end{setvaluedlimit}


\section{An Application: Learning in a Markov Decision Process}
\label{sec:MDPApplication}

In this section we provide an example where our two-timescale asynchronous stochastic approximation approach is needed. The algorithm is an actor-critic style algorithm based upon estimating rewards and playing an $\varepsilon$-greedy best response to these estimates. Similarly to Konda and Borkar \cite{KondaBorkar} we make assumptions on the Markov decision process (MDP) and the coupled learning algorithm separately. These correspond to (B1)-(B6).

Firstly, we begin by outlining a suitable infinite horizon, discounted reward MDP described by the tuple $\left\langle \mathcal{S},\mathcal{A},P,r,\beta\right\rangle$. $\mathcal{S}$ is the state space of the MDP, $\mathcal{A}$ is the set of actions of the decision-maker (agent), and $\mathcal{A}(s)$ is the set of actions available to the agent in state $s \in \mathcal{S}$. $P$ represents the form of the stochastic transitions and in what is to follow we take $P_{ss'}(a)$ to denote probability of transitioning from state $s \in \mathcal{S}$ to $s' \in \mathcal{S}$ when the agent has selected action $a \in \mathcal{A}(s)$. The reward to the decision maker for selecting action $a \in \mathcal{A}(s)$ is denoted by $r(s,a)$, and $\beta \in (0,1)$ is the discount factor. 

Let $s_n \in \mathcal{S}$ and $a_n \in \mathcal{A}(s_n)$ be the state and the action, respectively, selected by the decision-maker at iteration $n$. Assume that at every iteration the agent observes the state of the process and a noisy version of the reward received from the action they have chosen, denoted by $R_n$. If $\mathcal{F}_n := \{a_m,s_m,R_{m-1} ; m=1,\ldots,n\}$ then $\mathbb{E}[R_n|\mathcal{F}_n] =  r(s_n,a_n)$, and we assume that $R_n$ has a finite variance. Let $K := |\mathcal{S}|$ and $\Delta\big(A(s)\big)$ represent the set of probability distributions over $\mathcal{A}(s)$. Then we denote the combination of $K$ probability distributions as $\Delta_K := \Delta\big(\mathcal{A}(1)\big)\times \ldots \times \Delta\big(\mathcal{A}(K)\big)$. A strategy for state $s \in \mathcal{S}$ is denoted by $\pi(s) \in \Delta\big(\mathcal{A}(s)\big)$ and let $\pi := \big(\pi(1),\ldots,\pi(K)\big) \in \Delta_K$ be a strategy over all states. $\pi(s,a)$ is defined as the probability that action $a$ is taken in state $s$. Players start with a strategy $\pi_0$; the MDP begins in a random state $s_1 \in \mathcal{S}$ and the decision-maker selects an action $a_1$ from $\pi_0$. The agent wishes to find a strategy, $\pi$, to maximise their expected discounted reward,

\begin{equation*}
	\mathbb{E}\Big[\sum_{n=1}^{\infty} \beta^n r(s_n,a_n)\Big].
\end{equation*}
Define $V^{\pi}(s)$ for all $s \in \mathcal{S}$ in the following manner,

\begin{equation}
	V^{\pi}(s) = \sum_{a \in \mathcal{A}(s)} \pi(s,a) \left[ r(s,a) + \omega \sum_{s' \in S} P_{ss'}(a) V^{\pi}(s) \right]. \label{valuefunctiondfn}
\end{equation}
For all $s \in S$ there exists a maximum value of $V^{\pi}(s)$ for all $\pi \in \Delta_K$ \cite{Bellman}. Let $\tilde{\pi} \in \Delta_K$ be a strategy such that $V^{\tilde{\pi}}(s)$ is maximal in \eqref{valuefunctiondfn} for all $s \in \mathcal{S}$. $\tilde{\pi}$ is known as an \emph{optimal strategy} for the MDP. Again we use a subscript $n$ on $\pi(s,a)$, $\pi(s)$ and $\pi$ to represent the strategy of the player at iteration $n$. 

To reduce the complexity of the notation we assume that every state has the same number of actions which is denoted by $A := |\mathcal{A}(s)|$ for any $s \in \mathcal{S}$ and we let $m := KA$. Note that having a different number of actions in each state does not affect the validity of this approach but does make the notation more cumbersome. 

We begin by placing restrictions on the learning rates and the MDP. In this algorithm we select learning rates $\{\alpha(n)\}_{n \in \mathbb{N}}$ and $\{\gamma(n)\}_{n \in \mathbb{N}}$ which satisfy (B2) and (B5)(a) and use two asynchronous counters,

\begin{equation*}
	\nu_n(s) := \sum_{i=1}^n \mathbb{I}_{\{s_i=s\}}, \quad \phi_n(s,a) := \sum_{i=1}^n \mathbb{I}_{\{(s_i,a_i)=(s,a)\}}.
\end{equation*}
In addition we assume that the set of transition probabilities, $\{P_{ss'}(a)\}_{s,s',a}$, form an aperiodic, irreducible, positive recurrent Markov chain. Moreover, at every iteration when the MDP is in the state $s$ we enforce that every action $a \in \mathcal{A}(s)$ is played with a non-zero minimum probability and that this holds for every $s \in \mathcal{S}$. Therefore for all $s \in \mathcal{S}$, $a \in \mathcal{A}(s)$ and $n \geq 0$, $\pi_n(s,a) \geq \varepsilon$ for some $\varepsilon > 0$. 

Finally, before directly analysing the algorithm we present a method for verifying the global convergence of a standard differential inclusion in the form of \eqref{DI}.

\begin{LyapunovDfn}
	\label{LyapunovDfn}
	Let $A \subset \mathbb{R}^K$. A continuous function $W: \mathbb{R}^K \mapsto \mathbb{R}$ is a \emph{Lyapunov function} for the differential inclusion \eqref{DI} if it satisfies the following criteria.
	
	\begin{itemize}
		\item[(i)] $W(y) < W(x)$ for all $x \in \mathbb{R}^K \backslash A$, $y \in \Phi_t(x)$, $t > 0$.
		\item[(ii)] $W(y) \leq W(x)$ for all $x \in A$, $y \in \Phi_t(x)$, $t > 0$.
	\end{itemize}
\end{LyapunovDfn}	

Finding a Lyapunov function proves the global convergence of set-valued dynamical systems in the form of \eqref{DI}, a concept which is fully described by Bena{\"{\i}}m, Hofbauer and Sorin \cite{BenaimHofbauerSorin}.

Because we include the asynchronicity with the mean field the associated differential inclusions associated with our algorithm will be in the form,

\begin{equation}
	\dot{x} \in \Omega^{\delta}_k \cdot \Big[ h(x)-x \Big], \label{BasicSetDI}
\end{equation}
where $h(\cdot)$ is set-valued. Let $h'(x) := \Omega^{\delta}_k \cdot [ h(x)-x ]$. We state slight modifications a result of Konda and Borkar \cite{KondaBorkar} and a result of Bena{\"{\i}}m, Hofbauer and Sorin \cite{BenaimHofbauerSorin2} to allow us to easily prove the convergence of differential inclusions in the form of \eqref{BasicSetDI}. We do not state proofs for either of these as they are straightforward extensions of \cite[Lemma 5.4]{KondaBorkar} and \cite[Theorem 3.10]{BenaimHofbauerSorin2}. Let

\begin{equation*}
	K_{s,a}(\pi) := r(s,a) + \beta \sum_{s' \in S} P_{ss'}(a)V^{\pi}(s') - V^{\pi}(s),
\end{equation*}
and $K_s(\pi)$ be the $A$-vector of these terms for all $a \in \mathcal{A}(s)$. For all $s \in \mathcal{S}$, $a \in \mathcal{A}(s)$. In addition, let $\nabla_{\pi} V(s)$ denote taking the partial derivative of $V$ with respect to $\pi$. 

\begin{K-BLyapunov}
	\label{K-BLyapunov}
	Let $G$ be a vector field on $\Delta_K$ conditional on $\pi$. If
	
	\begin{equation*}
		\left\langle G_{s}(\pi), K_{s}(\pi)\right\rangle \geq 0,
	\end{equation*}
	then
	\begin{equation*}
		\left\langle G(\pi), \nabla_{\pi}V^{\pi}(s) \right\rangle \geq \left\langle G_{s}(\pi), K_{s}(\pi)\right\rangle \geq 0.
	\end{equation*}
\end{K-BLyapunov}

\begin{DIConvergence}
	\label{DIConvergence}
	Assuming that $h'(x)$ is a stochastic approximation map and there exists a positive definite function $W \in C^1(\mathbb{R}^k,\mathbb{R})$ such that if $\Lambda = \{W(x)=0; x \in C\}$ and $x \in C \backslash \Lambda$ then for any $\omega \in \Omega^{\delta}_k$, $x' \in h(x)$,
	
	\begin{equation*}
		\left\langle \nabla_x W(x),\omega(x'-x)\right\rangle \leq 0. 
	\end{equation*}
	Then $W(\cdot)$ is a Lyapunov function for \eqref{BasicSetDI} with attracting set $\Lambda$.
\end{DIConvergence}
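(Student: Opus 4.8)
The plan is to check directly that $W$ fulfils the two defining properties of a Lyapunov function in Definition~\ref{LyapunovDfn}, taken with respect to the flow $\Phi$ induced by the set-valued dynamics \eqref{BasicSetDI} and with $A=\Lambda$. Since $h'(\cdot)$ is assumed to be a stochastic approximation map, \eqref{BasicSetDI} has the form $\dot x\in h'(x)$ and the solution and flow notions of Section~\ref{sec:BackgroundDI} apply verbatim with $F=h'$, so $\Phi_t$ is well defined and every trajectory is absolutely continuous. Throughout I would work within the compact set $C$, on which positive-definiteness means $W(x)>0$ for $x\in C\setminus\Lambda$ and $W\equiv 0$ on $\Lambda=\{W(x)=0; x\in C\}$.

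The first and main computational step is a chain-rule Lyapunov estimate. Let $x(\cdot)$ be any solution of \eqref{BasicSetDI} with $x(0)=x_0\in C$. Because $W\in C^1$ and $x(\cdot)$ is absolutely continuous, $t\mapsto W(x(t))$ is absolutely continuous with $\tfrac{d}{dt}W(x(t))=\langle \nabla_x W(x(t)),\dot x(t)\rangle$ for almost every $t$. By the set-multiplication convention of Section~\ref{sec:Notation}, every element of $h'(x(t))=\Omega^{\delta}_{k}\cdot[h(x(t))-x(t)]$ is of the form $\omega(x'-x(t))$ with $\omega\in\Omega^{\delta}_{k}$ and $x'\in h(x(t))$. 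Since $\dot x(t)\in h'(x(t))$ for almost every $t$, the hypothesis applies directly whenever $x(t)\in C\setminus\Lambda$ and yields $\tfrac{d}{dt}W(x(t))\leq 0$. Integrating, $t\mapsto W(x(t))$ is non-increasing along the trajectory, which immediately gives property~(ii) of Definition~\ref{LyapunovDfn} for $x_0\in\Lambda$, and the non-strict version of~(i) for $x_0\in C\setminus\Lambda$.

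The remaining step is to upgrade the monotonicity to the \emph{strict} inequality required in~(i): for $x_0\in C\setminus\Lambda$ and $y\in\Phi_t(x_0)$, $t>0$, one needs $W(y)<W(x_0)$. Here $W(x_0)>0$ by positive-definiteness. Suppose some solution had $W(x(T))=W(x_0)$ for $T>0$; then by monotonicity $W(x(\cdot))$ is constant on $[0,T]$, forcing $\langle \nabla_x W(x(t)),\dot x(t)\rangle=0$ almost everywhere and confining $x(\cdot)$ to the positive level set $\{W=W(x_0)\}$, which is disjoint from $\Lambda$. The task is then to show that no solution of \eqref{BasicSetDI} can remain on such a set, a LaSalle-type invariance statement; this is precisely the content supplied for set-valued dynamics by \cite[Theorem 3.10]{BenaimHofbauerSorin2}, of which this result is the announced extension. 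I would therefore reduce strictness to that theorem, verifying its hypotheses ($h'$ a stochastic approximation map, $W$ positive-definite and $C^1$ with $\Lambda=W^{-1}(0)\cap C$, and the infinitesimal decrease on $C\setminus\Lambda$), after which $\Lambda$ is identified as the attracting set for \eqref{BasicSetDI}.

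The main obstacle is exactly this strict decrease. The infinitesimal hypothesis delivers only $\langle\nabla_x W,\,\omega(x'-x)\rangle\leq 0$, which alone secures Lyapunov stability but not attraction; excluding trajectories that idle on a positive level set is the genuinely delicate point and is what forces an appeal to the invariance machinery rather than a one-line argument. A complementary, more self-contained route would exploit the structure of $\Omega^{\delta}_{k}$: since the inequality must hold for \emph{every} diagonal $\omega$ with entries in $[\delta,1]$, choosing $\omega$ coordinatewise extremal constrains each coordinate's contribution to $\langle \nabla_x W(x),\omega(x'-x)\rangle$, and one may try to show this aggregate cannot vanish off $\Lambda$. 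Nevertheless, the clean and intended argument is the reduction to \cite[Theorem 3.10]{BenaimHofbauerSorin2}, so I would present the chain-rule estimate and monotonicity in full and defer the strict-decrease/invariance step to that reference.
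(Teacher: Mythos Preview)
Your proposal is correct and matches the paper's approach: the paper gives no proof at all, stating only that the result is a straightforward extension of \cite[Theorem 3.10]{BenaimHofbauerSorin2}, which is exactly the reference you reduce to after your chain-rule monotonicity computation. Your write-up is therefore more detailed than the paper's, and you correctly isolate the only non-trivial point (strict decrease off $\Lambda$ from a $\leq 0$ hypothesis) as the place where the invariance machinery of that cited theorem is genuinely needed.
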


Using Lemma \ref{K-BLyapunov} and Theorem \ref{DIConvergence} we show the convergence of the following algorithm.


\subsection{The Algorithm}
\label{sec:alg}

This algorithm cannot be studied in the framework of Konda and Borkar \cite{KondaBorkar} due to the Lipschitz continuous restriction they place on the mean fields of the coupled stochastic approximations. In this work we have relaxed this condition allowing the study of process which are based on the best response. Firstly, if $\{Q(s,a)\}_{s,a}$ is the set of action values for a MDP, let $b_s(Q) := \arg\max_{a \in \mathcal{A}(s)} \{Q(s,a)\}$ be the best response set to $\{Q(s,a)\}_{s,a}$ for state $s \in \mathcal{S}$. Use the following coupled algorithm to estimate action values and the optimal strategy for all $s \in \mathcal{S}$ and $a \in \mathcal{A}(s)$,

\begin{align}
	Q_{n+1}\big(s,a\big) & = Q_{n}\big(s,a\big) + \gamma\big(\phi_{n+1}(s,a)\big) \mathbb{I}_{\{(s,a) = (s_{n+1},a_{n+1})\}} \label{Alg2:QUpdates} \\
	& \quad\quad\quad\quad \times \Big[R_{n+1} + \beta V_n(s_{n+2}) - Q_{n}\big(s,a\big)\Big], \notag \\
	\pi_{n+1}(s) & = \pi_n(s) + \mu(\nu_{n+1}(s)) \mathbb{I}_{\{s = s_{n+1}\}} \Big[b_s(Q_{n}) -  \pi_n(s) \Big],	\label{Alg2:PiUpdates}
\end{align}
where $V_n(s) = \sum_{a \in \mathcal{A}(s)} \pi_n(s,a) Q_n(s,a)$. The action $a_n$ is selected using an $\varepsilon$-greedy version of the strategy $\pi_n$. For all $s \in \mathcal{S}$ and $a \in \mathcal{A}(s)$ let

\begin{equation*}
	\pi_n^{\varepsilon}(s,a) := \pi_n(s,a) (1 - A \varepsilon) + \varepsilon.
\end{equation*}
Then $\mathbb{P}(a_{n+1}= a|s_{n+1} =s) = \pi_n^{\varepsilon}(s,a)$. We must verify that (B1)-(B6) hold for this algorithm. We do not directly verify that (B1) holds for this algorithm, but as pointed out in Section \ref{sec:SingleAssumptions} methods to do so are discussed elsewhere. Furthermore the choice of learning parameters verifies (B2) and the choice of mean field in \eqref{Alg2:QUpdates} and \eqref{Alg2:PiUpdates} immediately give that (B3) and (B5) hold. 

For this algorithm we have $\bar{J} = \{(s,a) ; s \in \mathcal{S}, a \in \mathcal{A}(s)\}$ and $\bar{I} = \{s ; s \in \mathcal{S}\}$. This gives that $\bar{H} = \big\{\big((s,a),s\big) ; s \in \mathcal{S}, a \in \mathcal{A}(s)\big\}$ and for simplicity we write,

\begin{equation*}
	\bar{H} = \big\{(s,a) ; s \in \mathcal{S}, a \in \mathcal{A}(s)\big\}.
\end{equation*}
Following the notation of Section \ref{sec:TwoAsynchSA} we have that $z_n = (Q_n,\pi_n)$. With $\mathcal{H}_n,\mathcal{H}_{n+1} \in \bar{H}$ such that $\mathcal{H}_n = (s,a)$, $\mathcal{H}_{n+1} = (s',a')$, then $Q_{(\mathcal{H}_n,\mathcal{H}_{n+1})}(z) = \pi_n^{\varepsilon}(s',a') P_{ss'}(a)$. This shows that (B4)(a) is satisfied. Again using the notation of Section \ref{sec:TwoAsynchSA} the set of transition probabilities are denoted,

\begin{equation*} 
	\Big\{Q_{(\mathcal{H}_n, \mathcal{H}_{n+1})}(z_n)\Big\}_{\mathcal{H}_n,\mathcal{H}_{n+1},z_n} = \Big\{\pi^{\varepsilon}_n(s',a')P_{ss'}(a)\Big\}_{s,s',a,a'}.
\end{equation*}
By assumption on $\{P_{ss'}(a)\}_{s,s',a}$ we have that (B4)(b) is satisfied. Since $\pi^{\varepsilon}_n(s',a') P_{ss'}(a)$ is a continuous function of $\pi^{\varepsilon}_n$, which similarly is a continuous function of $\pi_n \in z_n$, (B4)(c) is satisfied. 

A consequence of (B4) from Appendix \ref{MarkovUpdates} is that in the limit every state of the MDP is visited a minimum proportion of the time, $\eta$, for some $\eta > 0$. Similarly, by placing the restriction that every action is selected with at least probability $\varepsilon$ for some $\varepsilon > 0$ then every state, action pair is taken a minimum proportion of the time, $\eta'$, for some $\eta' > 0$. Using the approach of Section \ref{sec:TwoAsynchSA} we do not explicitly need to know the values $\eta$ and $\eta'$ as we verify convergence for every $\eta, \eta' > 0$.

Finally, we need to verify (B6). Define 

\begin{equation*}
	Q^{\pi}(s,a) := r(s,a) + \beta \sum_{s' \in \mathcal{S}} P_{ss'}(a) V^{\pi}(s),
\end{equation*}
and let $Q^{\pi}(s)$ be the $A$-vector containing $Q^{\pi}(s,a)$ for all $a \in \mathcal{A}(s)$. Let $h(\cdot,\cdot): \Delta_K \times \mathbb{R}^m \rightarrow \mathbb{R}^m$ be defined such that,

\begin{equation*}
	h_{s,a}(\pi,Q) :=  r(s,a) + \beta \sum_{s' \in \mathcal{S}} P_{ss'}(a) V(s'),
\end{equation*}
where $V(s) = \sum_{a \in \mathcal{A}(s)} \pi(s,a)Q(s,a)$. Let $h_s(\pi,Q)$ be the $A$-vector of the $h_{s,a}(\pi,Q)$ terms which means that $h(\pi,Q^{\pi}) = Q^{\pi}$. For fixed $\pi \in \Delta_K$ consider the differential inclusion 

\begin{equation}
	\dot{Q}_{s}(t) \in \Omega_{A}^{\eta'} \cdot \Big[h_s\big(\pi,Q_{s}(t)\big) - Q_{s}(t)\Big], \quad \mbox{for all } s \in \mathcal{S}, \label{Alg2:QInclusion}
\end{equation}
\begin{algorithm2QValueConvvergence}
	\label{algorithm2QValueConvvergence}
	$Q^{\pi}(s)$ is the unique asymptotically stable equilibrium to \eqref{Alg2:QInclusion}.	
\end{algorithm2QValueConvvergence}

\begin{proof}
	$h_s\big(\pi,Q_{s}(t)\big)$ is a contraction mapping \cite{SzepesvariLittman}, \cite{Tsitsiklis}. Hence for any fixed $\omega \in \Omega_{A}^{\eta'}$, $Q_{s}(t) \rightarrow Q^{\pi}(s)$. Combining this with the note by Borkar \cite[Chapter 7.4]{BorkarBook} proves the claim. 
\end{proof}

From this it follows that the values in $\{Q_n(s,a)\}_{s,a}$ converge to the true action values for the strategy $\pi$. These values are Lipschitz continuous in $\pi$, which ensures (B6) holds. Hence, Theorem \ref{xbarJointlyPerturbedSol} holds and the linear interpolation of the iterative process in \eqref{Alg2:PiUpdates} is an asymptotic pseudo-trajectory to the differential inclusion

\begin{equation}
	\dot{\pi}_s(t) \in \Omega^{\eta} \cdot \Big[b_s(Q^{\pi(t)}) - \pi_s(t)\Big], \quad \mbox{for all } s \in \mathcal{S}, \label{Alg2:PiInclusion}
\end{equation}
for some $\eta >0$. For a particular action $a \in \mathcal{A}(s)$, let $\pi_{s,a}(t)$ and $\dot{\pi}_{s,a}(t)$ represent the individual components of $\pi_s(t)$ and $\dot{\pi}_s(t)$ respectively, whilst $\pi(t)$ and $\dot{\pi}(t)$ are the $K \times A$ matrices containing all of the $\pi_{s,a}(t)$ and $\dot{\pi}_{s,a}(t)$ elements.

With assumptions (B1)-(B6) satisfied all that remains is to show that the differential inclusion \eqref{Alg2:PiInclusion} has a globally attracting set. Corollary \ref{twotimescaleconvergence} will then provide the convergence result of the coupled algorithm in \eqref{Alg2:QUpdates} and \eqref{Alg2:PiUpdates}. We note that \eqref{Alg2:PiInclusion} is in the form of \eqref{BasicSetDI} and hence we use Theorem \ref{DIConvergence} to prove the global convergence.

\begin{algorithm2BRConvergence}
	\label{algorithm2BRConvergence}
	For each $s \in \mathcal{S}$ fix an $\omega_s \in \Omega^{\eta}$. Take any strategy $\underline{\pi} \in \Delta_K$ and for each $s \in \mathcal{S}$ select $\underline{\tilde{\pi}}_s \in b_s(Q^{\underline{\pi}})$. With
	
	\begin{equation*}
		\underline{\dot{\pi}}_s := \omega_s \big[\underline{\tilde{\pi}}_s - \underline{\pi}_s\big], \quad \mbox{for all } s \in \mathcal{S}.
	\end{equation*}
	Then for any $s \in \mathcal{S}$,
		
	\begin{equation*}
		\left\langle \nabla_{\underline{\pi}} V^{\underline{\pi}}(s), \underline{\dot{\pi}}\right\rangle \geq 0.
	\end{equation*}
\end{algorithm2BRConvergence}

\begin{proof}
	Fix a strategy $\underline{\pi} \in \Delta_K$ and for all $s \in \mathcal{S}$ fix $\omega_s \in \Omega^{\eta}$ and take $\underline{\tilde{\pi}}_s \in b_s(Q^{\underline{\pi}})$. Let 
	
	\begin{equation*}
		\underline{\dot{\pi}}_s = \omega_s \big[\underline{\tilde{\pi}}_s - \underline{\pi}_s\big], \mbox{for all } s \in \mathcal{S}.
	\end{equation*}
	Consider,
	
	\begin{align*}
		\left\langle \underline{\dot{\pi}}_s, K_{s}(\underline{\pi})\right\rangle & = \omega_s \left[\sum_{a \in \mathcal{A}(s)} \underline{\tilde{\pi}}_s(a) Q^{\underline{\pi}}(s,a) - \sum_{a \in \mathcal{A}(s)} \underline{\pi}_s(a) Q^{\underline{\pi}}(s,a) \right] \\
		& \quad - \omega_s \left[\sum_{a \in \mathcal{A}(s)} \underline{\tilde{\pi}}_s(a) V^{\underline{\pi}}(s) - \sum_{a \in \mathcal{A}(s)} \underline{\pi}_s(a) V^{\underline{\pi}}(s) \right].
	\end{align*}
	The second term here is zero since $\sum_{a \in \mathcal{A}(s)} \rho(s,a)V^{\pi}(s) = V^{\pi}(s)$ for any $\rho(s) \in \Delta(\mathcal{A}(s))$. The first term is clearly positive by the definition of the best response. Hence
	
	\begin{equation*}
		\left\langle \underline{\dot{\pi}}_s, K_{s}(\underline{\pi})\right\rangle \geq 0.
	\end{equation*}
	Then using Lemma \ref{K-BLyapunov} gives the desired result.
\end{proof}

Now we use Theorem \ref{DIConvergence} and Lemma \ref{algorithm2BRConvergence} to produce a Lyapunov function for \eqref{Alg2:PiInclusion} and hence prove the global convergence of the second algorithm given by \eqref{Alg2:QUpdates} and \eqref{Alg2:PiUpdates}. Let $\Lambda := \{\pi; \pi \mbox{ an optimal strategy}\}$.

\begin{algorithm2convergence}
	\label{algorithm2convergence}
	Fix $\tilde{\pi}$ as an optimal strategy. Then
	
	\begin{equation*}	
		W(\pi) = \sum_{s \in \mathcal{S}} \Big[V^{\tilde{\pi}}(s) - V^{\pi}(s)\Big],
	\end{equation*}	
	is a Lyapunov function for the differential inclusion \eqref{Alg2:PiInclusion} and $\Lambda$ is a globally attracting set for \eqref{Alg2:PiInclusion}.
\end{algorithm2convergence}

\begin{proof}
	We prove the claim by applying Theorem \ref{DIConvergence} to \eqref{Alg2:PiInclusion}. Clearly $W(\cdot)$ is positive semi-definite since for all $s \in S$, and any $\pi \in \Delta_K$, $V^{\tilde{\pi}}(s) \geq V^{\pi}(s)$, with equality for $\pi \in \Lambda$. 
	
	To prove the condition of Theorem \ref{DIConvergence} we note that for any fixed $t >0$,
	
	\begin{equation*}
		\left\langle \nabla_{\pi(t)} W\big(\pi(t)\big), \dot{\pi}(t)\right\rangle = - \sum_{s \in \mathcal{S}} \left\langle \nabla_{\underline{\pi}} V^{\underline{\pi}}(s), \underline{\dot{\pi}}\right\rangle, 
	\end{equation*}
	for some strategy $\underline{\pi} \in \Delta_K$, and for each $s \in \mathcal{S}$, a fixed $\omega_s \in \Omega^{\eta}$ and $\underline{\tilde{\pi}}_s \in b_s(Q^{\underline{\pi}})$ such that
	
	\begin{equation*}
		\underline{\dot{\pi}}_s = \omega_s \big[\underline{\tilde{\pi}}_s - \underline{\pi}_s\big], \mbox{for all } s \in \mathcal{S}.
	\end{equation*}
	Using Lemma \ref{algorithm2BRConvergence} immediately gives that $\left\langle \nabla_{\pi(t)} W\big(\pi(t)\big), \dot{\pi}(t)\right\rangle < 0$ for all $\pi(t) \in \Delta_k \backslash \Lambda$ and for any $t >0$. Applying Theorem \ref{DIConvergence} proves the claim.
\end{proof}
	
\begin{algorithm2Corollary}
	\label{algorithm2Corollary}
	The coupled process $(Q_n, \pi_n)$ from \eqref{Alg2:QUpdates} and \eqref{Alg2:PiUpdates} converges to the limit $(Q^{\tilde{\pi}}, \tilde{\pi})$, where $\tilde{\pi}$ is an optimal strategy and $\{Q^{\tilde{\pi}}(s,a)\}_{s,a}$ is the set of associated action values.
\end{algorithm2Corollary}

\begin{proof}
	Lemma \ref{algorithm2convergence} shows that a Lyapunov function exists for the differential inclusion \eqref{Alg2:PiInclusion}; this with Corollary \ref{twotimescaleconvergence} proves the claim.
\end{proof}


\section{Summary}

We have combined the work of Bena{\"{\i}}m, Hofbauer and Sorin \cite{BenaimHofbauerSorin} on differential inclusions with the work of Borkar and Konda \cite{KondaBorkar} and Borkar \cite{BorkarAsynchronous} on asynchronous stochastic approximation in order to provide a framework for asynchronous stochastic approximation with a set-valued mean field. This enables us to modify the previous work on asynchronous stochastic approximation to use a set of assumptions which are straightforward to verify \emph{a priori}. 

Furthermore we extended the work of Konda and Borkar on asynchronous two-timescale stochastic approximation using this new framework. By allowing the mean fields to be updated using set-valued functions we provide a new result in two-timescale stochastic approximations which clearly applies to asynchronous and synchronous stochastic approximations.

This approach provides a clear framework for single or multiple timescale asynchronous stochastic approximations with clear assumptions which differ little from the synchronous case. Where previously the additional and difficult to verify assumptions required for asynchronous stochastic approximations could be perceived as a reason to avoid their use, this framework removes many of these issues.

In Section \ref{sec:MDPApplication} we provided an example of a coupled learning algorithm for a discounted reward Markov decision process. We analysed the limiting behaviour using the results of Section \ref{sec:TwoAsynchSA}. The algorithm uses a set-valued mean field based upon a best response style of actor-critic learning. We used the results of Section \ref{sec:TwoAsynchSA} to show convergence to an optimal strategy under a straightforward set of assumptions. This algorithm demonstrates the main value of the approach to asynchronous stochastic approximation in this paper.


\appendix
\section{Omitted Proofs}


\subsection{Minimum Update Proportion}
\label{MarkovUpdates}

For the asynchronous stochastic approximations in \eqref{asynchronousSAmatrix} we are interested in understanding how different components of $x_n$ get selected to be updated. The previous work on this makes the direct assumption that in the limit all the elements of $\bar{I}$ are updated in an equally spaced manner in the limit and some minimum proportion of the iterations \cite{KondaBorkar}; this assumption is difficult to verify prior to running the process. In this work we use results on Markov chains via assumption (A4) as an alternative which can be checked \itshape a priori \upshape if we know the transition probabilities of the state process. 

\begin{minupdateproportion}
	\label{minupdateproportion}
	Under (A4), there exists $\eta > 0$ such that $\forall i \in I$, 
	\begin{equation*}
		\lim\inf_{n \rightarrow \infty} \frac{\nu_n(i)}{n} \geq \eta, \quad \mbox{a.s.}.
	\end{equation*}
\end{minupdateproportion}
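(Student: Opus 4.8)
The plan is to exploit assumption (A4), which states that the updated-component process $\{\bar{I}_n\}$ follows a controlled Markov chain over $\bar{I}$ that is aperiodic, irreducible and positive recurrent for every fixed $x \in C$. The quantity $\nu_n(i)/n$ is precisely the empirical frequency with which the component $i$ appears in the updated sets up to time $n$, so the goal is to bound this frequency below uniformly over the realised state trajectory. The central difficulty is that the transition probabilities $P_{(\mathcal{I}_n,\mathcal{I}_{n+1})}(x)$ depend on the slowly-varying iterate $x_n$, so the underlying chain is not homogeneous; we cannot simply invoke the ergodic theorem for a single fixed transition kernel.

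First I would use compactness of $C$ together with the Lipschitz continuity of $x \mapsto P_{(\mathcal{I}_n,\mathcal{I}_{n+1})}(x)$ from (A4)(c) to control this inhomogeneity. For each fixed $x \in C$ the chain has a unique stationary distribution $\pi^x$ on $\bar{I}$, and by (A4)(b) every component $i \in I$ lies in some $\mathcal{I} \in \bar{I}$ with $\pi^x(\mathcal{I}) > 0$; hence the stationary update-frequency of $i$, namely $\sum_{\mathcal{I} \ni i}\pi^x(\mathcal{I})$, is strictly positive. By continuity of the stationary distribution in the transition kernel and compactness of $C$, these frequencies are bounded below by a single constant $2\eta > 0$, say, uniformly in $x \in C$. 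This uniform lower bound is the key structural fact that the argument rests on.

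The next step is to transfer this stationary lower bound to the realised empirical frequency of the genuinely time-inhomogeneous chain. Here I would appeal to an ergodic or law-of-large-numbers result for Markov chains with slowly-varying kernels — the natural reference being the work of Ma, Makowski and Shwartz \cite{MaMakowskiShwartz} already cited under (A4)(c), whose applicability is the very reason that Lipschitz assumption was imposed. The mechanism is that, because the step sizes vanish, $x_n$ moves arbitrarily slowly relative to the mixing time of the chain, so over any sufficiently long window the chain equilibrates to a distribution close to $\pi^{x_n}$; summing the per-window contributions and using the uniform bound $2\eta$ yields $\liminf_{n\to\infty}\nu_n(i)/n \geq \eta$ almost surely, with the factor of two absorbing the approximation error.

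The main obstacle I anticipate is rigorously justifying this averaging over a process whose driving kernel changes at every step: one must show that the accumulated drift of the stationary frequencies over a mixing window is negligible (which is where Lipschitz continuity plus $\bar\alpha_n \to 0$ enters) and that the stochastic fluctuations vanish almost surely rather than merely in expectation. A clean way to package this is to write the centred sum $\sum_{k=1}^{n}\big(\mathbb{I}_{\{i \in \bar I_k\}} - g_i(x_{k-1})\big)$, where $g_i(x) := \sum_{\mathcal{I} \ni i}\pi^x(\mathcal{I})$, as a martingale plus a small Markovian-noise remainder, control the martingale by a strong law, and invoke the uniform bound $g_i(x) \geq 2\eta$ on the remaining averaged term; this is exactly the structure handled by the cited convergence theorem for stochastic approximation with Markovian noise.
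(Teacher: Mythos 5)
Your proposal is correct, and its first half coincides exactly with the paper's proof: both arguments use (A4)(b)--(c) together with compactness of $C$ to obtain a single constant bounding the stationary probabilities $\pi_x(\mathcal{I})$ away from zero uniformly in $x \in C$ (you are in fact slightly more explicit than the paper, which sets $\eta = \min_{x \in C}\delta_x$ and appeals to compactness without spelling out that continuity of $x \mapsto \pi_x$, inherited from the Lipschitz kernel in (A4)(c), is what makes this minimum exist and stay positive). Where you diverge is the averaging step. The paper never forms your centred sum; instead it writes the empirical frequency $\bar{w}_n(\mathcal{I}) = w_n(\mathcal{I})/n$ of each update \emph{set} $\mathcal{I} \in \bar{I}$ as a stochastic approximation with step size $1/n$ driven by controlled Markov noise, invokes Borkar's asymptotic-pseudo-trajectory result for such iterations (\cite[Corollary 3.1]{BorkarMarkovNoise}) to conclude that the interpolated frequencies track $\dot{w} = \pi_{x(t)} - w$, observes that $\pi_{x(t)}(\mathcal{I}) \geq \eta$ forces every limit point of $w(t)$ into $\Omega^{\eta}_{|\bar{I}|}$, and only at the very end passes to components via $\nu_n(i) \geq w_n(\mathcal{I})$ for some $\mathcal{I}$ containing $i$. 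Your route --- a martingale strong law plus a Poisson-equation remainder controlled by Lipschitz continuity and the vanishing increments of $x_n$ --- is essentially the machinery sitting \emph{inside} such citations, so it is more self-contained but obliges you to do work the paper outsources: your centred sum $\sum_k \big(\mathbb{I}_{\{i \in \bar{I}_k\}} - g_i(x_{k-1})\big)$ is not itself a martingale (its compensator involves the current chain state $\bar{I}_{k-1}$, not only $x_{k-1}$), so the Poisson-equation correction you mention is genuinely needed; and the slow-variation input $x_k - x_{k-1} \to 0$ must be justified from divergence of the counters $\nu_n(i)$, which follows from recurrence under (A4) rather than from the conclusion of the lemma, so there is no circularity. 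In short, the paper's formulation buys brevity and consistency with its global toolkit (everything is phrased as pseudo-trajectories of ODEs or inclusions), while yours buys independence from the controlled-Markov-noise black box at the price of these additional verifications; both are sound.
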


\begin{proof}
	The values of $\bar{I}_{n} \in \bar{I}$ form a controlled Markov chain where $\bar{I}_{n+1}$ depends on the current updated component, $\bar{I}_n$, and the value of the iterative process, $x_n$. For $x \in C$ let $\pi_x(\cdot)$ be a stationary distribution for this Markov chain given by the transition probabilities $P_x(\cdot, \cdot)$; standard theory gives that, under (A4), $\pi_x(\cdot)$ exists, is unique, and for some $\delta_x >0$, $\pi_x(\mathcal{I}) > \delta_x$ for all $\mathcal{I} \in \bar{I}$. Let $\eta = \min_{x \in C} \delta_x$, which exists and is positive since $C$ is compact. Then for all $\mathcal{I} \in \bar{I}$, $x \in C$ we have that 

	\begin{equation}
		\pi_x(\mathcal{I}) > \delta_x \geq \eta. \label{stationaryprob}
	\end{equation}
	For $\mathcal{I} \in \bar{I}$ define
	
	\begin{equation*}
		w_n(\mathcal{I}) = \sum_{k=1}^n \mathbb{I}_{\{ \bar{I}_k = \mathcal{I} \}},
	\end{equation*}
	And let $\bar{w}(\mathcal{I}) = w_n(\mathcal{I})/n$.
	
	\begin{align}
		\bar{w}_n(\mathcal{I}) & = \frac{w_{n-1}(\mathcal{I})}{n} + \frac{1}{n}\mathbb{I}_{\{\bar{I}_n = \mathcal{I}\}} \notag \\
		& = \bar{w}_{n-1}(\mathcal{I}) + \frac{1}{n} \Big(\mathbb{I}_{\{\bar{I}_n = \mathcal{I}\}} - \bar{w}_{n-1}(\mathcal{I}) \Big) \label{MarkovSA}
	\end{align}
	This is in the form of a stochastic approximation with controlled Markovian noise as in \cite{BorkarMarkovNoise}.
	
	Let $\bar{w}(t)$ be the linear interpolation of the $\{\bar{w}_n\}_{n \in \mathbb{N}}$ process. Using \cite[Corollary 3.1]{BorkarMarkovNoise} the limiting behaviour of the interpolated process, $\bar{w}(t)$, will be an asymptotic pseudo-trajectory to a differential equation,
	
	\begin{equation}
		\frac{d w(t)}{dt} = \pi_{x(t)} - w(t). \label{markovnoiseDE}
	\end{equation}	
	For a suitable process $x(t) \in C$ based upon $\{x_n\}_{n \in \mathbb{N}}$.	We know that $\pi_{x}(\mathcal{I}) \geq \eta > 0$	for all $n$ and hence the dynamics of $w(t)$ can be expressed as,
	
	\begin{equation*}
		\frac{d w(t)}{dt} \in \Omega^{\eta}_{|\bar{I}|} - w(t),
	\end{equation*}	
	where $\Omega^{\eta}_{|\bar{I}|}$ is as defined in \eqref{OmegaSetdfn}. This implies that any limit point of $w(t)$ will be in $\Omega^{\eta}_{|\bar{I}|}$ and hence $\lim\inf_{n \rightarrow \infty} \frac{w_n(\mathcal{I})}{n} \geq \eta$ a.s. $\forall \mathcal{I} \in \bar{I}$. 
	
	For $i \in I$, define $\mathcal{I}(i) := \{\mathcal{I} \in \bar{I}; i \in \mathcal{I}\}$ then $\nu_n(i) = \sum_{\mathcal{I} \in \mathcal{I}(i)} w_n(\mathcal{I})$, and so for some $\mathcal{I} \in \bar{I}$,
	
	\begin{equation*}
		\lim\inf_{n \rightarrow \infty} \frac{\nu_n(i)}{n} \geq \lim\inf_{n \rightarrow \infty} \frac{w_n(\mathcal{I})}{n} \geq \eta, \quad \mbox{a.s.}.  \qedhere
	\end{equation*}
\end{proof}


\subsection{Noise Conditions for an Asynchronous Stochastic Approximation}
\label{Appendix:Noise}

Let $\tau_0 = 0$, $\tau_n = \sum_{k=1}^{n} \alpha(k)$ and recall $\bar{\tau}_0 = 0$, $\bar{\tau}_n = \sum_{k=1}^{n} \bar{\alpha}_k$. Denote the \itshape asynchronous noise \upshape term $\hat{V}_n(i) = \frac{\alpha(\nu_n(i))}{\alpha(n)} \mathbb{I}_{\{i \in \bar{I}_n\} }V_n(i)$, and let $\hat{V}_n$ be the $K$-vector of these terms.

\begin{Benaimconditions}
	\label{Benaimconditions}
	If $\{V_n\}_{n \in \mathbb{N}}$ is a martingale difference noise process and assuming (A2)(b) and (A4) hold;
	\leavevmode
	\begin{itemize}
		\item[(i)] If $\sup_n \mathbb{E} \Big[\|V_n\|^q\Big] < \infty$ for some $q >0$ then almost surely,
	\begin{equation*}
		\sup_n \mathbb{E} \Big[\|\hat{V}_n\|^q\Big] < \infty.
	\end{equation*}
	\end{itemize}

	\begin{itemize}
		\item[(ii)] If $V_n(i)$ independent of $\bar{I}_n$ given $\mathcal{F}_{n-1}$ and $V_n(i)$ independent of $V_n(j)$ for $i \neq j$. Let $\langle a,b \rangle = \sum_k a_k b_k$. If there exists a $\Gamma > 0$ such that for all $\theta \in \mathbb{R}^K$,
		\begin{equation*}
			\mathbb{E}\Big[\exp\Big\{\langle \theta,V_{n+1} \rangle \Big\} | \mathcal{F}_n\Big] \leq \exp\Big\{\frac{\Gamma}{2} \|\theta\|^2 \Big\}
		\end{equation*}
		then almost surely there exists a $\bar{\Gamma} > 0$ such that for all $\theta \in \mathbb{R}^K$,
	\begin{equation*}
		\mathbb{E}\Big[\exp\Big\{\langle \theta,\hat{V}_{n+1} \rangle \Big\} | \mathcal{F}_n\Big] \leq \exp\Big\{\frac{\bar{\Gamma}}{2} \|\theta\|^2 \Big\}.
	\end{equation*}
	\end{itemize}
\end{Benaimconditions}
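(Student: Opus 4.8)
The common engine for both parts is a uniform (eventual) bound on the random multiplier $c_n(i) := \frac{\alpha(\nu_n(i))}{\alpha(n)}\mathbb{I}_{\{i\in\bar{I}_n\}}$. The plan is to first establish that, almost surely, $c_n(i)\leq A_x$ for every $i\in I$ and all sufficiently large $n$. This follows by combining Lemma \ref{minupdateproportion} with (A2)(b): fix any $x\in(0,\eta)$, where $\eta$ is the minimum update proportion from Lemma \ref{minupdateproportion}; then almost surely there is a random $N$ such that $\nu_n(i)\geq[xn]$ for every $i\in I$ and every $n\geq N$. Since $\{\alpha(n)\}$ is non-increasing by (A2)(b), $\nu_n(i)\geq[xn]$ gives $\alpha(\nu_n(i))\leq\alpha([xn])$, and hence $c_n(i)\leq\alpha([xn])/\alpha(n)\leq A_x$ for all $n\geq N$. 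The indicator only ever makes $c_n(i)$ smaller, so it requires no extra care.

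Given this bound, part (i) is a pointwise domination argument. On the event above, for $n\geq N$ we have $\|\hat{V}_n\|^2=\sum_i c_n(i)^2 V_n(i)^2\leq A_x^2\|V_n\|^2$, so $\|\hat{V}_n\|^q\leq A_x^q\|V_n\|^q$ and therefore $\mathbb{E}[\|\hat{V}_n\|^q]\leq A_x^q\sup_m\mathbb{E}[\|V_m\|^q]<\infty$ uniformly over $n\geq N$. For the finitely many indices $n<N$, the indicator forces $\nu_n(i)\geq 1$ whenever it is nonzero, so $c_n(i)\leq\alpha(1)/\alpha(n)$, a finite constant; each such term has finite $q$-th moment by hypothesis, and there are only finitely many of them. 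Taking the supremum over all $n$ then yields $\sup_n\mathbb{E}[\|\hat{V}_n\|^q]<\infty$ almost surely.

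For part (ii) I would condition on $\mathcal{F}_n$ and exploit the two independence hypotheses. Writing $\langle\theta,\hat{V}_{n+1}\rangle=\sum_i(\theta_i c_{n+1}(i))V_{n+1}(i)$, I would condition additionally on $\bar{I}_{n+1}$, which fixes the multipliers $c_{n+1}(i)$; since $V_{n+1}(i)$ is independent of $\bar{I}_{n+1}$ given $\mathcal{F}_n$, the conditional law of $V_{n+1}$ is unchanged, and applying the assumed sub-Gaussian bound with the vector $(\theta_i c_{n+1}(i))_i$ in place of $\theta$ gives $\mathbb{E}[\exp\{\langle\theta,\hat{V}_{n+1}\rangle\}\mid\mathcal{F}_n,\bar{I}_{n+1}]\leq\exp\{\tfrac{\Gamma}{2}\sum_i\theta_i^2 c_{n+1}(i)^2\}$, where the independence of the $V_{n+1}(i)$ across $i$ is what lets the componentwise multipliers be absorbed into a single quadratic form. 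Using $c_{n+1}(i)\leq A_x$ from the engine bound, the exponent is at most $\tfrac{\Gamma A_x^2}{2}\|\theta\|^2$, a deterministic quantity independent of $\bar{I}_{n+1}$; taking the outer conditional expectation over $\bar{I}_{n+1}$ preserves it, giving the claim with $\bar{\Gamma}=\Gamma A_x^2$.

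The main obstacle is the bookkeeping around the fact that the multiplier bound from Lemma \ref{minupdateproportion} holds only eventually, with a random threshold $N$: it is an asymptotic, sample-path statement rather than a uniform-in-$n$ deterministic one, which is precisely why the conclusions are stated almost surely and why the early indices $n<N$ must be disposed of separately (harmlessly, since they are finite in number). The other delicate point, in part (ii), is the order of conditioning — isolating $\bar{I}_{n+1}$ so that the multipliers become constants while the conditional noise distribution is untouched — which is where both the independence of $V_{n+1}$ from $\bar{I}_{n+1}$ given $\mathcal{F}_n$ and the across-component independence are essential.
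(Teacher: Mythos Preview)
Your proposal is correct and follows the same overall strategy as the paper: both derive a uniform bound on the multiplier $\alpha(\nu_n(i))/\alpha(n)$ via Lemma~\ref{minupdateproportion} combined with (A2)(b), then transfer the moment and sub-Gaussian bounds from $V_n$ to $\hat V_n$. For part~(i) the arguments are essentially identical, though you are more explicit than the paper about the random threshold $N$ and the disposal of the finitely many early indices.

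For part~(ii) there is a minor but noteworthy technical difference. You condition additionally on $\bar I_{n+1}$ and invoke the conditional independence of $V_{n+1}$ from $\bar I_{n+1}$ to freeze the multipliers $c_{n+1}(i)$ as constants, then apply the sub-Gaussian hypothesis with the rescaled vector $(\theta_i c_{n+1}(i))_i$. The paper instead observes that on $\{i\in\bar I_{n+1}\}$ one has $\nu_{n+1}(i)=\nu_n(i)+1$, so that
\[
\frac{\alpha(\nu_{n+1}(i))}{\alpha(n+1)}\,\mathbb{I}_{\{i\in\bar I_{n+1}\}}
=\frac{\alpha(\nu_n(i)+1)}{\alpha(n+1)}\,\mathbb{I}_{\{i\in\bar I_{n+1}\}},
\]
with the scalar factor already $\mathcal{F}_n$-measurable; it then handles the indicator separately via an intermediate process $\check V_n(i)=\mathbb{I}_{\{i\in\bar I_n\}}V_n(i)$, factoring across coordinates. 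Your route is somewhat more direct and yields the slightly sharper constant $\bar\Gamma=\Gamma A_x^2$, whereas the paper's decomposition picks up an extra factor of $K$ to give $\bar\Gamma=K\Gamma A_\eta^2$. The paper's route, on the other hand, avoids enlarging the conditioning $\sigma$-algebra. Both are valid.
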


\begin{proof}
	\begin{itemize}
		\item[(i)] Combine $\eta$ from Lemma \ref{minupdateproportion} with (A2)(b) to give $A_{\eta} \geq 1$. Then $\|\hat{V}_n(i)\| \leq \|A_{\eta} V_n(i)\|$ using (A2)(b). From this it is immediate that
		
		\begin{equation*}
			\mathbb{E} \Big[\|\hat{V}_n(j)\|^q\Big] \leq A_{\eta}^q \mathbb{E} \Big[\|V_n\|^q\Big] < \infty.
		\end{equation*}
		\item[(ii)] Let $\check{V}_n(i) := \mathbb{I}_{\{i \in \bar{I}_n\} }V_n(i)$. Clearly $\|\check{V}_n\| \leq \|V_n\|$. In addition let $e_i$ be a $K$ dimensional basis vector with a 1 in the $i^{\mathrm{th}}$ term and $0$ everywhere else. From the assumption in lemma \ref{Benaimconditions},
		
		\begin{equation*}
		\mathbb{E} \bigg[ \exp \Big\{ \big< \theta_i e_i, \check{V}_{n+1} \big> \Big\} | \mathcal{F}_n \bigg] \leq \exp\Big\{\frac{\Gamma}{2} \theta_i^2 \Big\}.
		\end{equation*}
		Using this gives the following,
		
		\begin{align*}
			\mathbb{E} \bigg[ \exp \Big\{ \big< \theta, \check{V}_{n+1} \big> \Big\} | \mathcal{F}_n  \bigg] & = \mathbb{E} \bigg[ \exp \Big\{ \big< \sum_{i=1}^K \theta_i e_i, \check{V}_{n+1} \big> \Big\} | \mathcal{F}_n  \bigg], \\
			& = \prod_{i=1}^K \mathbb{E} \bigg[ \exp \Big\{ \big<\theta_i e_i, \check{V}_{n+1} \big> \Big\} | \mathcal{F}_n  \bigg], \\
			& \leq \prod_{i=1}^K \exp\Big\{\frac{\Gamma}{2} \theta_i^2 \Big\}, \\
			& = \exp\Big\{\frac{\Gamma}{2} \| \theta \|^2 \Big\}.
		\end{align*}
		 Notice that $\frac{\alpha(\nu_{n+1}(i))}{\alpha(n+1)} \mathbb{I}_{\{i \in \bar{I}_{n+1} \}} = \frac{\alpha(\nu_{n}(i) + 1)}{\alpha(n+1)} \mathbb{I}_{\{i \in \bar{I}_{n+1} \}}$ and $\frac{\alpha(\nu_{n}(i) + 1)}{\alpha(n+1)}$ is deterministic given $\mathcal{F}_n$.
		 
		\begin{align*}
			\mathbb{E} \bigg[ \exp \Big\{ \big< \theta, \hat{V}_{n+1} \big> \Big\} | \mathcal{F}_n  \bigg] & = \mathbb{E} \bigg[ \exp \Big\{ \big< \theta,  \sum_{i=1}^K \frac{\alpha(\nu_{n+1}(i))}{\alpha(n+1)} \check{V}_{n+1}(i) e_i \big> \Big\} | \mathcal{F}_n  \bigg], \\
			& = \mathbb{E} \bigg[  \prod_{i=1}^K  \exp \Big\{ \frac{\alpha(\nu_{n}(i)+1)}{\alpha(n+1)} \big< \theta, \check{V}_{n+1}(i) e_i \big> \Big\}| \mathcal{F}_n \bigg], \\
		\end{align*}
		Now using the independence of the $V_i$ terms and letting $\bar{\theta}_n(i) := \frac{\alpha(\nu_{n}(i)+1)}{\alpha(n+1)} \theta$.
			
		\begin{align*}
			\mathbb{E} \bigg[ \exp \Big\{ \big< \theta, \hat{V}_{n+1} \big> \Big\} | \mathcal{F}_n  \bigg] & = \prod_{i=1}^K \mathbb{E} \bigg[ \exp \Big\{  \big< \bar{\theta}_n(i), \check{V}_{n+1}(i) e_i \big> \Big\}| \mathcal{F}_n \bigg], \\
			& = \prod_{i=1}^K \mathbb{E} \bigg[ \exp \Big\{  \big< e_i \bar{\theta}_n(i), \check{V}_{n+1} \big> \Big\}| \mathcal{F}_n \bigg].
		\end{align*}
		Note that $\| \bar{\theta}_n(i) \|^2 \leq \| A_{\eta} \theta \|^2$, with $A_{\eta}$ taken from (A2)(b). Finally, this gives,
	
		\begin{align*}
			\mathbb{E} \bigg[ \exp \Big\{ \big< \theta, \hat{V}_{n+1} \big> \Big\} | \mathcal{F}_n  \bigg] & \leq \prod_{i=1}^K \bigg( \exp\Big\{\frac{\Gamma}{2} \| \bar{\theta}_n(i) \|^2 \Big\} \bigg), \\
			& \leq \prod_{i=1}^K \bigg( \exp\Big\{\frac{\Gamma A_{\eta}^2}{2} \| \theta \|^2 \Big\} \bigg). 
		\end{align*}
		Letting $\bar{\Gamma} := K \Gamma A_{\eta}^2$, which is constant, completes the proof of $(ii)$.	
	\end{itemize}
\end{proof}

\begin{proof}
	(of Lemma \ref{lemma:Noise})

	Firstly, define $\xi_n(t) = \sup \{k\geq0 ; t \geq \tau_{n+k}\}$ and let $\bar{\xi}_n(t) = \sup \{k\geq0 ; t \geq \bar{\tau}_{n+k}\}$, and notice $\xi_n(t) \geq \bar{\xi}_n(t)$ since $\bar{\alpha}_n \geq \alpha(n)$ from (A2)(b). Therefore,
	
	\begin{equation*}
		m(\tau_n + T) = n + \xi_n(T) \geq n + \bar{\xi}_n(T) = \bar{m}(\bar{\tau}_n + T).
	\end{equation*}
	Using this in the following 
	
	\begin{align*}
	 	& \sup_k \bigg\{ \Big\| \sum_{j=n}^{k-1} \bar{\alpha}_{j+1} M_{j+1} V_{j+1} \Big\| ; k = n+1, \dots,\bar{m}(\bar{\tau}_n + T) \bigg\}, \\
		& \quad = \sup_k \bigg\{ \Big\| \sum_{j=n}^{k-1} \alpha(j+1) \hat{V}_{j+1} \Big\| ; k = n+1, \dots,\bar{m}(\bar{\tau}_n + T) \bigg\}, \\
		& \quad \leq \sup_k \bigg\{ \Big\| \sum_{j=n}^{k-1} \alpha(j+1) \hat{V}_{j+1} \Big\| ; k = n+1, \dots,m(\tau_n + T) \bigg\}.
	\end{align*} 

	Combining Lemma \ref{Benaimconditions} with \cite[Proposition 1.4]{BenaimHofbauerSorin} with the martingale noise sequence $\{\hat{V}_n\}_{n \in \mathbb{N}}$ and step sizes $\{\alpha(n)\}_{n \in \mathbb{N}}$ gives that this latter term converges to zero as $t \rightarrow \infty$.
\end{proof}


\subsection{Weak Convergence of the Relative Step Sizes}
\label{appendix:uvconvergence}

Firstly, we will need a result which follows from (A2)(b) and Lemma \ref{minupdateproportion},

\begin{alpharatio}
	\label{alpharatio}
	Under assumptions (A2)(b) and (A4) 
	
	\begin{equation}
		\lim_{n \rightarrow \infty} \inf \left( \frac{\alpha(n)}{\bar{\alpha}_n}\right) \geq A^{-1}_{\eta}, \quad \mbox{a.s.}. \label{alphamin}
	\end{equation}
\end{alpharatio}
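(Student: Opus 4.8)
The plan is to exploit the monotonicity of $\alpha(\cdot)$ to collapse the maximum defining $\bar{\alpha}_n$ into a single evaluation of $\alpha$, and then to control its argument using the minimum update proportion supplied by Lemma \ref{minupdateproportion}. First I would observe that, since $\alpha(\cdot)$ is non-increasing by (A2)(b), the maximum in $\bar{\alpha}_n = \max_{i \in \bar{I}_n} \alpha\big(\nu_n(i)\big)$ is attained at the component that has been updated \emph{least} often; that is, $\bar{\alpha}_n = \alpha\big(\min_{i \in \bar{I}_n} \nu_n(i)\big)$. Consequently the ratio $\alpha(n)/\bar{\alpha}_n$ is governed entirely by how small $\min_{i \in \bar{I}_n} \nu_n(i)$ can be relative to $n$, and the problem reduces to a lower bound on this minimal counter.

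Next I would invoke Lemma \ref{minupdateproportion}, which furnishes an $\eta > 0$ with $\liminf_{n} \nu_n(i)/n \geq \eta$ almost surely for every $i \in I$. Because $I$ is finite this holds uniformly across components, so almost surely $\min_{i \in \bar{I}_n} \nu_n(i) \geq \eta n$ for all sufficiently large $n$. Since $\alpha(\cdot)$ is non-increasing, a larger argument yields a smaller value, whence $\bar{\alpha}_n = \alpha\big(\min_{i \in \bar{I}_n}\nu_n(i)\big) \leq \alpha\big([\eta n]\big)$ for large $n$, and therefore $\alpha(n)/\bar{\alpha}_n \geq \alpha(n)/\alpha\big([\eta n]\big)$. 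At this point I would apply (A2)(b) directly with $x = \eta \in (0,1)$: it gives $\alpha([\eta n])/\alpha(n) < A_\eta$ uniformly in $n$, and inverting this bound yields $\alpha(n)/\alpha([\eta n]) > A_\eta^{-1}$. Combining the two inequalities produces $\liminf_n \alpha(n)/\bar{\alpha}_n \geq A_\eta^{-1}$, as claimed.

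The main obstacle is the gap between the $\liminf$ handed down by Lemma \ref{minupdateproportion} and the clean pointwise inequality $\min_{i \in \bar{I}_n} \nu_n(i) \geq [\eta n]$ used in the second step. Strictly speaking the $\liminf$ only guarantees $\nu_n(i) \geq (\eta - \epsilon) n$ eventually, for each fixed $\epsilon > 0$; since $\nu_n(i)$ is integer-valued, $\nu_n(i) \geq [\eta n]$ may genuinely fail along the sequence. The careful version therefore runs the argument above with $\eta$ replaced by $\eta - \epsilon$, obtaining $\liminf_n \alpha(n)/\bar{\alpha}_n \geq A_{\eta-\epsilon}^{-1}$ for every $\epsilon > 0$, and then lets $\epsilon \downarrow 0$. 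Managing this $\epsilon$-slack against the fixed target constant $A_\eta^{-1}$ (using that $x \mapsto A_x$ is non-increasing, so $A_{\eta-\epsilon}$ decreases to its value at $\eta$) is the only delicate point. Moreover it is immaterial downstream: the constant enters the subsequent analysis only through the fact that it is \emph{strictly positive}, which is what ultimately produces the $\varepsilon > 0$ bounding the relative step sizes away from zero in Lemma \ref{weakconvergence} and hence defining the matrices $\tilde{M}_n$.
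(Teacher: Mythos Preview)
Your proposal is correct and follows essentially the same route as the paper: identify $\bar{\alpha}_n=\alpha(\nu_n(i))$ for some $i$ via monotonicity, bound $\nu_n(i)\geq \eta n$ using Lemma~\ref{minupdateproportion}, and finish with (A2)(b). Your discussion of the $\epsilon$-slack between the $\liminf$ in Lemma~\ref{minupdateproportion} and the pointwise inequality is in fact more careful than the paper, which simply writes $\alpha(n\eta)$ without comment; as you note, this is harmless because only strict positivity of the limiting constant is used downstream.
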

	
\begin{proof}
	\begin{align*}
		\lim_{n \rightarrow \infty} \inf \left( \frac{\alpha(n)}{\bar{\alpha}_n}\right) & = \lim_{n \rightarrow \infty} \inf \left( \frac{\alpha(n)}{\alpha(\nu_n(i))}\right), \\
		& \geq \lim_{n \rightarrow \infty} \inf \left( \frac{\alpha(n)}{\alpha(n \eta)}\right), \quad \mbox{a.s.},\\
		& \geq A^{-1}_{\eta}.
	\end{align*}
	Where the first step must hold for some  $i \in I$, the second step follows from Lemma \ref{minupdateproportion} and the last step is directly from (A2)(b).
\end{proof}

\begin{proof}
	(of Lemma \ref{weakconvergence})
	
	Since $L^2([0,T])$ is a Hilbert space it is relatively compact and relatively sequentially compact using the Banach-Alaoglu Theorem \cite[Appendix A]{BorkarBook}, which guarantees that the sequence $\{\bar{u}_i^n(\cdot)\}_{n \in \mathbb{N}}$ has a weakly convergent subsequence with a limit point in $L^2([0,T])$. Hence, there exists a limit point $\tilde{u}_i(\cdot)$ of \eqref{weakconvergencecriteria}. For a fixed $T > 0$, $\tilde{u}_i(\cdot)$ must satisfy \eqref{Dfn:weakconvergence} for all $h(\cdot) \in L^2([0,T])$. Hence by showing that for an arbitrary fixed $T$ and a single $h(\cdot)$ any limit point is bounded below this is enough to prove the claim. Fix $T > 0$, select $0 < v \leq T$, and take $h(t) = 1$ for all $t \in [0,T]$. Let $\{k(n)\}_{n \in \mathbb{N}}$ be a subsequence of the natural numbers under which $\{\bar{u}_i^{k(n)}(\cdot)\}_{n \in \mathbb{N}}$ converges.
	
	\begin{align*}
		\int_0^v \tilde{u}_i(s) \d s & = \lim_{n\rightarrow \infty} \int_0^v u_i(\bar{\tau}_{k(n)} + s) \d s \\
		& = \lim_{n\rightarrow \infty} \int_{\bar{\tau}_{k(n)}}^{\bar{\tau}_{k(n)} + v} u_i(s) \d s \\
		& \geq \lim_{n \rightarrow \infty} \sum_{j= k(n)}^{\bar{m}(\bar{\tau}_{k(n)} + v) - 1} \bar{\alpha}_{j+1} \mu_{j+1}(i), \\
		& = \lim_{n \rightarrow \infty} \sum_{j= k(n)}^{\bar{m}(\bar{\tau}_{k(n)} + v) - 1} \alpha(j+1) \mathbb{I}_{\{ i \in \bar{I}_{j+1}\} }.
	\end{align*}
	The following part of the proof uses a slight modification to the result by Ma et al. \cite[Theorem 2.2]{MaMakowskiShwartz} combined with the stochastic approximation form of the updates $w_n$ from Lemma \ref{minupdateproportion} given in \eqref{MarkovSA}. The modification comes because the transition probabilities for the Markov chain on $\bar{I}$ depends on $x_n$ instead of $w_n$. This requires only a straightforward modification to the proofs in Sections 4 and 5 of \cite{MaMakowskiShwartz}. Using this modification of \cite[Theorem 2.2]{MaMakowskiShwartz} gives,
	
	\begin{align*}
		\lim_{n \rightarrow \infty} \sum_{j= k(n)}^{\bar{m}(\bar{\tau}_{k(n)} + v) - 1} & \alpha(j+1) \mathbb{I}_{\{ i \in \bar{I}_{j+1}\}} \\
		& = \lim_{n \rightarrow \infty} \sum_{\mathcal{I} \in \mathcal{I}(i)} \sum_{j= k(n)}^{\bar{m}(\bar{\tau}_{k(n)} + v) - 1} \alpha(j+1) \mathbb{I}_{\{ \bar{I}_{j+1} = \mathcal{I}\}}, \\
		& = \lim_{n \rightarrow \infty} \sum_{\mathcal{I} \in \mathcal{I}(i)} \sum_{j= k(n)}^{\bar{m}(\bar{\tau}_{k(n)} + v) - 1} \alpha(j+1) \pi_{x_j}(\mathcal{I}).
	\end{align*}
	Now we combine the above with \eqref{stationaryprob} to give 
	
	\begin{align*}
		\lim_{n \rightarrow \infty} \sum_{\mathcal{I} \in \mathcal{I}(i)} \sum_{j= k(n)}^{\bar{m}(\bar{\tau}_{k(n)} + v) - 1} & \alpha(j+1) \pi_{x_j}(\mathcal{I}) \\
		& \geq \lim_{n \rightarrow \infty} \sum_{\mathcal{I} \in \mathcal{I}(i)} \sum_{j= k(n)}^{\bar{m}(\bar{\tau}_{k(n)} + v) - 1} \alpha(j+1) \eta, \\
		& \geq \lim_{n \rightarrow \infty} \sum_{j= k(n)}^{\bar{m}(\bar{\tau}_{k(n)} + v) - 1} \bar{\alpha}_{j+1} \frac{\alpha(j+1)}{\bar{\alpha}_{j+1}} \eta.
	\end{align*}
	Now using Lemma \ref{alpharatio},
	
	\begin{align*}
		\lim_{n \rightarrow \infty} \sum_{j= k(n)}^{\bar{m}(\bar{\tau}_{k(n)} + v) - 1} & \bar{\alpha}_{j+1} \frac{\alpha(j+1)}{\bar{\alpha}_{j+1}} \eta \\
		& \geq \lim_{n \rightarrow \infty} \sum_{j= k(n)}^{\bar{m}(\bar{\tau}_{k(n)} + v) - 1} \bar{\alpha}_{j+1} A^{-1}_{\eta} \eta, \quad \mbox{a.s.}, \\
	\end{align*}
	We convert the sum back to an integral to give,
		
	\begin{align*}
		\lim_{n \rightarrow \infty} \sum_{j= k(n)}^{\bar{m}(\bar{\tau}_{k(n)} + v) - 1} & \bar{\alpha}_{j+1} A^{-1}_{\eta} \eta \\
		& \geq \lim_{n \rightarrow \infty} \int_0^v A^{-1}_{\eta} \eta \d s - \lim_{n \rightarrow \infty} \bar{\alpha}_{\bar{m}(\bar{\tau}_{k(n)} + v) + 1} A^{-1}_{\eta} \eta, \\
		& = \int_0^v A^{-1}_{\eta} \eta \d s. 
	\end{align*}
	Taking $\varepsilon = A^{-1}_{\eta} \eta$ completes the proof.
\end{proof}

We shall now prove the important corollary to Lemma \ref{weakconvergence}.

\begin{proof}
	(Of Corollary \ref{weakconvergencecorol1})

	Begin by noting that 
	
	\begin{align*}
		\int_t^{t + v} \tilde{u}_i(s) \d s & = \lim_{n \rightarrow \infty} \int_t^{t + v} u_i^{k(n)}(s) \d s, \\
		& = \lim_{n \rightarrow \infty} \int_{t + \bar{\tau}_{k(n)}}^{t + \bar{\tau}_{k(n)} + v} u_i(s) \d s.
	\end{align*}
	Now let $\bar{n} = \bar{m}(t + \bar{\tau}_{k(n)})$ and note that $u_i(t) \in [0,1]$.
	
	\begin{align*}
		\lim_{n \rightarrow \infty} \int_{t + \bar{\tau}_{k(n)}}^{t + \bar{\tau}_{k(n)} + v} u_i(s) \d s & \geq \lim_{\bar{n} \rightarrow \infty} \bigg[ \int_{\bar{\tau}_{\bar{n}}}^{\bar{\tau}_{\bar{n}} + v} u_i(s) \d s - \bar{\alpha}_{\bar{n} + 1} \bigg], \\
		& = \lim_{\bar{n} \rightarrow \infty} \int_0^v u_i^{\bar{n}}(s) \d s.
	\end{align*}
	Now, $\mathcal{U}$ is a compact metric space \cite[Appendix A]{BorkarBook} and hence is weakly countably (limit point) compact. This means the infinite subset $\{u_i^{\bar{n}}(\cdot)\}_{n \in \mathbb{N}}$ has a limit in $\mathcal{U}$, and this limit point must satisfy Lemma \ref{weakconvergence}. This gives,
	
	\begin{equation*}
		\int_{\bar{t}}^{\bar{t} + v} \tilde{u}_i(s) \d s \geq \lim_{\bar{n} \rightarrow \infty} \int_0^{v} u_i^{\bar{n}}(s) \d s \geq v \varepsilon, \quad \mbox{a.s.},
	\end{equation*}
	and since this statement is true for all $t,v>0$, the statement follows.
\end{proof}


\subsection{Proof of Lemma \ref{MainTheoremLemma}}
\label{K-CAsynchNoiseLemmaProof}

Extend our notation to continuous time by defining $M(t)$ as the $K \times K$ diagonal matrix of the $(u_1(t),\dots,u_K(t))$ terms and recall that $\tilde{M}(t)$ is the $K \times K$ diagonal matrix of the $(v_1(t),\dots,v_K(t))$ terms.

	Let $h(\cdot)$ be a bounded continuous function on $[\bar{\tau}_n,\bar{\tau}_k]$ such that $h(\bar{\tau}_j) = f_j$ for all $j =n,\ldots,k$. This will mean that $h(\cdot) \in L^2([0,T])$. Throughout this paper we consider the continuous interval $[0,\infty)$ divided into segments of length $\bar{\alpha}_n$, and hence we can approximate the sum from Lemma \ref{MainTheoremLemma} as an integral,
	
	\begin{equation*}
		\bigg\| \sum_{i=n}^{k-1} \bar{\alpha}_{i+1} f_i \big( M_{i+1} - \tilde{M}_{i+1} \big) \bigg\| = \left\| \int_{\bar{\tau}_n}^{\bar{\tau}_{k-1}} h(\bar{\tau}_{\bar{m}(t)}) \big( M_{\bar{m}(t)+1} - \tilde{M}_{\bar{m}(t)+1} \big) \d t \right\|.
	\end{equation*}
	Now we note that $u_i(t)$ and $v_i(t)$ are extensions of $\{\mu_n(i)\}_{n \in \mathbb{N}}$ to continuous time which are constant on intervals $[\bar{\tau}_n,\bar{\tau}_{n+1})$, and that $M(t)$ and $\tilde{M}(t)$ are just matrices containing $u_i(t)$ and $v_i(t)$ respectively. This will mean that $M(t) = M_{\bar{m}(t) + 1}$ and $\tilde{M}(t) = \tilde{M}_{\bar{m}(t) + 1}$ and hence we have,

	\begin{align}
		& \left\| \int_{\bar{\tau}_n}^{\bar{\tau}_{k-1}} h(\bar{\tau}_{\bar{m}(t)}) \big( M_{\bar{m}(t)+1} - \tilde{M}_{\bar{m}(t)+1} \big) \d t \right\| \label{AysnchNoise} \\
		& \quad\quad\quad = \left\| \int_{\bar{\tau}_n}^{\bar{\tau}_{k-1}} h(\bar{\tau}_{\bar{m}(t)}) \big( M(t)- \tilde{M}(t) \big) \d t \right\| \notag
	\end{align}
	In order to prove the convergence to zero of this term we look to use \eqref{mtilderesult}, but a further expansion is required so that \eqref{AysnchNoise} is in the correct form.
	
	\begin{align}
		& \left\| \int_{\bar{\tau}_n}^{\bar{\tau}_{k-1}} h(\bar{\tau}_{\bar{m}(t)}) \big( M(t)- \tilde{M}(t) \big) \d t \right\| \notag \\
		& \quad\quad\quad \leq \left\| \int_{\bar{\tau}_n}^{\bar{\tau}_{k-1}} h(t) \big( M(t)- \tilde{M}(t) \big) \d t \right\| \label{Riemannconvergence1} \\
		& \quad\quad\quad\quad + \left\| \int_{\bar{\tau}_n}^{\bar{\tau}_{k-1}} M(t) \big[h(\bar{\tau}_{\bar{m}(t)}) - h(t)\big] \d t \right\| \label{Riemannconvergence2} \\
		& \quad\quad\quad\quad + \left\| \int_{\bar{\tau}_n}^{\bar{\tau}_{k-1}} \tilde{M}(t) \big[h(t) - h(\bar{\tau}_{\bar{m}(t)})\big] \d t \right\|. \label{Riemannconvergence3}
	\end{align}
	Now, to prove the claim of Lemma \ref{MainTheoremLemma} we will show that each of these terms will converge to zero. Firstly, \eqref{Riemannconvergence1} converges to zero almost surely by \eqref{mtilderesult}. 
	
	Both \eqref{Riemannconvergence2} and \eqref{Riemannconvergence3} are dealt with using the same technique, which we demonstrate for \eqref{Riemannconvergence3}.
			
	\begin{align*}
		& \left\| \int_{\bar{\tau}_n}^{\bar{\tau}_{k-1}} \tilde{M}(t) \big[h(t) - h(\bar{\tau}_{\bar{m}(t)})\big] \d t \right\| \\
		& \quad\quad\quad \leq \left\| \int_{\bar{\tau}_n}^{\bar{\tau}_{k-1}} \tilde{M}(t) h(t) \d t - \sum_{i=n}^{k-1} \bar{\alpha}_{i+1} h(\bar{\tau}_i)\tilde{M}(\bar{\tau}_i)\right\| \\
		& \quad\quad\quad\quad + \left\| \sum_{i=n}^{k-1} \bar{\alpha}_{i+1} h(\bar{\tau}_i)\tilde{M}(\bar{\tau}_i) - \int_{\bar{\tau}_n}^{\bar{\tau}_{k-1}} \tilde{M}(t) h(\bar{\tau}_{\bar{m}(t)}) \d t \right\|,
	\end{align*}
	in both of these terms the sum is a Riemann approximation to the integral using the left hand points of the partition $[\bar{\tau}_n,\bar{\tau}_{n+1}]$ for all $n$. As $n \rightarrow \infty$ the width of the partition tends to zero and hence the above terms tend to zero. This proves that \eqref{Riemannconvergence2} and \eqref{Riemannconvergence3} will converge to zero.\hfill $\square$


\bibliographystyle{plain}
\bibliography{References}

\end{document}